\documentclass[journal]{IEEEtran}
\usepackage[numbers,sort&compress]{natbib}
\usepackage{amsfonts}
\usepackage[cmex10]{amsmath}
\usepackage{array}

\usepackage{eqparbox}
\usepackage{stfloats}

\usepackage{amssymb}
\usepackage{float}
\usepackage{indentfirst}
\usepackage{dsfont,enumerate,enumitem,comment,url}
\usepackage{multirow,booktabs,bigstrut,longtable,array}
\usepackage[normalem]{ulem}
\usepackage{graphicx,color,rotating,soul}
\usepackage{makecell,xcolor,pbox,framed}
\usepackage{wrapfig,setspace,subfigure,float}
\usepackage{last page,acronym}
\usepackage[most]{tcolorbox}    
\tcbuselibrary{breakable} 
\usepackage{hyperref}
\usepackage{makeidx}
\usepackage{tabularx}
\usepackage{cases}
\usepackage{multirow}
\usepackage{booktabs}
\usepackage{diagbox}
\usepackage{mathtools}
\usepackage{color}
\usepackage{stfloats}
\usepackage{lipsum}
\usepackage{amsfonts}
\usepackage{bm}
\usepackage{dsfont}
\usepackage{lipsum}
\usepackage{amsthm}
\usepackage{graphicx, epsfig, subfigure}
\usepackage[font=small]{caption}
\allowdisplaybreaks[4]
\makeatletter

\newif\if@restonecol
\makeatother

\usepackage[linesnumbered,ruled,vlined]{algorithm2e}%[ruled,vlined]{
\usepackage{algpseudocode}
\usepackage{amsmath, bm}
\begin{document}
\title{Semantic-based Distributed Learning for Diverse and Discriminative Representations}
\author{Zhuojun~Tian, Chaouki~Ben~Issaid, and Mehdi~Bennis
\thanks{
Z.~Tian (zhuojun@kth.se) was with the Center for Wireless Communications, University of Oulu, Oulu, Finland and now is with the Division of Information Science and Engineering, KTH Royal of Technology, Stockholm, Sweden. 
C.~B.~Issaid and M.~Bennis are with the Center for Wireless Communications, University of Oulu, Oulu 90014, Finland. Email: \{chaouki.benissaid, mehdi.bennis\}@oulu.fi.
}
\thanks{The code for the paper will be available on \href{https://github.com/ZhuoJTian/Distributed-MCR2}{https://github.com/ZhuoJTian/Distributed-MCR2}.}
}

\maketitle
	
\begin{abstract}
In large-scale distributed scenarios, increasingly complex tasks demand more intelligent collaboration across networks, requiring the joint extraction of structural representations from data samples. However, conventional task-specific approaches often result in nonstructural embeddings, leading to collapsed variability among data samples within the same class, particularly in classification tasks. To address this issue and fully leverage the intrinsic structure of data for downstream applications, we propose a novel distributed learning framework that ensures both diverse and discriminative representations.
For {independent and identically distributed (i.i.d.)} data, we reformulate and decouple the global optimization function by introducing constraints on representation variance. The update rules are then derived and simplified using a primal-dual approach. For non-i.i.d. data distributions, we tackle the problem by clustering and virtually replicating nodes, allowing model updates within each cluster using block coordinate descent. In both cases, the resulting optimal solutions are theoretically proven to maintain discriminative and diverse properties, with a guaranteed convergence for i.i.d. conditions.
Additionally, semantic information from representations is shared among nodes, reducing the need for common neural network architectures. Finally, extensive simulations {on MNIST, CIFAR-10 and CIFAR-100} confirm the effectiveness of the proposed algorithms in capturing global structural representations.
\end{abstract}pdf

\begin{IEEEkeywords}
Distributed learning, representation learning, semantic communication, maximal coding rate reduction.
\end{IEEEkeywords}

\section{Introduction} \label{Sec1}
The rapid increase in devices with expanding storage and hardware capabilities has facilitated the efficient collection and processing of large-scale datasets. With the remarkable success of deep neural networks, distributed learning has gained significant attention for handling large-scale collaborative scenarios \cite{chen2021distributed}. 
As the collected data samples embed more complicated structures, the application scenarios and the user needs become various, there arise more challenging processing tasks. These intricate tasks require devices to collaboratively extract information from distinct datasets across multiple sources while preserving the intrinsic structures of the data to best
facilitate subsequent tasks \cite{ma2022principles}. 

In conventional training algorithms for over-parameterized deep networks, particularly in classification tasks, the learned intermediate features often exhibit the "neural collapse" phenomenon \cite{zhu2021geometric, yaras2022neural}, as illustrated in Fig. \ref{fig_sec1_1}: The within-class features converge to their respective class means, while the means of different classes form a tightly structured frame. Empirical studies have observed this effect across various loss functions, including the widely used cross-entropy (CE) and mean squared error (MSE). Such feature representations compromise the intrinsic structure of individual data samples, reducing interpretability and limiting their effectiveness for more complex downstream tasks. 
Furthermore, the compositional capability \cite{elmoznino2024complexity}, which is crucial for enabling connected intelligence in multi-agent systems, may be lost if the diversity in data structure is not maintained. To address this challenge, the original manifold of distributed datasets must be mapped onto global low-dimensional subspaces that maintain diversity within each class while ensuring clear discrimination between classes.
\begin{figure}[!htp]
    \centering
    \includegraphics[width=0.5\textwidth]{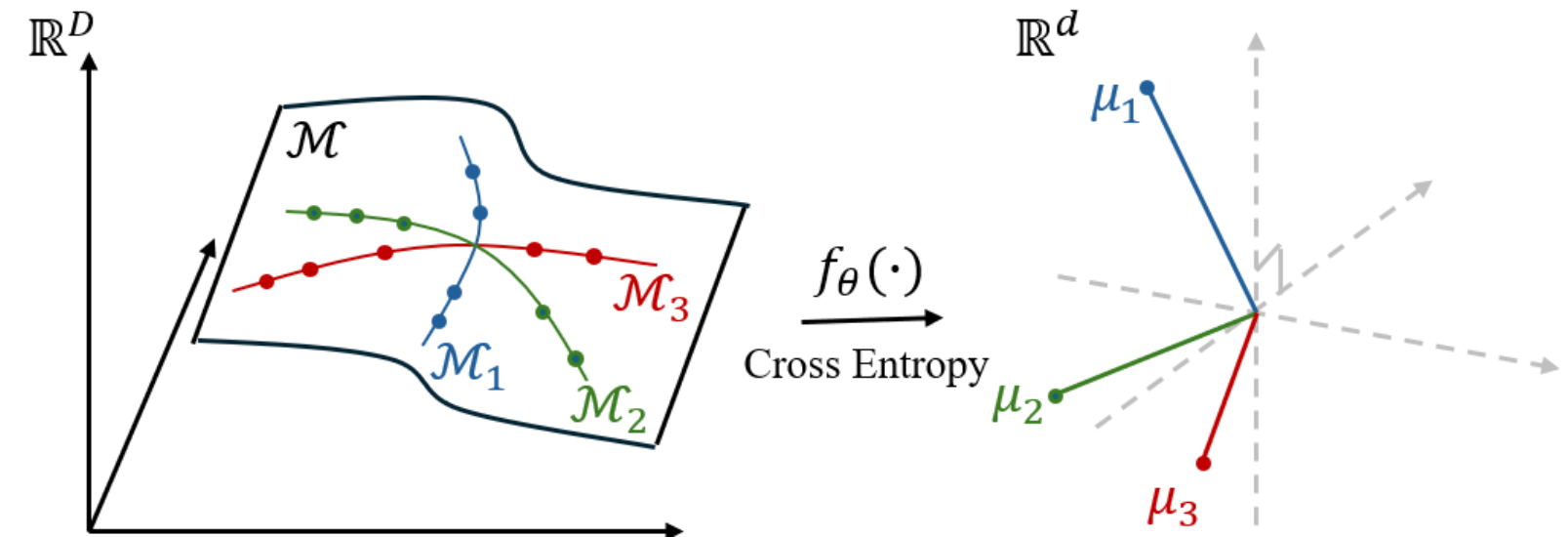}
    \caption{Illustration of the neural collapse phenomenon.}
    \label{fig_sec1_1}
    \vspace{-0.5cm}
\end{figure}

Meanwhile, traditional distributed learning algorithms, such as federated learning (FL) and decentralized stochastic gradient descent (SGD), rely on parameter transmission between nodes, imposing strict constraints on network architectures. However, in real-world scenarios, neural networks across different nodes may differ due to variations in computational power and storage capacity. To develop a more practical networked learning system, this work also aims to address system heterogeneity among nodes by enabling the transmission of semantic information instead of raw parameters. {Here the semantic information refers to the extracted representation information in the latent space.}

\subsection{Related Works}
\textbf{Distributed Learning. }Recently, distributed learning has made remarkable advances, including federated learning (FL) \cite{li2020federated}, which relies on a central processing node, and decentralized learning over networks \cite{lian2017can}. Numerous procedures and frameworks have been developed to address key challenges such as statistical heterogeneity \cite{smith2017federated, pu2021distributed, tian2023distributed, sattler2019robust, collins2021exploiting, tian2023distributedCS}, communication efficiency \cite{zhu2020one, mcmahan2017communication, li2021communication, li2021fedmask}, etc. However, most existing algorithms are task-specific, focusing on achieving consensus among model parameters, which often leads to the neural collapse phenomenon. As a result, clients struggle to extract expressive features necessary for more complex downstream processing, highlighting the need for distributed learning algorithms that generate meaningful representations.
Moreover, in all these methods, model parameters or gradients must be transmitted and processed, necessitating a shared neural network architecture among agents. Some recent approaches have explored heterogeneous system settings through split-learning \cite{he2020group, gupta2018distributed}, sub-model training \cite{diao2020heterofl, horvath2021fjord, deng2022tailorfl, jiang2022model, tian2025communication}, and factorization \cite{yao2021fedhm, mei2022resource}, where a central server has a large global model and conducts model partitioning, compression, or factorization. 
%In these cases, smaller models deployed on individual agents are derived from the global model. Additionally, the Dis-PFL protocol \cite{dai2022dispfl} introduced personalized sparse masks to activate model parameters within a common large neural network architecture.
However, all these approaches depend on aggregating model parameters and require a globally shared model structure to exploit correlations among varying small models.

\textbf{Distributed Representation Learning. }The effectiveness of deep learning heavily depends on the quality of data representations. As a result, the rapid advancement of deep learning has been closely linked to significant progress in representation learning \cite{payandeh2023deep, li2018survey, bengio2013representation}. This approach has been successfully applied across various domains, including image processing, natural language processing (NLP), speech recognition, and signal processing.
In multi-agent scenarios \cite{liu2020distributed}, additional challenges arise due to the distributed nature of datasets and the potential multi-view perspectives of the global data space. Specifically, representations learned by different agents may become misaligned due to randomness in the training process. Additionally, the limited data available to each agent can lead to variations in data distribution, causing inconsistencies in the representation spaces encoded by different local models. Considering these factors, distributed representation learning primarily aims to address alignment and composition challenges.
The authors in \cite{aguerri2019distributed} approach this problem from an information-theoretic perspective by extending Tishby’s centralized Information Bottleneck (IB) method to a distributed setting, focusing on extracting the most relevant information for a given task. 

{
More recently, some other works \cite{zhuang2021collaborative, lubana2022orchestra, liao2024rethinking, zhang2023federated} explore unsupervised or self-supervised federated representation learning using contrastive learning \cite{chen2020simclr, ghalkha2025sheafalign} or data augmentation techniques. 
Among these works, FedU proposed in \cite{zhuang2021collaborative} deals with heterogeneity in federated self-supervised learning by divergence-aware predictor update rule, while Orchestra \cite{lubana2022orchestra} utilizes local-global clustering.
FedU$^2$ in \cite{liao2024rethinking} consists of flexible uniform regularizer and efficient unified aggregator to avoid the representation collapse and promote unified representations among clients.
However, these methods require a central processor to coordinate the global information. Moreover, they rely on model aggregation through weighted averaging to construct a global model, which necessitates a shared neural network architecture among devices.}

\textbf{Learning diverse and discriminative representation. }The Maximal Coding Rate Reduction (MCR$^2$) principle, introduced in \cite{yu2020learning}, optimizes a coding reduction objective to expand the overall feature set while simultaneously compressing and linearizing features within each class. Leveraging MCR$^2$, the authors in \cite{yu2020learning} aimed to learn representations that are both diverse and discriminative for multi-class data. This approach maps real data distributions onto a feature space composed of multiple independent multi-dimensional linear subspaces, making it well-suited for both discriminative and generative tasks. Experimental results demonstrated that the learned representations effectively capture the structural information of the original datasets.
Building on the MCR$^2$ principle, the authors in \cite{dai2021closed} incorporated an autoencoder architecture and introduced a closed-loop transcription mechanism by formulating a two-player min-max game between the encoder and decoder. However, this approach heavily relies on a large amount of training datasets and is therefore impractical for distributed devices with limited data availability. The method proposed in \cite{tian2025compositional} utilizes the MCR$^2$ principle and subspace fusion to implement distributed multi-view perception.
\begin{figure}[t]
    \centering
    \includegraphics[width=0.5\textwidth]{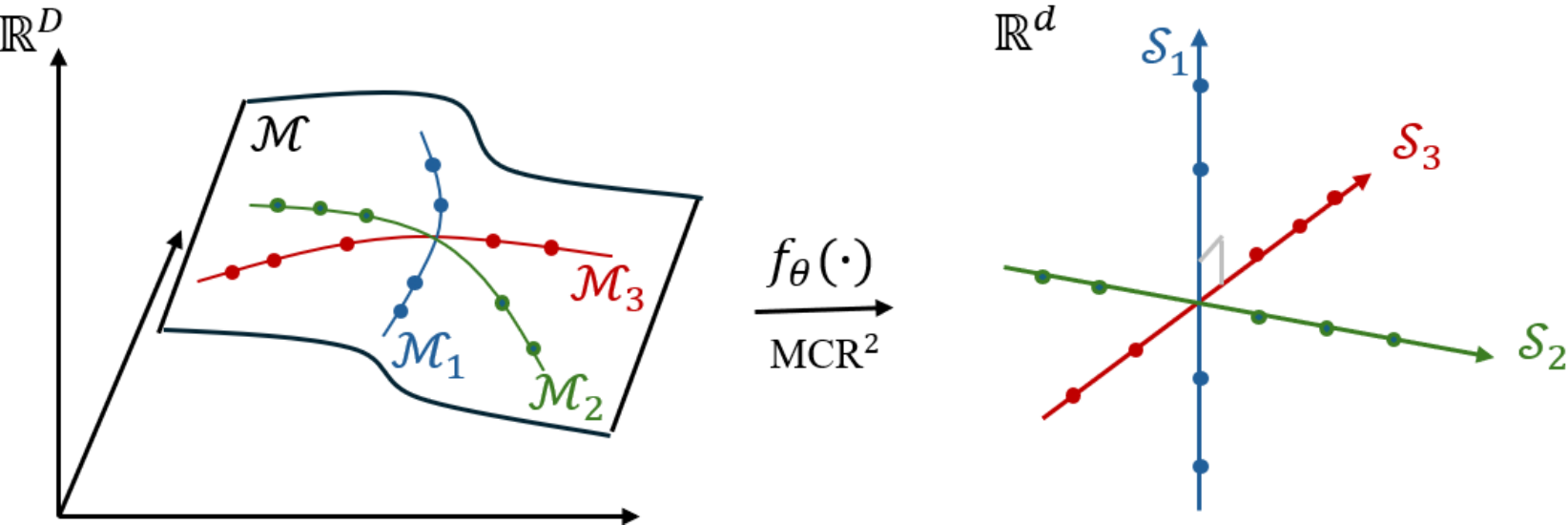}
    \caption{Illustration of the learned subspaces of MCR$^2$ principle.}
    \label{fig_sec1_2}
    \vspace{-0.5cm}
\end{figure}

\subsection{Contribution}
{Inspired by the MCR$^2$ principle, this work proposes two distributed representation learning algorithms grounded in rigorous theoretical derivations and guarantees. By extending MCR$^2$ to distributed scenarios, our approach allows agents to collaboratively capture global structural representations using only their limited local data, while cooperating with other agents. This framework jointly addresses the challenges of neural collapse, alignment, and composition in distributed representation learning.}
Through cooperation, the representation spaces within the same class are aligned across different agents, as illustrated on the right side of Fig. \ref{sysmodel}, ensuring that the global representations across the network reside within a unified space.
Unlike the approach in \cite{aguerri2019distributed}, which employs the Information Bottleneck (IB) method, MCR$^2$ treats data labels as side information, offering a distinct perspective. Additionally, MCR$^2$ provides a metric for measuring ``distance", making it conceptually similar to contrastive learning methods \cite{zhuang2021collaborative, zhang2023federated}. However, unlike contrastive learning, which focuses on sample-wise distance, MCR$^2$ considers the relationships between subsets of samples while incorporating label information as side information. Furthermore, our approach establishes consensus at the representation space level rather than relying on model parameter aggregation, enhancing its adaptability to diverse scenarios.

Our contribution can be summarized as follows.
\begin{itemize}
    \item{To achieve global representations that maintain diversity within classes and discriminability between classes, we apply the MCR$^2$ principle and extend it to distributed settings. Specifically, we reformulate its original optimization problem into decentralized sub-problems by incorporating consensus constraints on the statistical variance of the representation subspaces. For i.i.d. data distributions, the update rules are derived and simplified using augmented Lagrangian techniques.}
    \item{We also address the case of non-i.i.d. data distributions among agents by reformulating the optimization problem into a cluster-based form. Specifically, we propose clustering the agents and virtually replicating nodes locally so that data distributions become i.i.d. within each cluster. This allows the optimization problem to be decoupled into multiple parallel clusters, where within each cluster, the sub-problems are solved sequentially by different agents using the block coordinate descent method.}
    \item{For both i.i.d. and non-i.i.d. data distributions, we provide theoretical guarantees on the diverse and discriminative properties of the optimal solutions for the reformulated optimization problems. Additionally, we offer a convergence guarantee for the algorithm. Furthermore, in our proposed framework, devices share semantic information about their representations instead of model parameters, which is typically done in conventional distributed representation learning. This approach allows for the use of different neural network architectures among agents.}
\end{itemize}

The remainder of this paper is structured as follows. Section \ref{Sec2} introduces the system model and the MCR$^2$ principle. In Section \ref{Sec3}, we address the i.i.d. data distribution and reformulate the decomposable optimization problem, from which we develop the distributed representation learning algorithm for i.i.d. condition. Section \ref{Sec4} focuses on non-i.i.d. data distribution, where we formulate the cluster-based global optimization problem and design the corresponding distributed learning algorithm. We give the theoretical analysis on the properties of the learned representation and convergence of the algorithm in Section \ref{Sec5}.
The simulation results are presented in Section \ref{Sec6}, with the conclusion provided in Section \ref{Sec7}.

\section{Preliminary}\label{Sec2}
\subsection{System Model}
Consider a decentralized multi-agent communication network as shown in Fig. \ref{sysmodel}, which can be represented by an undirected graph $\mathcal{G}=(\mathcal{V},\mathcal{E})$, where $\mathcal{V}=\{1,\dots, N\}$ denotes the set of $N$ distributed agents/nodes and $\mathcal{E}=\{\varepsilon_{ij}\}_{i,j \in \mathcal{V}} $ represents the set of communication links between any two adjacent nodes. Let $\mathcal{N}_i$ denote the set of all neighboring nodes connected to node $i$ and we denote the number of nodes in $\mathcal{N}_i$ by $d_i=|\mathcal{N}_i|$. The adjacency matrix of $\mathcal{G}$ is denoted by $\textbf{A}$, where $\textbf{A}(i,j)=1$ if $\varepsilon_{ij}\in\mathcal{E}$ and $\textbf{A}(i,j)=0$, otherwise. 
Each node has its own dataset from the global data distribution, which is denoted by $\mathcal{D}_i=\{\mathcal{X}_i, \mathcal{Y}_i\}$, and we define $m_i$ as the number of training data samples in node $i$.

The output of the representation learning neural network in agent $i$ is denoted by $\bm{Z}_i\in\mathbb{R}^{d\times m_i}$, where $d$ is the dimension of the output feature and $m_i$ is the number of data samples in node $i$. 
The global representation can be defined as $\bm{Z}=[\bm{Z}_1, \cdots, \bm{Z}_N]\in\mathbb{R}^{d\times m}$, where $m=\sum_{i=1}^Nm_i$ denotes the total number of samples. The representation learning neural network (encoder) in node $i$ is denoted by $f_i$ parameterized by $\bm{\theta}_i$ and the output can be represented as $\bm{Z}_i=f_i(\bm{X}_i, \bm{\theta}_i)$.

\begin{figure*}[!htp]
    \centering
    \includegraphics[width=1.0\textwidth]{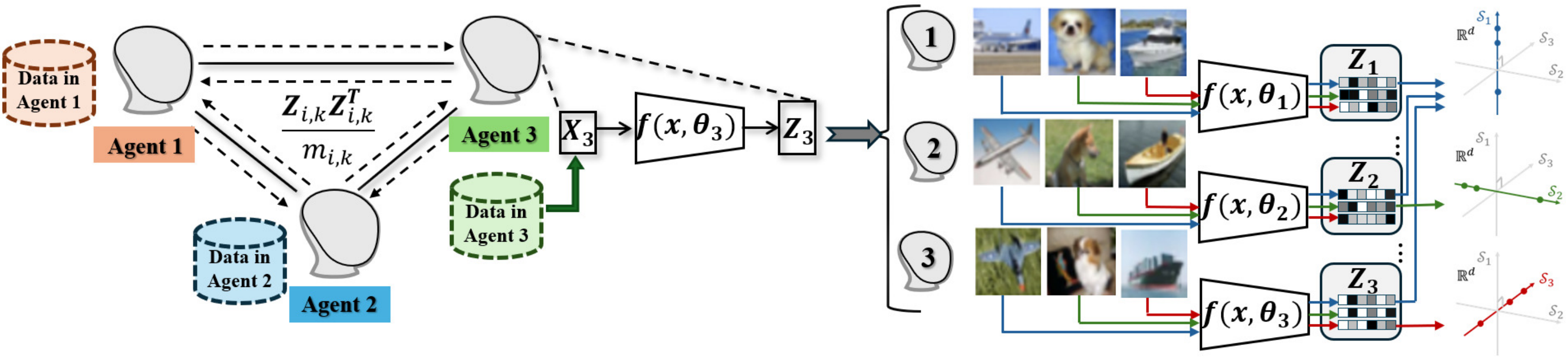}
    \caption{System model: The distributed nodes collaboratively learn the global representations by transmitting the semantic information. The right side shows the diverse and discriminative properties of the learned representations.}
    \label{sysmodel}
    \vspace{-0.5cm}
\end{figure*}

\subsection{Maximal Coding Rate Reduction}
The compactness of the learned features as a whole can be measured by the average coding length per sample when the sample size is large enough, i.e., the coding rate subject to the distortion \cite{ma2007segmentation, yu2020learning}, is given by.
\begin{equation}
    \label{eqR}
    R(\bm{Z}, \epsilon) = \frac{1}{2}\log\det(\bm{I}+\frac{d}{m\epsilon^2}\bm{Z}\bm{Z}^T).
\end{equation}
Considering that the generated features $\bm{Z}$ have multiple classes which may belong to different low-dimensional subspaces, the averaged number of bits per sample (the coding rate) is given in (\ref{eqRc}) \cite{ma2007segmentation}, where $\bm{\Pi}_k$ is a diagonal matrix whose diagonal entries indicate the membership of the samples in the multiple classes. In this sense, the label serves as side information.
\begin{equation}
    \label{eqRc}
    R^c(\bm{Z}, \epsilon|\bm{\Pi})\!=\!\sum_{k=1}^K\frac{tr(\bm{\Pi}_k)}{2m}\log\det(\bm{I}+\frac{d}{tr(\bm{\Pi}_k)\epsilon^2}\bm{Z}\bm{\Pi}_k\bm{Z}^T).
\end{equation}
To maximize the discrimination of the features among different classes, the whole space of $\bm{Z}$ must be as large as possible. On the other hand, within each class, the subspace should be of small volume to make the representation compact. Then, the loss function on the principle of Maximal Coding Rate Reduction can be expressed as follows \cite{yu2020learning}:
\begin{equation}
    \label{MCR2}
    \max\quad\triangle R(\bm{Z}, \bm{\Pi}, \epsilon)=R(\bm{Z}, \epsilon) - R^c(\bm{Z}, \epsilon|\bm{\Pi}).
\end{equation}
Given this objective function, our task is to decouple it into sub-problems for multiple distributed nodes, in order to jointly optimize their latent variables (feature space) $\mathcal{Z}=\{\bm{Z}_0, \dots, \bm{Z}_N\}$.
In the following, we aim to develop a distributed representation learning algorithm that can learn both diverse and discriminative representations across agents. This approach enhances the transparency of the learned feature representations, making them more interpretable and controllable for downstream tasks. We consider two different data distribution conditions among agents: i.i.d. and non-i.i.d., and we develop the corresponding algorithms for each scenario.

\section{Global Representation Learning for i.i.d. Data Distribution}\label{Sec3}
{In this section, we consider an i.i.d. data distribution. Firstly, the original optimization problem \eqref{MCR2} is equivalently reformulated so that the it can be decoupled among distributed agents. Building on augmented Lagrangian and primal–dual techniques, the original constrained optimization problem is reformulated into an unconstrained one, from which distributed update rules can be systematically derived.}

For the first term on the right side of (\ref{MCR2}), based on the concave property of the $\log\det$ function, we can write:
\begin{align}
    \label{eqR1}
    R(\bm{Z}, \epsilon) & = \frac{1}{2}\log\det(\bm{I}+\frac{d}{m\epsilon^2}\bm{Z}\bm{Z}^T)\\
    % & = \frac{1}{2}\log\det\Big[\bm{I}+\frac{d}{m\epsilon^2}(\bm{Z}_1\bm{Z}_1^T+\cdots+\bm{Z}_N\bm{Z}_N^T)\Big] \notag \\
    & = \frac{1}{2}\log\det\Big[\sum_{i=1}^N(\frac{m_i}{m}(\bm{I}+\frac{d}{m_i\epsilon^2}\bm{Z}_i\bm{Z}_i^T))\Big] \notag \\
    & \ge \sum_{i=1}^N\frac{m_i}{2m}\log\det(\bm{I}+\frac{d}{m_i\epsilon^2}\bm{Z}_i\bm{Z}_i^T),\notag
\end{align}
where the first equality comes from the definition of $\bm{Z}$ and the last inequality comes from Jensen's inequality given the concavity of $\log\det$ function. The equality holds if and only if $\bm{Z}_i\bm{Z}_i^T/m_i=\bm{Z}_j\bm{Z}_j^T/m_j$ for any two nodes $i,j\in\mathcal{V}$.

We define $\bm{Z}_{i,k}\in\mathbb{R}^{d\times m_{i,k}}$ as the features in node $i$ corresponding to the $k$-th class, i.e., $\bm{Z}_{i,k}$ is $\bm{Z}_{i}\bm{\Pi}_{i,k}$ with zero columns removed.
Drawing inspiration from the equality condition of Jensen’s inequality, we propose a procedure for decentralized representation learning. 
Specifically, we introduce a variance constraint (variance consensus) between the feature spaces of different nodes with respect to each class, i.e., ${\bm{Z}_{i,k}\bm{Z}_{i,k}^T}/{m_{i,k}} = {\bm{Z}_{j,k}\bm{Z}_{j,k}^T}/{m_{j,k}}$ for any class $k$ and arbitrary two nodes $i$ and $j$. Then, we derive the following decomposable result for the second term in (\ref{MCR2}):
\begin{align}
    \label{eqRc1}
    & R^c(\bm{Z}, \epsilon|\bm{\Pi}) = \sum_{k=1}^K\frac{tr(\bm{\Pi}_k)}{2m}\log\det(\bm{I}+\frac{d}{tr(\bm{\Pi}_k)\epsilon^2}\bm{Z}\bm{\Pi}_k\bm{Z}^T) \notag \\
    & = \sum_{k=1}^K\frac{m_{k}}{2m}\log\det(\bm{I}+\frac{d}{m_k\epsilon^2}\sum_{i=1}^N\bm{Z}_{i,k}\bm{Z}_{i,k}^T) \\
    & = \sum_{k=1}^K\sum_{i=1}^N\frac{m_{i,k}}{2m}\log\det(\bm{I}+\frac{d}{m_{i,k}\epsilon^2}\bm{Z}_{i,k}\bm{Z}_{i,k}^T). \notag
\end{align}
Given (\ref{eqR1}) and (\ref{eqRc1}), we formulate our optimization problem, using the lower bound of (\ref{eqR1}), as follows:
\begin{align}
    \label{opt1}
    & \max_{\bm{Z}}\quad\sum_{i=1}^N \Big[R_i(\bm{Z}_i)-R_i^c(\bm{Z}_i)\Big]\\
    & \text{s.t.}\quad \|\bm{Z}_{i,k}\|_F^2 = m_{i,k}, \forall i\in\mathcal{V}, 1\le k\le K, \notag \\
    & \quad\quad\frac{\bm{Z}_{i,k}\bm{Z}_{i,k}^T}{m_{i,k}}=\frac{\bm{Z}_{j,k}\bm{Z}_{j,k}^T}{m_{j,k}}, \forall i\in\mathcal{V}, j\in\mathcal{N}_i, 1\le k\le K,\notag
\end{align}
where
\begin{align}
    &R_i^c(\bm{Z}_i) = \sum_{k=1}^K\frac{m_{i,k}}{2m}\log\det(\bm{I}+\frac{d}{m_{i,k}\epsilon^2}\bm{Z}_{i,k}\bm{Z}_{i,k}^T), \\
    &R_i(\bm{Z}_i) = \frac{m_i}{2m}\log\det(\bm{I}+\frac{d}{m_i\epsilon^2}\bm{Z}_i\bm{Z}_i^T).
\end{align}
In (\ref{opt1}), the first constraint aims to ensure that the reduction magnitude is comparable across different representations. This is achieved by normalizing each feature to lie on the unit sphere \cite{yu2020learning}, which can be easily implemented by applying a normalization function to the output layer.
%the first constraint seeks to ensure the reduction amount is comparable across different representations. This can be achieved by normalizing each feature to lie on the unit sphere \cite{yu2020learning}, which can be easily implemented by adding a normalization function to the output layer. 
Furthermore, with the connected network topology, the second constraint between connected nodes is sufficient to ensure ${\bm{Z}_{i,k}\bm{Z}_{i,k}^T}/{m_{i,k}} = {\bm{Z}_{j,k}\bm{Z}_{j,k}^T}/{m_{j,k}}$ for any class $k$ and arbitrary two nodes $i,j$.

By introducing the intermediate variable $\bm{T}_{i,j,k}$, we equivalently reformulate problem (\ref{opt1}).
\begin{align}
    \label{opt1_2}
    & \min_{\bm{Z}}\quad \sum_{i=1}^NR_i^c(\bm{Z}_i) - R_i(\bm{Z}_i)\\
    & \text{s.t.}\quad \|\bm{Z}_{i,k}\|_F^2 = m_{i,k}, \forall i\in\mathcal{V}, 1\le k\le K, \notag \\
    & \quad\quad \frac{\bm{Z}_{i,k}\bm{Z}_{i,k}^T}{m_{i,k}}=\bm{T}_{i,j,k}, \forall i\in\mathcal{V}, \forall j\in\mathcal{N}_i, 1\le k\le K,\notag \\
    & \quad\quad \frac{\bm{Z}_{j,k}\bm{Z}_{j,k}^T}{m_{j,k}}=\bm{T}_{i,j,k}, \forall i\in\mathcal{V}, \forall j\in\mathcal{N}_i, 1\le k\le K,\notag 
\end{align}
In order to solve this optimization problem with consensus constraints, we utilize the augmented Lagrangian techniques, which can transform the optimization problem into an unconstrained one. 
By introducing the dual variables $\bm{U}_{i,j,k}, \bm{V}_{i,j,k}$, the problem can be transferred into an unconstrained one.
{Specifically, we utilize the similar dual-decomposition techniques as in the distributed alternating direction method of multipliers (ADMM) algorithm \cite{tian2020distributed}, and the updating rules can be derived as in Proposition \ref{p1}.
\newtheorem{proposition}{Proposition}
\begin{proposition}\label{p1}
    (Updating rules for i.i.d. data distribution.) Introduce $\bm{Y}_{i,j,k}\triangleq(\bm{U}_{i,j,k}+\bm{V}_{j,i,k})=2\bm{U}_{i,j,k}$, and $\bm{U}_{i,j,k}^{(0)}=\bm{V}_{j,i,k}^{(0)}=\bm{0}$. Define the step size for dual variables as $\rho$ and the penalty parameter as $\gamma$. Then the updating rules can be simplified into the following two steps, which can be conducted in parallel among the nodes:
\begin{subequations}
	\begin{equation}
		\bm{Y}_{i,j,k}^{(t)}=\bm{Y}_{i,j,k}^{(t-1)}+\rho[\frac{\bm{Z}_{i,k}^{(t-1)}{\bm{Z}_{i,k}^{(t-1)}}^T}{m_{i,k}}-\frac{\bm{Z}_{j,k}^{(t-1)}{\bm{Z}_{j,k}^{(t-1)}}^T}{m_{j,k}}],  \label{R1_2a}
	\end{equation}
	\begin{equation}
		\bm{Z}_i^{(t)} = \mathop{\arg\min}\mathcal{L}'(\bm{Y}_{i}^{(t)},\bm{Z}_i), \label{R1_2b}	
        \end{equation}
\end{subequations}
where  
\begin{align}
    \label{lagran1_2}
    & \mathcal{L}'(\bm{Y}_{i}^{(t)},\bm{Z}_i) = R_i^c(\bm{Z}_i) - R_i(\bm{Z}_i) + \\
    &\sum_{j\in\mathcal{N}_i}\sum_{k=1}^K\Big\{\text{tr}[{\bm{Y}_{i,j,k}^{(t)}}^T(\frac{\bm{Z}_{i,k}\bm{Z}_{i,k}^T}{m_{i,k}}-\frac{\bm{Z}_{j,k}^{(t-1)}{\bm{Z}_{j,k}^{(t-1)}}^T}{m_{j,k}})]\notag\\
    & + \gamma\big\|\frac{\bm{Z}_{i,k}\bm{Z}_{i,k}^T}{m_{i,k}}-\frac{\frac{\bm{Z}_{i,k}^{(t-1)}{\bm{Z}_{i,k}^{(t-1)}}^T}{m_{i,k}}+\frac{\bm{Z}_{j,k}^{(t-1)}{\bm{Z}_{j,k}^{(t-1)}}^T}{m_{j,k}}}{2}\big\|_F^2\Big\}.  \notag
\end{align}
\end{proposition}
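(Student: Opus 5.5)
The derivation follows the standard decentralized ADMM route of \cite{tian2020distributed}. Write $\bm{S}_{i,k}\triangleq \bm{Z}_{i,k}\bm{Z}_{i,k}^T/m_{i,k}$. Attach $\bm{U}_{i,j,k}$ to the constraint $\bm{S}_{i,k}=\bm{T}_{i,j,k}$ and $\bm{V}_{i,j,k}$ to $\bm{S}_{j,k}=\bm{T}_{i,j,k}$ in (\ref{opt1_2}). The augmented Lagrangian is then
\begin{align*}
\mathcal{L}=&\sum_{i}\big[R_i^c(\bm{Z}_i)-R_i(\bm{Z}_i)\big]+\sum_{i}\sum_{j\in\mathcal{N}_i}\sum_{k}\Big\{\text{tr}[\bm{U}_{i,j,k}^T(\bm{S}_{i,k}-\bm{T}_{i,j,k})]+\text{tr}[\bm{V}_{i,j,k}^T(\bm{S}_{j,k}-\bm{T}_{i,j,k})]\\
&+\tfrac{\gamma}{2}\|\bm{S}_{i,k}-\bm{T}_{i,j,k}\|_F^2+\tfrac{\gamma}{2}\|\bm{S}_{j,k}-\bm{T}_{i,j,k}\|_F^2\Big\}.
\end{align*}
The norm constraint $\|\bm{Z}_{i,k}\|_F^2=m_{i,k}$ is kept as a hard (normalization) constraint and is not dualized. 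One ADMM sweep then consists of three steps: a $\bm{Z}$-minimization, a $\bm{T}$-minimization, and dual ascent $\bm{U}\leftarrow\bm{U}+\rho(\bm{S}_{i,k}-\bm{T})$, $\bm{V}\leftarrow\bm{V}+\rho(\bm{S}_{j,k}-\bm{T})$.

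\textbf{Step 1: eliminate $\bm{T}$.} The $\bm{T}$-update is an unconstrained quadratic, so it has a closed form:
\[
\bm{T}_{i,j,k}=\tfrac{1}{2}(\bm{S}_{i,k}+\bm{S}_{j,k})+\tfrac{1}{2\gamma}(\bm{U}_{i,j,k}+\bm{V}_{i,j,k}).
\]
Substituting this into the dual updates gives $\bm{U}_{i,j,k}+\bm{V}_{i,j,k}=\bm{0}$ after every iteration. The initialization $\bm{U}^{(0)}=\bm{V}^{(0)}=\bm{0}$ makes this hold at $t=0$ as well. It follows that $\bm{T}_{i,j,k}=\tfrac12(\bm{S}_{i,k}+\bm{S}_{j,k})$, which is the averaged term in the penalty of (\ref{lagran1_2}). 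It also gives
\[
\bm{U}^{(t)}_{i,j,k}=\bm{U}^{(t-1)}_{i,j,k}+\tfrac{\rho}{2}(\bm{S}_{i,k}-\bm{S}_{j,k}).
\]
Edge $(i,j)$ and edge $(j,i)$ each carry their own $\bm{T}$, and the ordering of $\bm{S}_{i,k}$ and $\bm{S}_{j,k}$ is reversed between them, so the same argument yields $\bm{V}_{j,i,k}=\bm{U}_{i,j,k}$. Setting $\bm{Y}_{i,j,k}=\bm{U}_{i,j,k}+\bm{V}_{j,i,k}=2\bm{U}_{i,j,k}$ then produces (\ref{R1_2a}), after absorbing the factor into $\rho$ and using the previous iterate $\bm{Z}^{(t-1)}$.

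\textbf{Step 2: collect the $\bm{Z}_i$-dependent terms.} The variable $\bm{Z}_i$ appears through $\bm{S}_{i,k}$ in two places: as the ``first'' node on edges $(i,j)$, multiplied by $\bm{U}_{i,j,k}$, and as the ``second'' node on edges $(j,i)$, multiplied by $\bm{V}_{j,i,k}$. Summing these, the linear term is $\text{tr}[\bm{Y}_{i,j,k}^T\bm{S}_{i,k}]$. The constant $-\bm{S}_{j,k}^{(t-1)}$ does not depend on $\bm{Z}_i$, so it can be added freely to match the form in (\ref{lagran1_2}).

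Similarly, the two quadratic penalties $\tfrac{\gamma}{2}\|\bm{S}_{i,k}-\bm{T}\|_F^2$ coming from edges $(i,j)$ and $(j,i)$ combine into $\gamma\|\bm{S}_{i,k}-\bm{T}_{i,j,k}\|_F^2$. Evaluating $\bm{T}$ at the previous iterate gives exactly the penalty in (\ref{lagran1_2}).

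\textbf{Main obstacle.} The delicate part is justifying the Jacobi-type parallel update. Each node freezes its neighbours at $\bm{Z}_j^{(t-1)}$, and $\bm{T}$ is evaluated at the previous iterate rather than jointly minimized. I would handle this exactly as in the distributed ADMM derivation of \cite{tian2020distributed}: treat the scheme as the standard decoupled linearization of the coupled $\bm{Z}$-step, and track the index and sign bookkeeping carefully so that $\bm{U}_{i,j,k}=\bm{V}_{j,i,k}$ holds by induction. Because the problem is nonconvex in $\bm{Z}$, the argument shows that the stated rules are the decomposed primal–dual iteration. It is not an equivalence of optimizers.
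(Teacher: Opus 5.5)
Your proposal is correct and follows the paper's Appendix~A essentially step for step: the closed-form $\bm{T}$-update, $\bm{U}_{i,j,k}+\bm{V}_{i,j,k}=\bm{0}$ from the zero initialization, $\bm{V}_{j,i,k}=\bm{U}_{i,j,k}$, and $\bm{Y}=2\bm{U}$; your explicit collection of the $\bm{Z}_i$-terms from edges $(i,j)$ and $(j,i)$ fills in a step the paper leaves implicit, including why the penalty in $\mathcal{L}'$ carries $\gamma$ rather than $\gamma/2$. Three small corrections: $\bm{U}+\bm{V}=\bm{0}$ persists only by induction from the initialization, since $(\bm{U}+\bm{V})^{(t)}=(1-\rho/\gamma)(\bm{U}+\bm{V})^{(t-1)}$ unless $\rho=\gamma$; $\bm{Y}=2\bm{U}$ cancels the $\rho/2$ exactly, so nothing has to be absorbed into $\rho$; and your ``main obstacle'' is not one, because with $\bm{T}$ held at $\bm{T}^{(t-1)}$ the augmented Lagrangian is exactly separable in the $\bm{Z}_i$, so the parallel step is the exact ADMM primal update rather than a Jacobi approximation.
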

\begin{proof}
    See Appendix A.
\end{proof}
}

Considering the fact that the optimization of $\bm{Z}_i$ is implemented by optimizing the neural network parameters $\bm{\theta}_i$, the updating rule (\ref{R1_2b}) is expressed as follows:
\begin{equation}
    \bm{\theta}_i^{(t)} = \mathop{\arg\min}_{\bm{\theta}_i}\mathcal{L}'(\bm{Y}_{i}^{(t)},f_i(\bm{\theta}_i, \bm{X}_i)). \label{R1_2c}	
\end{equation}
Due to the non-convexity of (\ref{lagran1_2}) and the inherent non-convexity of the neural network, a closed-form solution for (\ref{R1_2c}) cannot be obtained. However, it can be approximated through multiple steps of stochastic gradient descent, as is commonly done in neural networks. Therefore, the global structural representation learning algorithm for i.i.d. data distribution is summarized in Algorithm \ref{Alg1}. 
\begin{algorithm}
	\caption{\textbf{Distributed Representation Learning for i.i.d. Data Distribution}}\label{Alg1}
	\For{node $i=1,2,\dots, N$ in parallel}
	{
		\textbf{Initialize} the local parameters of encoder $\bm{\theta}_i$ and the dimension of the output feature $d$. $t=0, \bm{Y}_{i,j,k}^{(0)}=\bm{0}$ for all classes. \\
            %\textbf{Obtain} $\bm{Z}_{i,k}^{(0)}\in\mathbb{R}^{d\times m_{i,k}}$ for all classes with all of the training data samples.\\
            \textbf{Obtain} $\bm{V}_{i,k}^{(0)} = \bm{Z}_{i,k}^{(0)}{\bm{Z}_{i,k}^{(0)}}^T/m_{i,k}$ and \textbf{transmit} $\bm{V}_{i,k}^{(0)}$ to all neighboring nodes.\\
	}
	\While{not converge}
	{
		$t=t+1$\\
		\For{node $i=1,2,\dots, N$ in parallel}
		{
			\For{class $k=1,2,\dots, K$ in parallel}
                {
                    \textbf{Update} $\bm{Y}_{i,j,k}^{(t)}$ according to (\ref{R1_2a}).
                }
                
                \For{inner step $t'=1,\dots, T'$}
			{
				\textbf{Randomly select} a batch of data from local training data set $\bm{X}_s\in\mathcal{D}_i$.\\
				\textbf{Forward} and \textbf{Compute} the loss (\ref{lagran1_2}) with $\bm{Y}_{i}^{(t)}$, $\bm{V}_{i}^{(t-1)}$ and $\bm{V}_{j}^{(t-1)}, \forall j\in\mathcal{N}_i$.\\
				\textbf{Update} $\bm{\theta}_i$ with gradient descent.\\
			}
            }
            \For{node $i=1,2,\dots, N$ in parallel}
		{
			\textbf{Obtain} $\bm{Z}_{i,k}^{(t)}\in\mathbb{R}^{d\times m_{i,k}}$ for all classes with all of the training data samples.\\
                \textbf{Obtain} $\bm{V}_{i,k}^{(t)} = \bm{Z}_{i,k}^{(t)}{\bm{Z}_{i,k}^{(t)}}^T/m_{i,k}$ and \textbf{transmit} $\bm{V}_{i,k}^{(t)}$ to all neighboring nodes.\\
		}
	}
	\textbf{Output} the trained local encoders and the resultant representations $\bm{Z}$.
\end{algorithm}

\newtheorem{remark}{Remark}
{\begin{remark}
(Computational cost analysis.) In Algorithm \ref{Alg1}, in one iteration, the computation cost consists of three parts: the updating of $\bm{Y}_{i,j,k}^{(t)}$, the forward computation and backward propagation, the computation of $\bm{V}_{i,k}^{(t)}$. We define the number of model parameters in node $i$ as $P_i=|\bm{\theta}_i|$. Then the computational cost of the three steps are $\mathcal{O}(|\mathcal{N}_i|\cdot K\cdot d^2)$, $\mathcal{O}(T'P)$, $\mathcal{O}(d^2m_i)$ respectively. Take sum of them and we can obtain the computational cost in each iteration for node $i$ as $\mathcal{O}(|\mathcal{N}_i|Kd^2+T'P+d^2m_i)$. Typically we have $P_i\gg d^2$, so the neural network training plays the main role, among which the computation of $\log\det(\cdot)$ function costs the most. 

Compared to conventional algorithms such as Decentralized SGD (D-SGD) and its variants, the computational cost increases mainly with the computation of $\log\det(\cdot)$ function, and additional updating of $\bm{Y}_{i,j,k}^{(t)}$ and $\bm{V}_{i,k}^{(t)}$.
\end{remark}

\begin{remark}
(Communication cost analysis.) In each iteration, each agent transmits $\bm{V}_{i,k}^{(t)}\in\mathbb{R}^{d\times d}, \forall k=1,..., K$ to all its neighboring nodes. Thus the total communication cost is $\mathcal{O}(T\cdot|\mathcal{N}_i|\cdot K\cdot d^2)$. 

Compared with other algorithms requiring the exchange of model parameters of size $\mathcal{O}(P)$, the proposed Algorithm \ref{Alg1} only communicates second-order representation statistics with complexity $\mathcal{O}(Kd^2)$, which is independent of both the model size and the local data volume. In deep neural network architectures, this significantly reduces communication overhead with $Kd^2\ll P$ and makes the proposed method more scalable to large-scale and heterogeneous models.
\end{remark}
}

Theorem \ref{t1} in Section \ref{Sec5} guarantees the diverse and discriminative properties of the learned representations.
However, in non-i.i.d. data distribution, we do not have ${\bm{Z}_{i}\bm{Z}_{i}^T}/{m_{i}} = {\bm{Z}_{j}\bm{Z}_{j}^T}/{m_{j}}$ between different agents due to their different data distributions. 
As a result, the equality in (\ref{eqR1}) cannot be satisfied, and the resulting optimal solution does not possess the properties outlined in Theorem \ref{t1}.
In the following section, we will develop an algorithm to address this issue.

\section{GLOBAL REPRESENTATION LEARNING FOR Non-I.I.D. DATA DISTRIBUTION}\label{Sec4}
In non-i.i.d. conditions, the first term in (\ref{MCR2}) cannot be decomposed due to the fact that ${\bm{Z}_{i}\bm{Z}_{i}^T}/{m_{i}} \neq {\bm{Z}_{j}\bm{Z}_{j}^T}/{m_{j}}$. % This phenomenon is resulted from the label distribution skews among nodes. 
To address this issue, we propose clustering the agents over the network so that the data distributions between clusters are i.i.d.. These clusters and their respective agents can then be decomposed for parallel updating. The procedure is illustrated in Fig. \ref{fig_alg3}. 
In the first stage, we assume that the global information about the agents and their labels is known. 
To ensure i.i.d. data distributions among clusters, one node may belong to more than one cluster, as shown in Fig. \ref{fig_alg3}, making it difficult to conduct inter-cluster parallelization. To address this issue, we propose the virtual node replication, {whereby a heuristic clustering with Local Replication method as described in Algorithm \ref{Alg2}. For ease of representation, we define the set of labels in agent $i$ as $\mathcal{K}_i$ and assume that for any label $k\in\mathcal{K}$, there exists at least one agent $i\in\mathcal{V}$ such that $k\in\mathcal{K}_i$. Note that Algorithm~\ref{Alg2} prioritizes unclustered agents when constructing each cluster. Node replication is activated only when the remaining unclustered agents cannot cover all labels, which effectively limits redundant node usage.

\begin{algorithm}
	\caption{\textbf{Clustering with Local Replication}}\label{Alg2}
    \textbf{Input} the set of the labels $\mathcal{K}_i$ for all $i\in\mathcal{V}$.\\
    \textbf{Initialize} $s=1$, set of unclustered agents $\bar{\mathcal{V}}\leftarrow \mathcal{V}$. \\
	\While{$\bar{\mathcal{V}}\neq\emptyset$}
	{
		\textbf{Initialize} $\mathcal{V}^s=\emptyset$, $\mathcal{U}^s=\mathcal{K}$.\\
		\While{$\mathcal{U}^s\neq\emptyset$}
		{    
            \If{$\bar{\mathcal{V}}\neq\emptyset$}{
                \textbf{Choose} the agent $i^\star=\arg\max_{i\in\bar{\mathcal{V}}}|\mathcal{K}_i\cap \mathcal{U}^s|$ and \textbf{update} $\bar{\mathcal{V}}\leftarrow \bar{\mathcal{V}} \setminus \{i^\star\}$.\\
            }
            \Else{ 
                \textbf{Choose} the agent $i^\star=\arg\max_{i\in{\mathcal{V}}}|\mathcal{K}_i\cap \mathcal{U}^s|$\tcp*[l]{Replication is allowed}
            }
            \textbf{Update} $\mathcal{V}^s\leftarrow \mathcal{V}^s \cup \{i^\star\}$, $\mathcal{U}^s\leftarrow\mathcal{U}^s\setminus \mathcal{K}_{i^\star}$
        }
        $s=s+1$\\
	}
	\textbf{Output} $S=s-1$ clusters with node sets $\{\mathcal{V}^1,\ldots,\mathcal{V}^S\}$.
\end{algorithm}
}

Specifically, if agent $i$ exists in $1<S_i\le S$ clusters, we add $S_i$ virtual nodes in agent $i$ and these virtual nodes are initialized the same as agent $i$ with the same datasets and the same neural network model. Then, these $S_i$ virtual nodes, denoted by $\tilde{V}_i$, are assigned to the clusters for parallelized implementation.
The set of the agents together with the $\sum_{i\in\mathcal{V}}S_i$ virtual nodes is denoted by $\tilde{\mathcal{V}}$.
Correspondingly, the assigned (virtual) nodes' set for cluster $s$ is defined as $\tilde{\mathcal{V}}^s$.
To ensure the local consensus of the virtual nodes within one agent $i$, we add the consensus constraints of $\bm{Z}_{i'}\bm{Z}_{i'}^T$ the local virtual nodes $\tilde{V}_i$. With the same datasets in the node, this constraint can be easily satisfied by taking local average of the model parameters among $i'\in\tilde{V}_i$.

\begin{figure*}[t]
    \centering
    \includegraphics[width=1.0\textwidth]{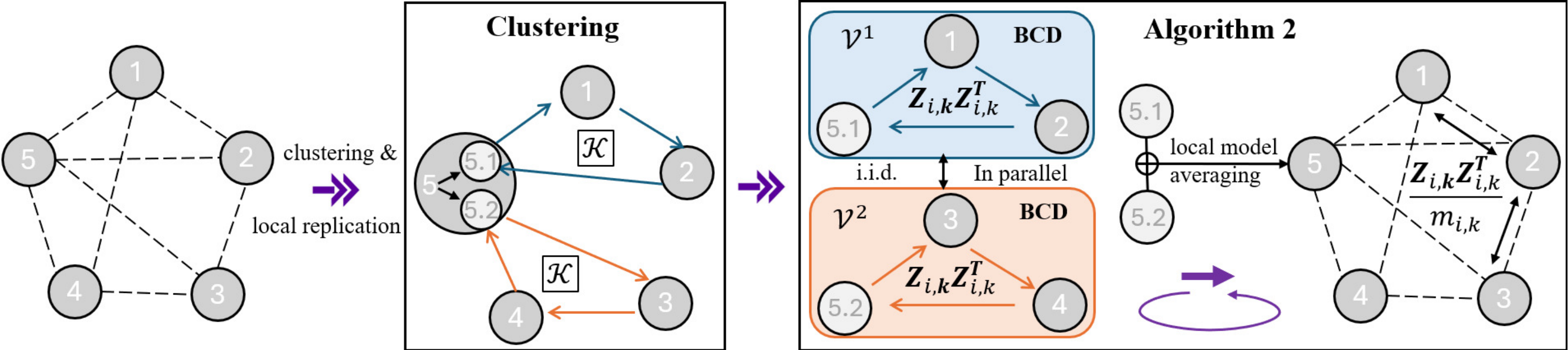}
    \caption{Illustration of the global representation learning algorithm under non-i.i.d. conditions: First, {$5$ agents are clustered into $2$ clusters. Through local replication in agent $5$, }the data distributions within each cluster become i.i.d. Then, in Algorithm \ref{Alg3}, the (virtual) nodes across different clusters can be updated in parallel, while the agents within each cluster update their models sequentially.}
    \label{fig_alg3}
    \vspace{-0.5cm}
\end{figure*}

After clustering and virtual replication, we could obtain the sets of nodes denoted by $\mathcal{V}^{s}$, corresponding to the $s$-th cluster.
Within $s$-th cluster, the set of nodes can be denoted by $\mathcal{V}^{s}$. 
The number of nodes in $\mathcal{V}^{s}$ is denoted by $N^s$ and the total number of data samples in the cluster is denoted by $m^{s}$. 
We have $\mathcal{V}^{1}\cup\cdots\cup\mathcal{V}^{S}=\mathcal{V}$ and $\cup\{\mathcal{K}_i, \forall i\in\mathcal{V}^{s}\}=\mathcal{K}, \forall s \in \{1,\dots, S\}$, where $\cup\{\cdot\}$ denotes the union of all entries in the sets.
Then, (\ref{eqR1}) can be reformulated as:
\begin{align}
    \label{eqR2_2}
    R(\bm{Z}, \epsilon) & = \frac{1}{2}\log\det(\bm{I}+\frac{d}{m\epsilon^2}\bm{Z}\bm{Z}^T)\\
    % & = \frac{1}{2}\log\det\Big[\bm{I}+\frac{d}{m\epsilon^2}\sum_{s=1}^S\sum_{i\in\tilde{\mathcal{V}}^s}\frac{1}{S_i}\bm{Z}_{i}\bm{Z}_{i}^T\Big] \notag \\
    % & = \frac{1}{2}\log\det\Big[\bm{I}+\frac{d}{(\sum_{s=1}^Sm^s)\epsilon^2}\sum_{s=1}^S\sum_{i\in\tilde{\mathcal{V}}^s}\frac{1}{S_i}\bm{Z}_{i}\bm{Z}_{i}^T\Big] \notag \\
    & = \frac{1}{2}\log\det\Big[\sum_{s=1}^S\frac{m^s}{m}(\bm{I}+\frac{d}{m^s\epsilon^2}\sum_{i\in\tilde{\mathcal{V}}^s}\frac{1}{S_i}\bm{Z}_{i}\bm{Z}_{i}^T\Big] \notag \\
    & \ge \sum_{s=1}^S\frac{m^s}{2m}\log\det(\bm{I}+\frac{d}{m^s\epsilon^2}\sum_{i\in\tilde{\mathcal{V}}^s}\frac{1}{S_i}\bm{Z}_{i}\bm{Z}_{i}^T), \notag
\end{align}
where the equality holds if $\frac{1}{m^s}\sum_{i\in\tilde{\mathcal{V}}^s}\frac{1}{S_i}\bm{Z}_{i}\bm{Z}_{i}^T$ is equal among different clusters. $m^s$ is defined as $m^s=\sum_{i\in\tilde{\mathcal{V}}^s}\frac{m_i}{S_i}$, making sure $\sum_{s=1}^Sm^s=m$.
The first equality follows from the consensus constraints of $\bm{Z}_{i'}\bm{Z}_{i'}^T, i'\in\tilde{V}_i$, which can be ensured by taking the average of model parameters among the virtual nodes within each agent.

In the non-i.i.d. condition, we still can make consensus constraints on the variance of the subspaces w.r.t. each class.
We define $\mathcal{V}_k$ as the set of original agents which have the label $k$, and $\tilde{\mathcal{V}}_k$ as the set of nodes with the virtual nodes included. 
We add the constraints ${\bm{Z}_{i,k}\bm{Z}_{i,k}^T}/{m_{i,k}} = {\bm{Z}_{j,k}\bm{Z}_{j,k}^T}/{m_{j,k}}, i,j\in\tilde{\mathcal{V}}_k, \forall k \in \{1,\dots, K\}$. Then, the second term in (\ref{MCR2}) can be reformulated as:
\begin{align}
    \label{eqRc2}
    & R^c(\bm{Z}, \epsilon|\bm{\Pi}) = \sum_{k=1}^K\frac{tr(\bm{\Pi}_k)}{2m}\log\det(\bm{I}+\frac{d}{tr(\bm{\Pi}_k)\epsilon^2}\bm{Z}\bm{\Pi}_k\bm{Z}^T) \notag \\
    % & = \sum_{k=1}^K\frac{m_{k}}{2m}\log\det(\bm{I}+\frac{d}{m_k\epsilon^2}\sum_{i\in\mathcal{V}_k}\bm{Z}_{i,k}\bm{Z}_{i,k}^T) \\
    %& = \sum_{k=1}^K\frac{m_{k}}{2m}\log\det(\bm{I}+\frac{d}{m_k\epsilon^2}\sum_{i\in\tilde{\mathcal{V}}_k}\frac{1}{S_i}\bm{Z}_{i,k}\bm{Z}_{i,k}^T) \notag\\
    & =\sum_{k=1}^K\frac{m_{k}}{2m}\log\det(\bm{I}+\frac{d}{m_k\epsilon^2} \sum_{s=1}^S\sum_{i\in\tilde{\mathcal{V}}_k^s}\frac{1}{S_i}\bm{Z}_{i,k}\bm{Z}_{i,k}^T) \\
    & = \sum_{s=1}^S\sum_{k=1}^K\frac{m_{k}^s}{2mS_i}\log\det(\bm{I}+\frac{d}{m_{k}^s\epsilon^2}\sum_{i\in\tilde{\mathcal{V}}_k^s}\frac{1}{S_i}\bm{Z}_{i,k}\bm{Z}_{i,k}^T), \notag
\end{align}
where the last equality follows from the consensus constraints.
By defining:
$$
R^c_s(\bm{Z}^s) = \sum_{k=1}^K\frac{m_{k}^s}{2mS_i}\log\det(\bm{I}+\frac{d}{m_{k}^s\epsilon^2}\sum_{i\in\tilde{\mathcal{V}}_k^s}\frac{1}{S_i}\bm{Z}_{i,k}\bm{Z}_{i,k}^T),
$$
the global optimization problem can be reformulated as:
\begin{align}
    \label{opt2}
    & \max_{\bm{Z}}\sum_{s=1}^S\Big[\frac{m^s}{2m}\log\det(\bm{I}+\frac{d}{m^s\epsilon^2}\sum_{i\in\tilde{\mathcal{V}}^s}\frac{1}{S_i}\bm{Z}_{i}\bm{Z}_{i}^T)-R^c_s(\bm{Z}^s)\Big] \notag \\
    & \text{s.t.}\quad \|\bm{Z}_{i,k}\|_F^2 = m_{i,k}, \forall i\in\tilde{\mathcal{V}}, 1\le k\le K,\\
    & \quad\quad \frac{\bm{Z}_{i,k}\bm{Z}_{i,k}^T}{m_{i,k}}=\frac{\bm{Z}_{j,k}\bm{Z}_{j,k}^T}{m_{j,k}}, \forall i,j\in\tilde{\mathcal{V}}_k, 1\le k\le K,\notag \\
    & \quad\quad \bm{Z}_{i'}\bm{Z}_{i'}^T =\bm{Z}_{i}\bm{Z}_{i}^T, \forall i'\in\tilde{V}_i, i\in\mathcal{V},\notag
\end{align}

In (\ref{opt2}), the first constraint can be satisfied by adding the normalization function on the output layer, as illustrated in the previous section. The third constraint can be satisfied by model parameter averaging among the local virtual nodes in each agent. 
As discussed above, this is due to the fact that these local virtual nodes share the same dataset and neural network architecture. By performing local averaging, consensus on the representations can be easily reached. 
{To solve the formulated optimization problem \eqref{opt2}, the updating rules can be derived as in Proposition \ref{p2}.
\begin{proposition}\label{p2}
    (Updating rules for non-i.i.d. data distribution.) Introduce $\bm{Y}_{i,j,k}$ and initialize it as $\bm{Y}_{i,j,k}^{(0)}=\bm{0}$. Define the step size for dual variables as $\rho$ and the penalty parameter as $\gamma$.  Then the updating rules can be simplified into the following two steps, which can be conducted in parallel among the clusters.
\begin{subequations}
	\begin{equation}
		\bm{Y}_{i,j,k}^{(t)}=\bm{Y}_{i,j,k}^{(t-1)}+\rho[\frac{\bm{Z}_{i,k}^{(t-1)}{\bm{Z}_{i,k}^{(t-1)}}^T}{m_{i,k}}-\frac{\bm{Z}_{j,k}^{(t-1)}{\bm{Z}_{j,k}^{(t-1)}}^T}{m_{j,k}}],  \label{R2_1a}
	\end{equation}
	\begin{align}
    (\bm{Z}_{i})^{(t)} = \mathop{\arg\min}\mathcal{L}^s(&\bm{Y}^{(t)},(\bm{Z}_{i_1})^{(t)},\cdots,(\bm{Z}_{i}),\cdots,\notag\\
    &(\bm{Z}_{N^s})^{(t-1)}), \label{R2_3}	
    \end{align}
\end{subequations}
 Within each cluster, the update of \eqref{R2_3} is conducted sequentially and  
\begin{align}
&\mathcal{L}^s=-\frac{m^s}{2m}\log\det(\bm{I}+\frac{d}{m^s\epsilon^2}\sum_{i\in\tilde{\mathcal{V}}^s}\frac{1}{S_i}\bm{Z}_{i}\bm{Z}_{i}^T)+ R_s^c(\bm{Z}^s)\notag\\
& + \sum_{i\in\tilde{\mathcal{V}}^{s}}\sum_{j\in\tilde{\mathcal{V}}_k/\{i\}}\sum_{k\in\mathcal{K}_i}  \Big\{ \text{tr}[{\bm{Y}_{i,j,k}^{(t)}}^T(\frac{\bm{Z}_{i,k}\bm{Z}_{i,k}^T}{m_{i,k}}-\frac{\bm{Z}_{j,k}^{(t-1)}{\bm{Z}_{j,k}^{(t-1)}}^T}{m_{j,k}})] \notag\\
& + \gamma\big\|\frac{\bm{Z}_{i,k}\bm{Z}_{i,k}^T}{m_{i,k}}-\frac{\frac{\bm{Z}_{i,k}^{(t-1)}{\bm{Z}_{i,k}^{(t-1)}}^T}{m_{i,k}}+\frac{\bm{Z}_{j,k}^{(t-1)}{\bm{Z}_{j,k}^{(t-1)}}^T}{m_{j,k}}}{2}\big\|_F^2\Big\}. \label{R2_4}
\end{align}
\end{proposition}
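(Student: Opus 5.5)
The plan is to mirror the derivation behind Proposition \ref{p1}: write the augmented Lagrangian of \eqref{opt2}, run a distributed-ADMM-type primal--dual scheme on it, and then collapse the dual variables into the single variable $\bm{Y}_{i,j,k}$. The only structural difference from the i.i.d. case is the coupled first term, which is handled by block coordinate descent inside each cluster.

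\textbf{Step 1 (removing the easy constraints).} The norm constraint $\|\bm{Z}_{i,k}\|_F^2=m_{i,k}$ is enforced by the normalization layer, so it is dropped from the Lagrangian. The replica constraint $\bm{Z}_{i'}\bm{Z}_{i'}^T=\bm{Z}_i\bm{Z}_i^T$ for $i'\in\tilde V_i$ is enforced by local parameter averaging, so it is also dropped. What remains is the per-class variance consensus over $\tilde{\mathcal{V}}_k$.

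\textbf{Step 2 (splitting and dualizing).} As in \eqref{opt1_2}, I introduce auxiliary variables $\bm{T}_{i,j,k}$ for each ordered pair $i\in\tilde{\mathcal{V}}$, $j\in\tilde{\mathcal{V}}_k\setminus\{i\}$, $k\in\mathcal{K}_i$. The constraints become $\bm{Z}_{i,k}\bm{Z}_{i,k}^T/m_{i,k}=\bm{T}_{i,j,k}$ and $\bm{Z}_{j,k}\bm{Z}_{j,k}^T/m_{j,k}=\bm{T}_{i,j,k}$, with multipliers $\bm{U}_{i,j,k},\bm{V}_{i,j,k}$ and penalty $\gamma$. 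The augmented Lagrangian is
\[
\textstyle \sum_s\big[-\frac{m^s}{2m}\log\det(\cdot)+R^c_s\big]+\sum\mathrm{tr}(\bm U^T(\cdot))+\mathrm{tr}(\bm V^T(\cdot))+\frac{\gamma}{2}\|\cdot\|_F^2 .
\]
Since the objective is a sum over clusters, it separates across clusters given the current consensus and dual quantities, which lets the clusters run in parallel.

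\textbf{Step 3 (the $\bm{T}$-step).} Minimizing over $\bm{T}_{i,j,k}$ gives the closed form
\[
\bm{T}_{i,j,k}=\tfrac12\Big(\tfrac{\bm Z_{i,k}\bm Z_{i,k}^T}{m_{i,k}}+\tfrac{\bm Z_{j,k}\bm Z_{j,k}^T}{m_{j,k}}\Big)+\tfrac{1}{2\gamma}(\bm U_{i,j,k}+\bm V_{i,j,k}).
\]
With zero initialization, the standard argument gives $\bm U_{i,j,k}+\bm V_{i,j,k}=\bm 0$ for all $t$. Then $\bm T$ is the midpoint, which produces the midpoint term in \eqref{R2_4}. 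Because the pair set is symmetric, the multipliers can be merged into $\bm Y_{i,j,k}=\bm U_{i,j,k}+\bm V_{j,i,k}=2\bm U_{i,j,k}$. The dual ascent steps then combine into \eqref{R2_1a}, where the step size $\rho$ absorbs the constant factors.

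\textbf{Step 4 (the $\bm{Z}$-step).} Substituting $\bm T$ and $\bm Y$ back leaves, for each cluster $s$, the function $\mathcal L^s$ in \eqref{R2_4}. The dual and penalty terms are separable across the nodes $i\in\tilde{\mathcal V}^s$, using the other nodes' previous iterates. The first $\log\det$ term, however, couples all $\bm Z_i$ with $i\in\tilde{\mathcal V}^s$, so an exact joint minimization is replaced by a Gauss--Seidel (block coordinate descent) sweep. Each node minimizes over its own block, using the already updated $(\bm Z_{i_1})^{(t)},\dots$ and the old $(\bm Z_{N^s})^{(t-1)}$ for the rest; this is exactly \eqref{R2_3}.

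The main obstacle is the bookkeeping in Step 3. One must verify that the pair set $\{(i,j,k)\}$ over $\tilde{\mathcal V}_k$ is symmetric, including pairs that cross clusters and pairs of virtual nodes. One must also check that counting each constraint from both endpoints is what turns $\gamma/2$ into $\gamma$ and yields the factor $2$ in $\bm Y=2\bm U$. I would check this by tracking a single pair $(i,j)$ through one iteration explicitly.
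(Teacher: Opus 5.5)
Your proposal is correct and follows essentially the same route as the paper's Appendix B: augmented Lagrangian with $\bm{T},\bm{U},\bm{V}$, closed-form $\bm{T}$-step with $\bm{U}_{i,j,k}+\bm{V}_{i,j,k}=\bm{0}$, merging the multipliers into $\bm{Y}_{i,j,k}=\bm{U}_{i,j,k}+\bm{V}_{j,i,k}$, splitting $\mathcal{L}=\sum_s\mathcal{L}^s$ across clusters, and block coordinate descent inside each cluster; you also supply the pair-set symmetry and $\gamma/2\to\gamma$ bookkeeping that the paper only covers by ``similar derivations''. Two small refinements: no rescaling of $\rho$ is needed, because $\bm{U}_{i,j,k}$ and $\bm{V}_{j,i,k}$ each receive the increment $\frac{\rho}{2}\big(\frac{\bm{Z}_{i,k}\bm{Z}_{i,k}^T}{m_{i,k}}-\frac{\bm{Z}_{j,k}\bm{Z}_{j,k}^T}{m_{j,k}}\big)$, so their sum moves by exactly $\rho$ times that difference. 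Also, besides the expansion term, $R^c_s$ couples nodes of the same cluster that share a class, which the Gauss--Seidel sweep handles in the same way.
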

\begin{proof}
    See Appendix B.
\end{proof}

In \eqref{R2_3}, the updating of primal variables are decoupled into distributed nodes through the blocked coordinate descent procedure (BCD) \cite{beck2013convergence, hong2017iteration}.
Specifically, in each coordinate descent step, the node receives the updated results from previous ones and then updates its local variables (model parameters).
Under this sequential way, the problem can be solved as shown in the right side of Fig. \ref{fig_alg3}.}

However, experiments with the local loss function (\ref{R2_4}) reveal that it is challenging for two nodes within the same cluster to achieve the same subspace for class k, if they both have the label $k$. 
This phenomenon results from the iterative nature of the process. In the experiments, the iterative updating process clearly shows that the expansion term (\ref{eqR2_2}) decreases quickly at the beginning, while the compression term (\ref{eqRc2}) decreases much more slowly and later, as depicted in Fig. \ref{fig_exp1} in Section \ref{Sec6}. This suggests that the expansion term takes priority during the update. In the local loss function (\ref{R2_4}), when two nodes share the same class $k$, the initial expansion term expands the space first, thus making it difficult for the two nodes to converge to the same subspace. 
To address this issue, we propose replacing the information on the common class $k$ from other nodes with local one, using the constraints ${\bm{Z}_{i,k}\bm{Z}_{i,k}^T}/{m_{i,k}} = {\bm{Z}_{j,k}\bm{Z}_{j,k}^T}/{m_{j,k}}, i,j\in\tilde{\mathcal{V}}_k$ for all $k=1,\dots, K$.
Specifically, if node $i$ and $j$ have the same label $k$ in cluster $s$, then in both (\ref{eqR2_2}) and (\ref{eqRc2}), $\frac{1}{S_j}\bm{Z}_{j,k}\bm{Z}_{j,k}^T$ is replaced by $\frac{m_{j,k}}{S_jm_{i,k}}\bm{Z}_{i,k}\bm{Z}_{i,k}^T$ in the updating of node $i$.

Thus, the global structural representation learning algorithm for non-i.i.d. data distributions can be summarized in Algorithm \ref{Alg3}. 
\begin{algorithm}
	\caption{\textbf{Distributed Representation Learning for non-i.i.d. Data Distribution}}\label{Alg3}
	\For{node $i\in\tilde{\mathcal{V}}$ in parallel}
	{
		\textbf{Initialize} the local parameters of encoder $\bm{\theta}_i$ and the dimension of the output feature $d$. $t=0, \bm{Y}_{i,j,k}^{(0)}=\bm{0}$ for $k\in\mathcal{K}_i$. \\
            \textbf{Obtain} $\bm{Z}_{i,k}^{(0)}\in\mathbb{R}^{d\times m_{i,k}}$ for $k\in\mathcal{K}_i$ with all of the training data samples.\\
            \textbf{Obtain} $\bm{V}_{i,k}^{(0)} = \bm{Z}_{i,k}^{(0)}{\bm{Z}_{i,k}^{(0)}}^T/m_{i,k}$ for $k\in\mathcal{K}_i$ and \textbf{transmit} $\bm{V}_{i,k}^{(0)}$ to the nodes with label $k$.\\
            \textbf{Transmit} $\bm{Z}_{i}^{(0)}{\bm{Z}_{i}^{(0)}}^T/S_i$ and corresponding $m_i/S_i$ to the nodes in the same cluster.\\
	}
	\While{not converge}
	{
		$t=t+1$\\
		\For{node $i\in\tilde{\mathcal{V}}$ in parallel}
		{
                \For{class $k=1,2,\dots, K$ in parallel}
		      {
			     \textbf{Update} $\bm{Y}_{i,j,k}^{(t)}$ according to (\ref{R2_1a}).
                }
            }
            
            \For{clusters $s=1,\dots, S$ in parallel}{
                \For{node $i\in\tilde{\mathcal{V}}^s$ in sequential}{
                    \For{inner step $t'=1,\dots, T'$}
    			{
    				\textbf{Update} $\bm{\theta}_i$ with gradient descent.\\
    			}
                    \textbf{Obtain} $\bm{Z}_{i,k}^{(t)}\in\mathbb{R}^{d\times m_{i,k}}$ for $k\in\mathcal{K}_i$ with all of the training data samples.\\
                    \textbf{Obtain} $\bm{Z}_{i}^{(t)}{\bm{Z}_{i}^{(t)}}^T/S_i$ and \textbf{transmit} to the latter nodes within the cluster.\\
                }
            }
            \For{node $i\in\mathcal{V}$ in parallel}{
                \If{$S_i>1$}{
                    Take average of the model parameters $\bm{\theta}_{i'}$ among the virtual nodes $i'\in\tilde{\mathcal{V}}_i$.\\
                    Assign the averaged model parameters $\overline{\bm{\theta}}_{i'}$ into the virtual nodes $i'\in\tilde{\mathcal{V}}_i$.\\
                }
            
            }
            \For{node $i\in\tilde{\mathcal{V}}$ in parallel}
		{
                    \textbf{Obtain} $\bm{V}_{i,k}^{(t)} = \bm{Z}_{i,k}^{(t)}{\bm{Z}_{i,k}^{(t)}}^T/m_{i,k}$ for $k\in\mathcal{K}_i$ and \textbf{transmit} $\bm{V}_{i,k}^{(t)}$ to the nodes with label $k$.\\
                }
	}
	\textbf{Output} the trained local encoders.
\end{algorithm}

{
\begin{remark}(Computational cost analysis.) In Algorithm \ref{Alg3}, in one iteration, the computation cost is consist of four parts: the updating of $\bm{Y}_{i,j,k}^{(t)}$, the forward computation and backward propagation, the computation of $\bm{V}_{i,k}^{(t)}$, the local averaging when $S_i>1$. We define the number of model parameters in node $i$ as $P_i=|\bm{\theta}_i|$. Then the computational cost of the fours steps are $\mathcal{O}(|\mathcal{N}_i^{(k)}|\cdot K_i\cdot d^2)$, $\mathcal{O}(T'P)$, $\mathcal{O}(d^2m_i)$, $\mathcal{O}(S_iP_i)$ respectively. Take sum of them and we can obtain the computational cost in each iteration for node $i$ as $\mathcal{O}(|\mathcal{N}_i|Kd^2+T'P+d^2m_i+S_iP_i)$. Typically we have $P_i\gg d^2$ and $T'\gg S_i$, so the training of the neural network takes the main role, among which the computation of $\log\det(\cdot)$ function costs the most. 

Compared to conventional algorithms such as D-SGD and its variants, the computational cost increases with the computation of $\log\det(\cdot)$ function, additional updating of $\bm{Y}_{i,j,k}^{(t)}$, $\bm{V}_{i,k}^{(t)}$ and the local averaging.
\end{remark}

\begin{remark}
(Communication cost analysis.) In each iteration, there are two different kinds of communication. The first one is the communication within each cluster as in Line 16 of Algorithm \ref{Alg3}, which is $\mathcal{O}(|\tilde{\mathcal{V}}^s|\cdot d^2)$. The second one is the label-wise communication as in Line 22 of Algorithm \ref{Alg3}, which is $\mathcal{O}(|\mathcal{K}_i|\cdot d^2)$. The local model averaging does not introduce any communication cost. Then for each (virtual) node, the communication cost is $\mathcal{O}(d^2 +|\mathcal{K}_i|\cdot d^2)=\mathcal{O}(|\mathcal{K}_i|\cdot d^2)$.

For non-i.i.d. data distributions, the proposed algorithm further reduces communication overhead by only exchanging class-conditional second-order representation statistics. The per-round communication complexity scales as $\mathcal{O}(|\mathcal{K}_i|\cdot d^2)$, which is significantly smaller than parameter-based methods with complexity $\mathcal{O}(P)$, especially when each node contains only a small subset of classes. 
\end{remark}}

%It is important to note that both the computational and communication costs may increase somewhat compared to the i.i.d. conditions. 
Ideally, to achieve the most efficient algorithm, the value of $S$ should be as large as possible to maximize parallelization, while the number of agents, $N^s$, should be minimized to reduce the sequential updating time in each cluster. However, increasing the value of $S$ may require more local replication, which leads to higher local computation costs and greater demands on local capacities. Therefore, there is a trade-off between the number of clusters and the capabilities of the local agents. Additionally, considering potential multi-hop communications, the pre-training cluster and replication algorithm should also take into account the network topology to minimize communication costs. However, optimizing this aspect is not the primary focus of this work and will be addressed in future research.

{
\begin{remark}
While the proposed framework demonstrates strong scalability and deployment potential, several practical considerations warrant further investigation. For example, the quadratic dependence on the representation dimension $d$ may become non-negligible if extremely high-dimensional embeddings are used. Exploring low-rank approximations or adaptive dimension selection could further enhance scalability. Additionally, {as discussed in Remarks 2 and 4, the communication cost increases with the dimension of embeddings and the number of classes. In real-world large-scale distributed systems, some compression techniques can be applied to further reduce the communication cost.}
\end{remark}}

\section{Theoretical Analysis}\label{Sec5}
For the i.i.d. conditions, we consider Assumption \ref{as1} on the number of data samples as follows.
\newtheorem{assumption}{Assumption}
\begin{assumption}\label{as1}
    For the number of data samples in each node $i$, we have $\frac{m_{i,k}}{m_i}=\frac{m_{j,k}}{m_j}$ for any $k\in\mathcal{K}$ and $j\in\mathcal{N}_i$.
\end{assumption}

Note that in practical implementations, Assumption \ref{as1} can be satisfied easily by data augmentation techniques in each agent, to ensure the same proportion.
Given Assumption \ref{as1}, we could have the following Theorem \ref{t1} for the optimal solution of the optimization problem (\ref{opt1}) in i.i.d. conditions.
\newtheorem{theorem}{Theorem}
\begin{theorem}\label{t1}
    (Property Guarantees) Under the large ambient dimension $d\ge\sum_{k=1}^Kd_k$, high coding precision $\epsilon^4<\min_{k\in\mathcal{K}}\{\frac{m_k}{m}\cdot\frac{d^2}{d_k^2}\}$ and Assumption \ref{as1}, if we constrain $\text{rank}(\bm{Z}_k)\le d_k$, then the optimal solution $\bm{Z}^*$ of (\ref{opt1}) among all nodes has the following properties.
    \begin{enumerate}
        \item{(Between-class discriminative within node) $(\bm{Z}_{i,k1}^*)^T\bm{Z}_{i,k2}^*=\bm{0}$ for different classes $k1$ and $k2$ in each node $i\in\mathcal{V}$, i.e., $\bm{Z}_{i,k1}^*$ and $\bm{Z}_{i,k2}^*$ lie in orthogonal subspaces.}
        \item{(Between-class discriminative among nodes) $(\bm{Z}_{i,k1}^*)^T\bm{Z}_{j,k2}^*=\bm{0}$ for different classes $k1$ and $k2$ between any two nodes $i$ and $j$, i.e., $\bm{Z}_{i,k1}^*$ and $\bm{Z}_{j,k2}^*$ lie in orthogonal subspaces.}
        \item{(Within-class diverse among nodes) For the output features in the same class among all nodes, each subspace achieves its maximal dimension where $\text{rank}(\bm{Z}_k^*)=d_k$. And the largest $d_k-1$ singular values of $\bm{Z}_k^*$ are equal.}
    \end{enumerate}
\end{theorem}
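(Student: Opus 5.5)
The plan is to show that, on the feasible set of (\ref{opt1}), the distributed objective coincides with the centralized MCR$^2$ objective (\ref{MCR2}). The properties then follow from the centralized optimality result of \cite{yu2020learning}.

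\textbf{Step 1: per-class covariances agree.} Fix a feasible $\bm{Z}$. By connectivity of $\mathcal{G}$, the second constraint gives a common matrix
\[
\bm{\Sigma}_k \triangleq \bm{Z}_{i,k}\bm{Z}_{i,k}^T/m_{i,k}
\]
for every node $i$. Hence $\bm{Z}_k\bm{Z}_k^T=\sum_i \bm{Z}_{i,k}\bm{Z}_{i,k}^T=m_k\bm{\Sigma}_k$. As in (\ref{eqRc1}), this gives $\sum_i R_i^c(\bm{Z}_i)=R^c(\bm{Z},\epsilon|\bm{\Pi})$.

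\textbf{Step 2: total covariances agree.} By Assumption \ref{as1}, the ratio $m_{i,k}/m_i=\pi_k$ is the same for all nodes. Therefore
\[
\bm{Z}_i\bm{Z}_i^T/m_i=\sum_k \pi_k\bm{\Sigma}_k
\]
is identical across nodes. This is exactly the equality condition of Jensen's inequality in (\ref{eqR1}), so $\sum_i R_i(\bm{Z}_i)=R(\bm{Z},\epsilon)$.

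\textbf{Step 3: reduction to the centralized problem.} On the feasible set, the objective of (\ref{opt1}) equals $\triangle R(\bm{Z},\bm{\Pi},\epsilon)$, which depends only on the $\bm{\Sigma}_k$'s. The normalization constraint $\|\bm{Z}_{i,k}\|_F^2=m_{i,k}$ implies $\text{tr}(\bm{\Sigma}_k)=1$, i.e., $\|\bm{Z}_k\|_F^2=m_k$, matching the constraint of the centralized MCR$^2$ theorem. Conversely, any centralized feasible point with class covariances $\bm{\Sigma}_k$ can be realized distributedly: give each node columns whose per-class second moment is $\bm{\Sigma}_k$, e.g.\ by scaling a square root of $\bm{\Sigma}_k$ and splitting it appropriately, possible whenever $m_{i,k}\ge d_k$. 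The rank constraint $\text{rank}(\bm{Z}_k)\le d_k$ is preserved. Hence the optimal values coincide, and the optimal $\bm{Z}^*$ of (\ref{opt1}) is an optimal solution of the centralized MCR$^2$ problem under the same $d$, $\epsilon$ and rank conditions.

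\textbf{Step 4: read off the properties.} Invoke Theorem A.6 of \cite{yu2020learning}: under $d\ge\sum_k d_k$ and $\epsilon^4<\min_k\frac{m_k}{m}\frac{d^2}{d_k^2}$, the optimal $\bm{Z}_k^*$ lie in mutually orthogonal subspaces, have rank $d_k$, and their largest $d_k-1$ singular values are equal. Properties 1 and 2 follow because orthogonality of $\bm{Z}_{k1}^*$ and $\bm{Z}_{k2}^*$ means $(\bm{Z}_{k1}^*)^T\bm{Z}_{k2}^*=\bm{0}$. Every node block $\bm{Z}_{i,k1}^*$ is a column subset of $\bm{Z}_{k1}^*$, so the cross products vanish for any pair of nodes $i,j$, including $i=j$. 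Property 3 is the rank and singular-value statement of the same theorem, applied to $\bm{Z}_k^*$.

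\textbf{Main obstacle.} The delicate point is Step 3. One must check that the constraint sets match: the centralized theorem's normalization is per class, and the distributed constraint implies it. One also needs the realizability argument, so that restricting to variance-consensus configurations does not exclude the centralized optimizer. If realizability fails, it suffices to argue one direction only. The objective equals $\triangle R$ on the feasible set, and the centralized optimum is attained by configurations with orthogonal, equal-spectrum $\bm{\Sigma}_k$, which can be split evenly across nodes. So the distributed optimum equals the centralized optimum, and every distributed optimizer is a centralized optimizer, giving the stated structure.
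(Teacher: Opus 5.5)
Your proposal is correct and takes essentially the paper's route. Variance consensus plus Assumption~\ref{as1} forces the Jensen equality in (\ref{eqR1}), so (\ref{opt1}) coincides with centralized MCR$^2$, and the three properties are read off from the centralized characterization. Your explicit checks of the per-class normalization and of the realizability direction are more careful than the paper's bare assertion of equivalence.

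The condition $m_{i,k}\ge d_k$ that your realizability step uses is genuinely necessary and is missing from the statement. Under consensus, $\bm{Z}_k\bm{Z}_k^T=(m_k/m_{i,k})\bm{Z}_{i,k}\bm{Z}_{i,k}^T$ for every node holding class $k$, so $\text{rank}(\bm{Z}_k)\le\min_i m_{i,k}$, and property 3 fails if some $m_{i,k}<d_k$. For the same reason, your ``one direction only'' fallback does not get rid of this condition: splitting the centralized optimizer evenly across nodes needs exactly that inequality.
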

\begin{proof}
    See Appendix C.
\end{proof}

Theorem \ref{t1} guarantees the properties of the optimal solution of (\ref{opt1}). After the collaborative training, the embeddings of the distributed data are promoted into multiple independent subspaces, where the features are distributed isotropically within the subspaces.
To provide the convergence guarantee of Algorithm \ref{Alg1}, we additionally make the following assumptions for the local learning steps.

\begin{assumption}\label{as2}
    (Smoothness) In the first iteration, given the dual variable $\bm{Y}_i^{(0)}$, the local objective functions $\mathcal{L}_i(\bm{Y}_i^{(0)}, \bm{\theta}_i)$ are $L_0$-smooth for $\bm{\theta}_i$ of each node $i\in\mathcal{V}$.
\end{assumption}

\begin{assumption}\label{as3}
    (Local Gradient) The local gradient estimator is unbiased in the $t'$-th inner gradient descent step. Furthermore, there exists scalar $\kappa>0$ such that the variance of the local gradient estimator is bounded by $\kappa$.
\end{assumption}

Consequently, the convergence guarantees of Algorithm \ref{Alg1} can be provided under Theorem \ref{t2}. Theorem \ref{t2} characterizes the convergence of the model parameters given a small penalty parameter $\gamma$ and updating step size $\rho$ of the dual variable. However, it should be noted that Theorem \ref{t2} does not necessarily guarantee the algorithm converges to the optimal solution satisfying K.K.T. conditions. This is mainly due to the non-convexity of the neural network and the SGD-based updating rule of the primal variable.
\begin{theorem}\label{t2}
    (Convergence Guarantees) Define $L=max_{t\in[T]}L_t$. Provided that the learning rate $\eta<\frac{1}{4T'L}$ and $(4\rho+\gamma)$ is small enough such that there exist some scalar $c'$ where $(4\rho+\gamma)\le c'\eta^2$, under Assumption \ref{as2} and \ref{as3}, the iterates of Algorithm \ref{Alg1} satisfy the following inequality:
    $$
    \min_{t\in[T]} \mathbb{E}[\|\nabla \mathcal{L}(\bm{Y}^{(t+1)},\theta^{(t)})\|_2^2]\le \frac{\mathcal{L}_0-\mathcal{L}_*}{cTT'\eta}+\frac{\eta NE}{c},
    $$
    where $0<c<\frac{1}{2}-8\eta^2T'^2L^2$, $E=\frac{T'L}{2}(1+4\eta T'L)\kappa^2+c'K|\mathcal{N}_i|$. If we properly choose $\eta=\frac{c''}{\sqrt{TT'}}$ such that $\eta<\frac{1}{4T'L}$, then $L$ is a limited scalar and the convergence rate for Algorithm \ref{Alg1} is $\mathcal{O}(\frac{1}{\sqrt{TT'}})$.
\end{theorem}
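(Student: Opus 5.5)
We prove Theorem \ref{t2} with a standard nonconvex SGD descent argument on the augmented Lagrangian, treating the dual drift as a bounded perturbation. We view the outer iteration $t$ as two moves. First comes a dual step (\ref{R1_2a}) that changes $\bm{Y}^{(t)}$. Then come $T'$ stochastic gradient steps on $\bm{\theta}_i$ for $\mathcal{L}'(\bm{Y}_i^{(t)},f_i(\bm{\theta}_i,\bm{X}_i))$. The global potential is $\mathcal{L}(\bm{Y},\bm{\theta})=\sum_i\mathcal{L}'_i$. We will telescope this potential over $t=1,\dots,T$, bounding the increase caused by each dual update and the decrease caused by each primal inner loop.

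\textbf{Step 1 (inner-loop descent).} Fix $t$ and use $L_t$-smoothness of $\mathcal{L}(\bm{Y}^{(t+1)},\cdot)$. This is Assumption \ref{as2} extended to later iterations, with $L=\max_t L_t$. Apply the descent lemma to each inner step. Then bound the drift $\sum_{t'}\mathbb{E}\|\bm{\theta}^{(t,t')}-\bm{\theta}^{(t,0)}\|^2$ as in local-SGD analyses:
\begin{align*}
\sum_{t'}\mathbb{E}\|\bm{\theta}^{(t,t')}-\bm{\theta}^{(t,0)}\|^2\lesssim \eta^2T'^2\big(\|\nabla\mathcal{L}\|^2+\kappa^2\big).
\end{align*}
This bound uses Assumption \ref{as3} for unbiasedness and bounded variance, and it uses $\eta<1/(4T'L)$ to absorb the recursive term, which yields the factor $1+4\eta T'L$. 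The result is
\begin{align*}
\mathbb{E}\mathcal{L}(\bm{Y}^{(t+1)},\bm{\theta}^{(t+1)})\le \mathbb{E}\mathcal{L}(\bm{Y}^{(t+1)},\bm{\theta}^{(t)})-cT'\eta\,\mathbb{E}\|\nabla\mathcal{L}(\bm{Y}^{(t+1)},\bm{\theta}^{(t)})\|^2+\eta^2T'^2\tfrac{L}{2}(1+4\eta T'L)N\kappa^2,
\end{align*}
with $c<\tfrac12-8\eta^2T'^2L^2$.

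\textbf{Step 2 (dual/penalty perturbation).} We need to bound $\mathcal{L}(\bm{Y}^{(t+1)},\bm{\theta}^{(t)})-\mathcal{L}(\bm{Y}^{(t)},\bm{\theta}^{(t)})$. The Lagrangian is linear in $\bm{Y}$, so the change equals $\rho\sum_{i,j,k}\|\bm{V}_{i,k}-\bm{V}_{j,k}\|_F^2$ plus the change in the proximal anchor term weighted by $\gamma$. Here $\bm{V}_{i,k}=\bm{Z}_{i,k}\bm{Z}_{i,k}^T/m_{i,k}$.

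The normalization constraint $\|\bm{Z}_{i,k}\|_F^2=m_{i,k}$ gives $\operatorname{tr}\bm{V}_{i,k}=1$ with $\bm{V}_{i,k}\succeq0$. Hence $\|\bm{V}_{i,k}\|_F\le1$ and each squared difference is at most $4$. Summing over $j\in\mathcal{N}_i$ and $k$ bounds the perturbation by roughly $(4\rho+\gamma)K|\mathcal{N}_i|N$. The hypothesis $(4\rho+\gamma)\le c'\eta^2$ turns this into $c'\eta^2K|\mathcal{N}_i|N$, which is the second part of $E$ after scaling by $T'$.

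\textbf{Step 3 (telescoping).} Add the bounds from Steps 1 and 2, sum over $t=1,\dots,T$, and use $\mathcal{L}\ge\mathcal{L}_*$. The $\log\det$ terms are bounded below because $\bm{Z}$ is normalized, and the penalty terms are nonnegative. Divide by $cTT'\eta$ and bound the average by the minimum to obtain the stated inequality. Choosing $\eta=c''/\sqrt{TT'}$ balances the two terms and gives $\mathcal{O}(1/\sqrt{TT'})$.

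The main obstacle is Step 2 together with the smoothness constant. Changing $\bm{Y}$ alters the landscape in $\bm{\theta}$, so $L_t$ can vary with $t$. That is why the statement defines $L=\max_tL_t$ and simply assumes it is finite. We will take this as given and argue it is finite because the $\bm{Y}$ grow at most linearly with bounded increments of size $\rho$. The delicate part is making the dual drift scale as $\eta^2$ rather than $\eta$; the assumption $(4\rho+\gamma)\le c'\eta^2$ is exactly what allows that.
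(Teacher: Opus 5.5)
Your proposal follows the paper's proof essentially step for step. It uses the same split of each outer iteration into a dual/anchor change at fixed $\bm{\theta}^{(t)}$ and a $T'$-step local-SGD descent at fixed $\bm{Y}^{(t+1)}$; the paper cites a local-SGD lemma from prior work for the latter, with $L_t$-smoothness propagated from Assumption~\ref{as2}. It obtains the same bound $(4\rho+\gamma)K|\mathcal{N}_i|$ on the dual change from $\|\bm{Z}_{i,k}\|_F^2=m_{i,k}$, and uses the same telescoping under $(4\rho+\gamma)\le c'\eta^2$.

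One caveat, which the paper shares: moving the anchor $\bm{V}_{j,k}^{(t-1)}\to\bm{V}_{j,k}^{(t)}$ also changes the linear term by $\mathrm{tr}[{\bm{Y}_{i,j,k}^{(t)}}^T(\bm{V}_{j,k}^{(t-1)}-\bm{V}_{j,k}^{(t)})]$. That term is only $\mathcal{O}(\|\bm{Y}^{(t)}\|_F)=\mathcal{O}(\rho t)$ per iteration, so it should not be dropped; summation by parts over $t$ bounds its total by $\mathcal{O}(\rho T)$, which is harmless when $\rho\le c'\eta^2$.
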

\begin{proof}
    See Appendix D.
\end{proof}

{
\begin{remark}
    The conditions in Theorem \ref{t2} are standard in distributed learning, which can be satisfied in practice through appropriate hyperparameter selection. In our implementation, mini-batch training with Adam provides unbiased stochastic gradients at the sampling level with bounded variance due to the finite dataset, while the objective function is locally smooth for neural network training. The hyperparameters are chosen within stable ranges ensuring that the required bounds on $\eta$ and $(4\rho+\eta)$ are met. The observed stable convergence behavior in our experiments further confirms that these theoretical conditions can be satisfied in practice.
\end{remark}}

For the non-i.i.d. conditions, we make the following Assumption \ref{as4} on the number of data samples among different clusters as follows. 
In the $s$-th cluster, the number of samples corresponding to class $k$ is denoted by $m^s_k$. Note that $m^s_k$ may consist of data samples from multiple nodes.

\begin{assumption}\label{as4}
    Among any different clusters $s$ and $s'$, the proportion corresponding to the $k$-th class satisfies $\frac{m_{k}^{s}}{m^{s}}=\frac{m_{k}^{s'}}{m^{s'}}$ for any $k\in\mathcal{K}$, where ${m_{k}^{s}}=\sum_{i\in\tilde{\mathcal{V}}^s_k}\frac{m_{i,k}}{S_i}$ and $m^{s}=\sum_{k\in\mathcal{K}}{m_{k}^{s}}$.
\end{assumption}

Note that Assumption \ref{as4} can also be satisfied easily by data augmentation techniques in each node.
Given Assumption \ref{as4}, we could have the following Theorem \ref{t3} for the optimization problem (\ref{opt2}) in non-i.i.d. conditions.
\begin{theorem}\label{t3}
    
    Under the large ambient dimension $d\ge\sum_{k=1}^Kd_k$, high coding precision $\epsilon^4<\min_{k\in\mathcal{K}}\{\frac{m_k}{m}\cdot\frac{d^2}{d_k^2}\}$ and Assumption \ref{as2}, if we further constrain $\text{rank}(\bm{Z}_k)\le d_k$, then the optimal solution $\bm{Z}^*$ of (\ref{opt2}) among all nodes has the same properties as those in Theorem \ref{t1}.
    
\end{theorem}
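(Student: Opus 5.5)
The plan is to reduce the problem in (\ref{opt2}) to the centralized MCR$^2$ problem (\ref{MCR2}) on the feasible set. Then the characterization of optimal solutions from \cite{yu2020learning}, which is also the engine behind Theorem \ref{t1}, can be invoked directly. Note that the statement cites Assumption \ref{as2}, but the data-proportion condition actually needed is Assumption \ref{as4}; I will use Assumption \ref{as4} in the role played by Assumption \ref{as1} in Theorem \ref{t1}.

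\textbf{Step 1: the compression term.} On the feasible set of (\ref{opt2}), the second constraint forces $\bm{Z}_{i,k}\bm{Z}_{i,k}^T/m_{i,k}=\bm{\Sigma}_k$ for every $i\in\tilde{\mathcal{V}}_k$. The third constraint together with the $1/S_i$ weighting ensures each original agent is counted once, so
\[
\sum_{s=1}^S\sum_{i\in\tilde{\mathcal{V}}_k^s}\frac{1}{S_i}\bm{Z}_{i,k}\bm{Z}_{i,k}^T=m_k\bm{\Sigma}_k=\bm{Z}_k\bm{Z}_k^T .
\]
Hence, by the chain of equalities in (\ref{eqRc2}), the sum of the $R^c_s$ terms equals $R^c(\bm{Z},\epsilon|\bm{\Pi})$ exactly.

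\textbf{Step 2: the expansion term.} Within cluster $s$ I will write
\[
\frac{1}{m^s}\sum_{i\in\tilde{\mathcal{V}}^s}\frac{1}{S_i}\bm{Z}_i\bm{Z}_i^T=\sum_k\frac{m_k^s}{m^s}\bm{\Sigma}_k .
\]
By Assumption \ref{as4}, the weights $m_k^s/m^s$ do not depend on $s$, so this matrix is the same for all clusters and equals $\bm{Z}\bm{Z}^T/m$. The Jensen step in (\ref{eqR2_2}) is then tight, and the first term of (\ref{opt2}) equals $R(\bm{Z},\epsilon)$. Consequently, on the feasible set the objective of (\ref{opt2}) coincides with $\triangle R(\bm{Z},\bm{\Pi},\epsilon)$, restricted to $\bm{Z}$ whose per-class second moments agree across (virtual) nodes.

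\textbf{Step 3: optimality.} Under $d\ge\sum_k d_k$, $\epsilon^4<\min_k \frac{m_k}{m}\frac{d^2}{d_k^2}$, the normalization $\|\bm{Z}_k\|_F^2=m_k$ (which follows from the per-node constraint) and $\mathrm{rank}(\bm{Z}_k)\le d_k$, the centralized MCR$^2$ theorem gives the following. Every maximizer has mutually orthogonal class subspaces, $\mathrm{rank}(\bm{Z}_k^*)=d_k$, and equal top $d_k-1$ singular values.

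It remains to check that the relaxed feasible set still contains such a maximizer, so that the restricted and unrestricted maxima coincide and restricted optima inherit the properties. I will build one explicitly: pick orthonormal bases $\bm{U}_k\in\mathbb{R}^{d\times d_k}$ spanning mutually orthogonal subspaces, and choose each node's class-$k$ block $\bm{Z}_{i,k}=\bm{U}_k\bm{Q}_{i,k}$ with $\bm{Q}_{i,k}\bm{Q}_{i,k}^T/m_{i,k}$ equal to a common diagonal matrix carrying the optimal spectrum. Replicas get identical blocks. Each column is then normalized so the Frobenius constraints hold; this is possible because the equal-singular-value spectrum has trace $m_k$.

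\textbf{Step 4: transfer to nodes.} Since $\bm{Z}_k^*=[\bm{Z}_{i,k}^*]_{i}$, orthogonality $(\bm{Z}_{k1}^*)^T\bm{Z}_{k2}^*=\bm{0}$ yields $(\bm{Z}_{i,k1}^*)^T\bm{Z}_{j,k2}^*=\bm{0}$ for all nodes $i,j$, including $i=j$. This gives properties 1 and 2, and property 3 is inherited directly.

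The main obstacle is Step 3. The MCR$^2$ optimality theorem is stated for the unconstrained feasible set, and the extra equality constraints could in principle exclude all of its maximizers. The explicit construction must therefore handle nodes with very few samples of class $k$: if $m_{i,k}<d_k$, a common rank-$d_k$ covariance is impossible. I would first try to sidestep this by noting that the theorem only requires the rank of the stacked $\bm{Z}_k$, so it suffices to require $\mathrm{rank}\le d_k$ per node with a shared covariance. Failing that, I would add a mild sample-size condition $m_{i,k}\ge d_k$, implicitly assumed as in Theorem \ref{t1}.
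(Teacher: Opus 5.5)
Your argument is essentially the paper's: the constraints of (\ref{opt2}) plus the class-proportion assumption make the Jensen step in (\ref{eqR2_2}) tight, so (\ref{opt2}) coincides with (\ref{MCR2}) on its feasible set, and the MCR$^2$ characterization of \cite{yu2020learning} is then invoked as in Theorem \ref{t1}; like you, read the paper's (and the statement's) citation of Assumption \ref{as2} as Assumption \ref{as4}. Your Step 3 check that the consensus-restricted feasible set contains an unrestricted maximizer is needed, and the paper silently skips it; but your proposed ``sidestep'' cannot work, because a shared $\bm{\Sigma}_k$ has rank at most $\min_{i\in\tilde{\mathcal{V}}_k} m_{i,k}$, so your fallback condition $m_{i,k}\ge d_k$ is genuinely required (without it, property 3 fails for the optimum of (\ref{opt2})).
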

\begin{proof}
    Under Assumption \ref{as2} and given the constraints in (\ref{opt2}), it is easily to derive that $\frac{1}{m_s}\sum_{k=1}^K\bm{Z}_{i_k^s,k}\bm{Z}_{i_k^s,k}^T$ is equal among different clusters for any $k$. Then the equality in (\ref{eqR2_2}) holds and the optimization problem (\ref{opt2}) is equivalent to the original one. Then Theorem \ref{t2} can be proved.
\end{proof}

\section{Numerical Experiments}\label{Sec6}
In this section, we evaluate the performance of the proposed algorithms under i.i.d. and non-i.i.d. settings. Specifically, in each setting, we consider two datasets MNIST and CIFAR-10, which are typical 10-classification problems. 
For the MNIST dataset, the model architectures among nodes are identical. 
On the other hand, in the CIFAR-10 dataset, the models among nodes are different, to better show the proposed representation learning algorithms do not require the same architecture.
{
In each experiment, we present the loss curves to illustrate the convergence behavior of the proposed algorithms, providing complementary empirical support for Theorem \ref{t2}. To further validate the properties established in Theorems \ref{t1} and \ref{t3}, we analyze the learned representations at convergence and compare them in terms of cosine similarity and singular value spectra. Finally, in the last subsection, we evaluate several performance metrics of the proposed algorithms against pre-trained representation learning models.

We consider $7$ different baseline methods for comparison: three conventional methods including Centralized MCR$^2$ with centralized datasets, D-SGD, Independent MCR$^2$ where each node independently updates its local model with MCR$^2$ principle; four state-of-the-art federated representation learning algorithms including FedU$^2$ \cite{liao2024rethinking}, FedSimCLR \cite{chen2020simclr}, FedU \cite{zhuang2021collaborative} and Orchestra \cite{lubana2022orchestra}. In these SOTA federated representation learning algorithms, we assume there is one central server in order to show the learned representations' properties. If not specified, the experimental settings of these baseline methods are aligned with the original papers.}

\subsection{Experiments with i.i.d. Setting}
In this subsection, we show the convergence curves of the proposed algorithm and properties of the learned representations in i.i.d. data distributions. 
The datasets are evenly split among the nodes.
The dimension of the representation is set to $128$.
The communication network topology is randomly generated using the Erdos\_Renyi graph model, with the number of nodes $N=10$ and a connectivity probability $p=0.5$. 

\begin{figure}[!htp]
    \centering
    \includegraphics[width=0.5\textwidth]{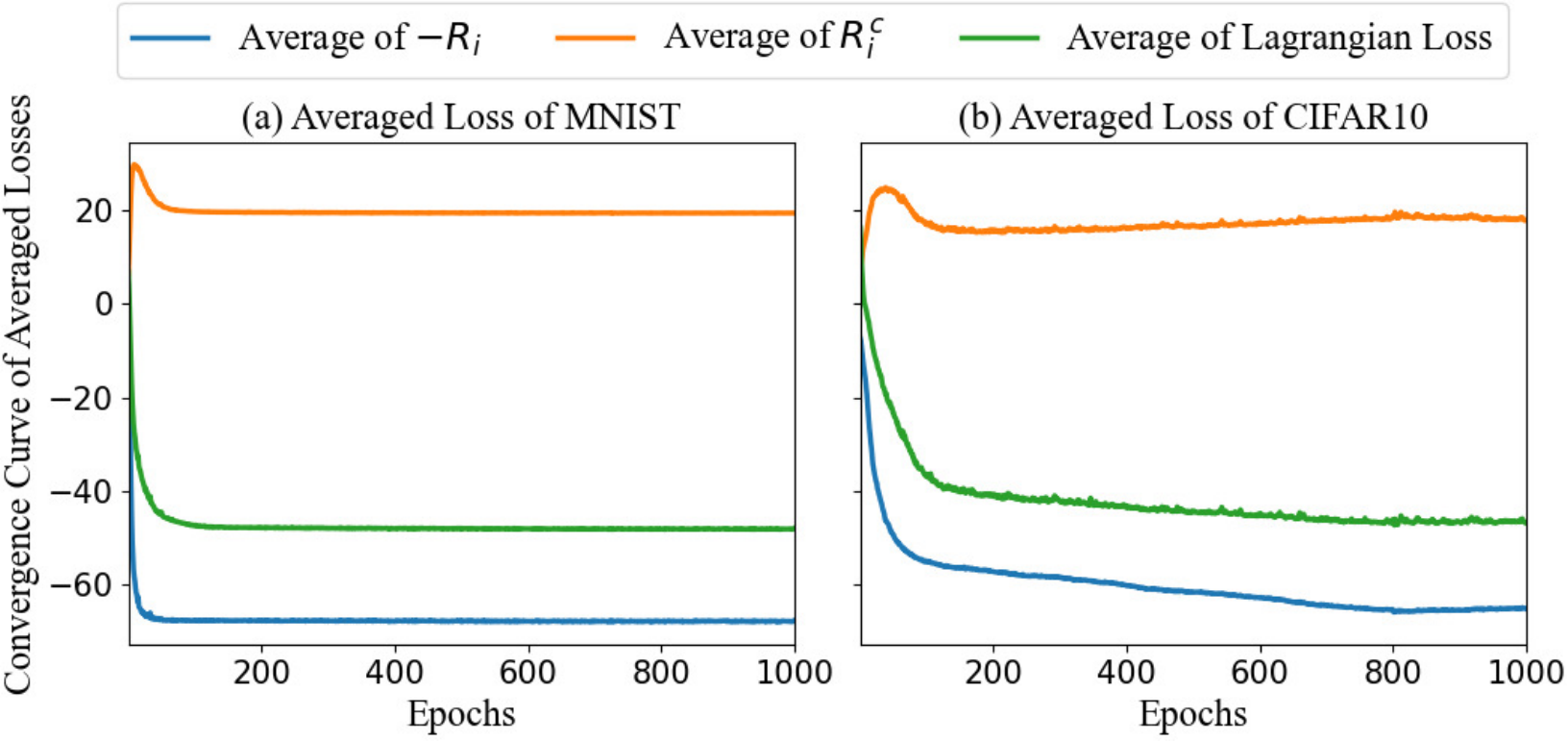}
    \caption{Convergence curves of the averaged loss.}
    \label{fig_exp1}
    \vspace{-0.3cm}
\end{figure}

\textbf{MNIST}. 
We first consider the same model architecture among nodes and the MNIST dataset.
The neural network in each node consists of a $4$ convolutional layers, followed by one flattened layer and one linear layer to map the output to the size of $128$. The normalization operation is added after the output to satisfy the first constraint in (\ref{opt1_2}).
We set the initial learning rate to $0.1$ with a weight decay of $10^{-5}$ and use the Adam optimizer. The mini-batch size is set to $1000$ and the precision parameter $\epsilon^2=0.5$.
In each iteration, to approximate the optimal solution of (\ref{R1_2c}), each node conducts $5$ local epochs. The step size for $\bm{Y}$'s update is set to $\rho=0.1$ and the penalty parameter $\gamma$ is set to $1.0$. 
The convergence curves of the averaged training loss for $1000$ iterations are shown in Fig. \ref{fig_exp1}(a), where the orange and blue curves represent the first and second term in (\ref{opt1_2}). The green curve shows the reduction of the total loss in each node.

It can be observed in Fig. \ref{fig_exp1}(a) that after an initial stage, $-R_i$ and $R_i^c$ both decrease, indicating that in each node, the learned features $\bm{Z}_i$ are expanding as a whole while each class $Z_{i,k}$ is being compressed. Meanwhile, the total loss of the augmented Lagrangian function decreases with the additional two terms in (\ref{lagran1_2}), which describes the correlation among the representations from different nodes. 

Then we plot the cosine similarity of the learned representations and compare the results with the baseline methods. For D-SGD, we take the outputs of the second last layer as the learned representations.
{For the SOTA methods based on federated learning framework, we use the global model in the server as the encoder to output the representations. The experimental settings are same as those in the original papers.
The results can be shown in Fig. \ref{fig_exp2}, visualizing the pairwise cosine similarity matrix of the learned representations, where samples are sorted by class label. Block structures along the diagonal indicate intra-class similarity, and black lines denote class boundaries.}

\begin{figure*}[htbp]
\centering
\subfigure[Proposed Algorithm 1]{
\begin{minipage}[t]{0.25\textwidth}
\centering
\includegraphics[width=\textwidth]{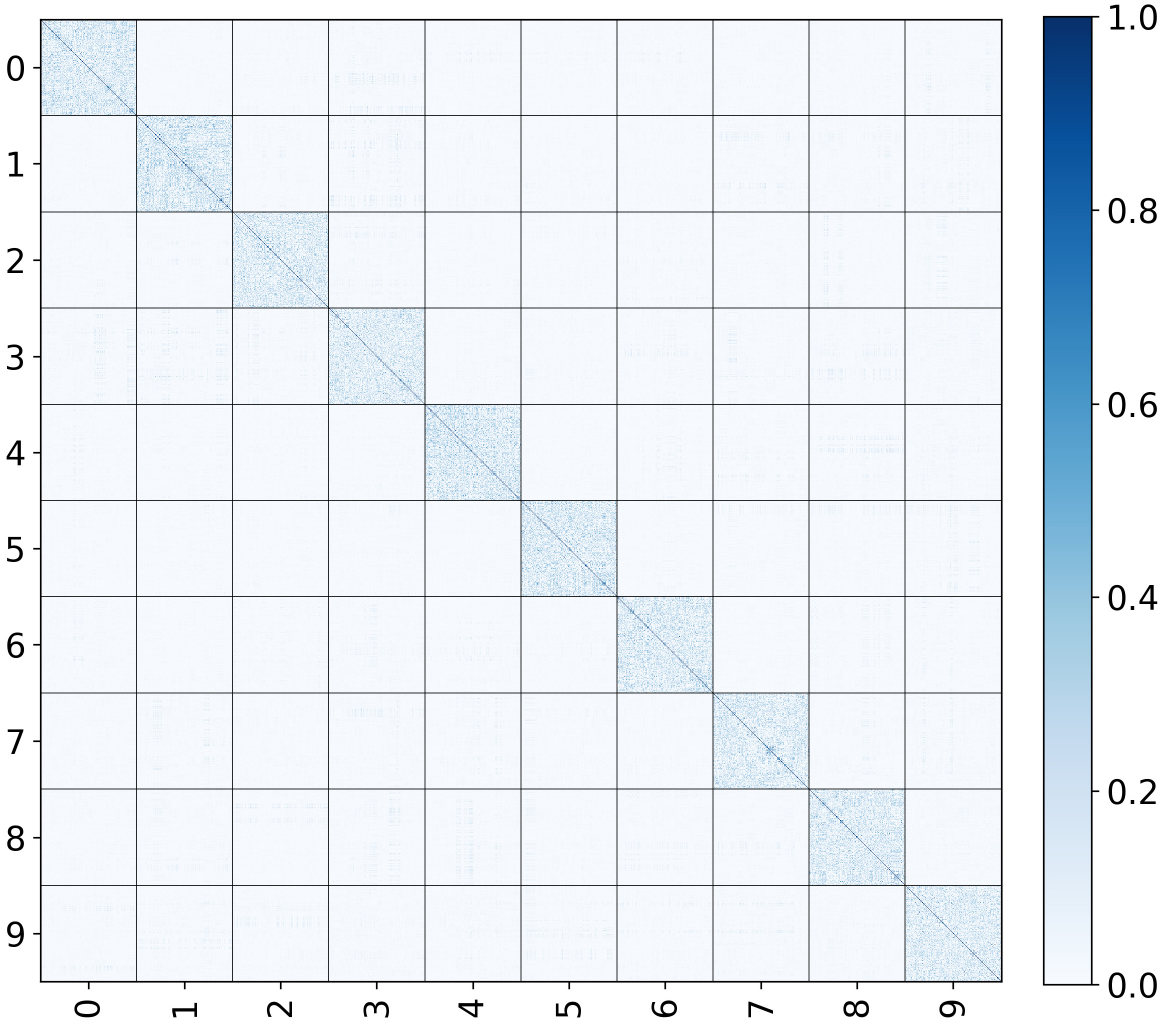}
%\caption{fig1}
\end{minipage}%
}%
\subfigure[Centralized MCR$^2$]{
\begin{minipage}[t]{0.25\textwidth}
\centering
\includegraphics[width=\textwidth]{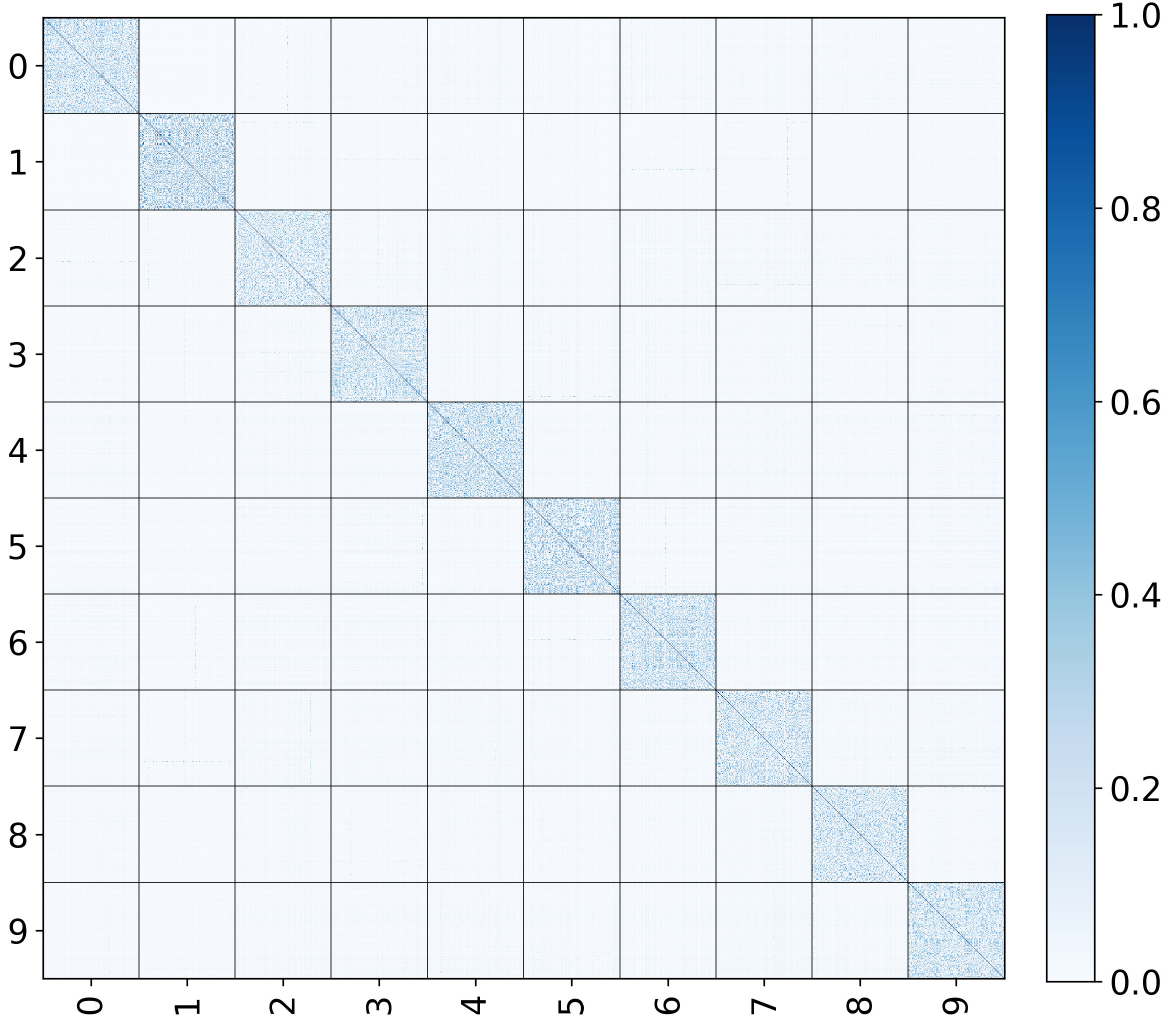}
%\caption{fig2}
\end{minipage}%
}%
\subfigure[D-SGD]{
\begin{minipage}[t]{0.25\textwidth}
\centering
\includegraphics[width=\textwidth]{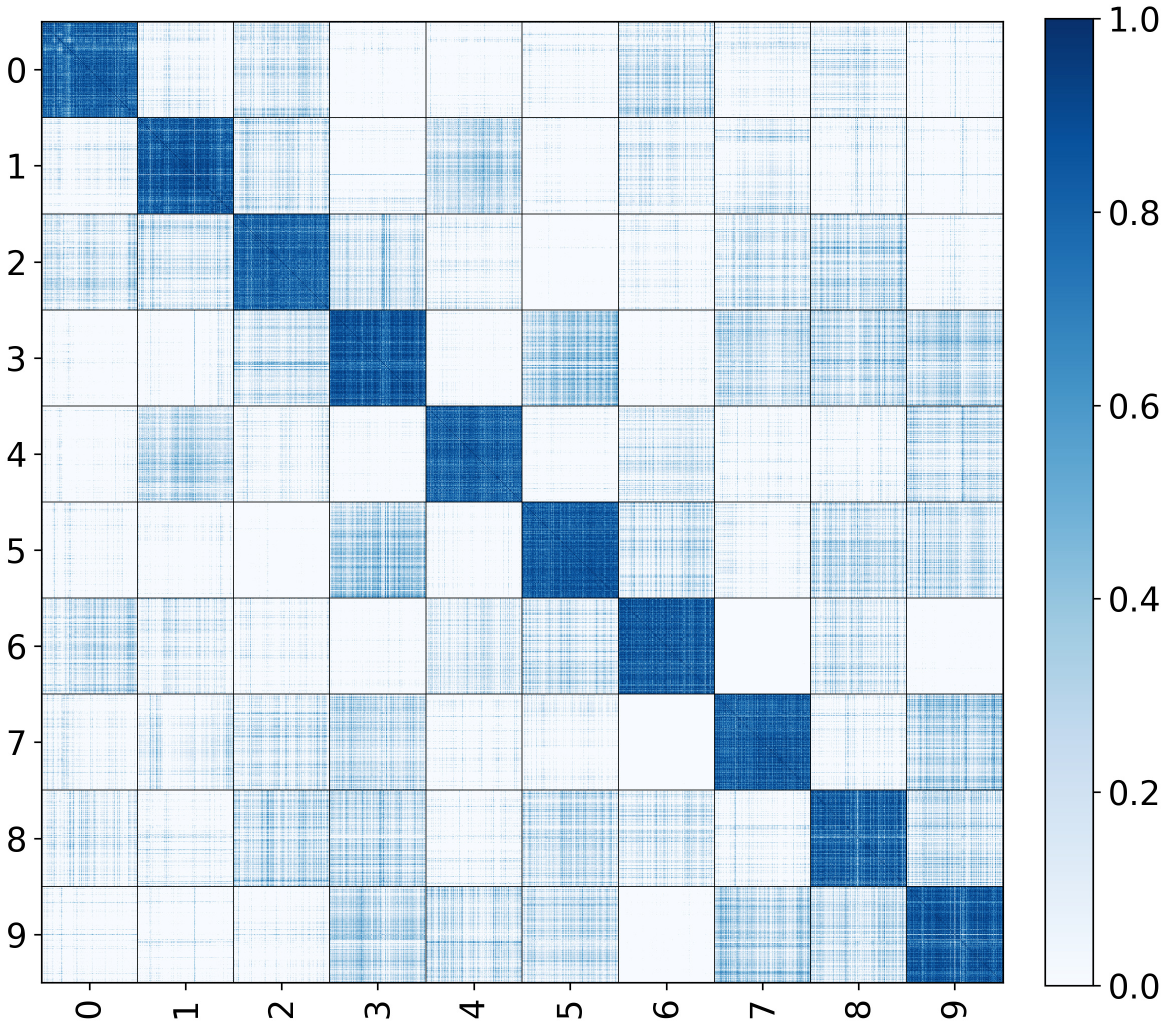}
%\caption{fig2}
\end{minipage}
}%
\subfigure[Independent MCR$^2$]{
\begin{minipage}[t]{0.25\textwidth}
\centering
\includegraphics[width=\textwidth]{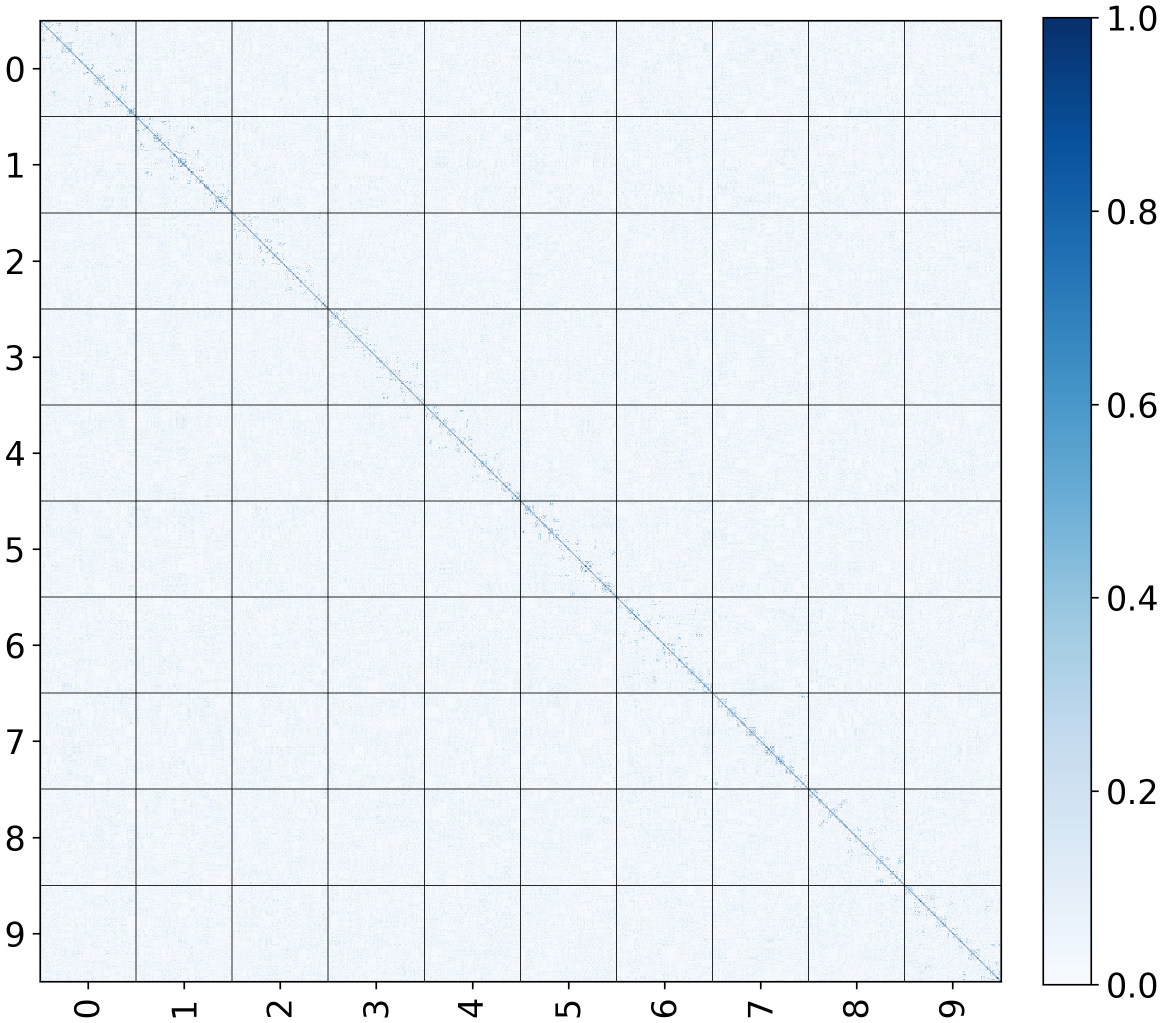}
%\caption{fig2}
\end{minipage}
}%

\subfigure[FedU$^2$]{
\begin{minipage}[t]{0.25\textwidth}
\centering
\includegraphics[width=\textwidth]{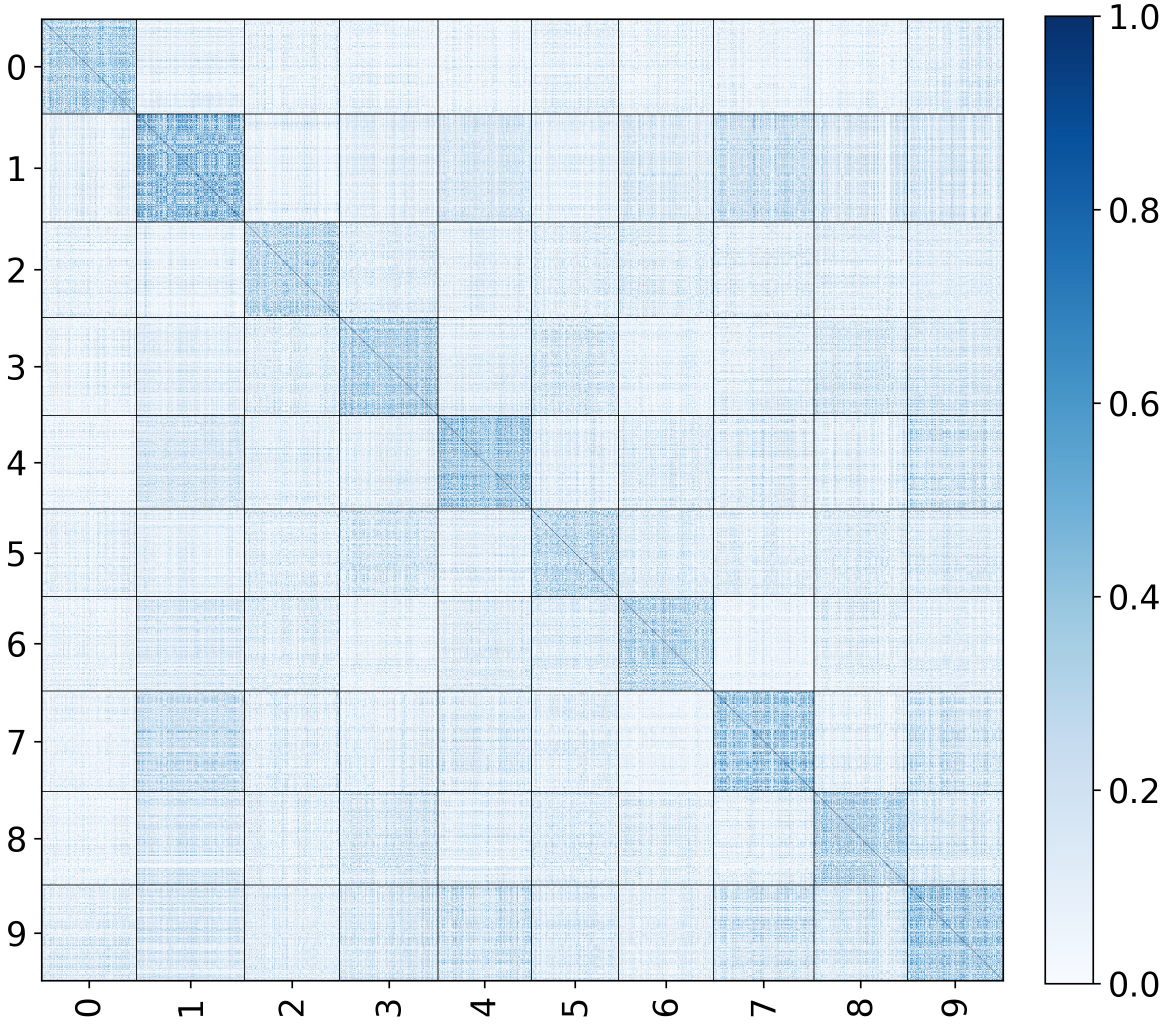}
%\caption{fig1}
\end{minipage}%
}%
\subfigure[FedSimCLR]{
\begin{minipage}[t]{0.25\textwidth}
\centering
\includegraphics[width=\textwidth]{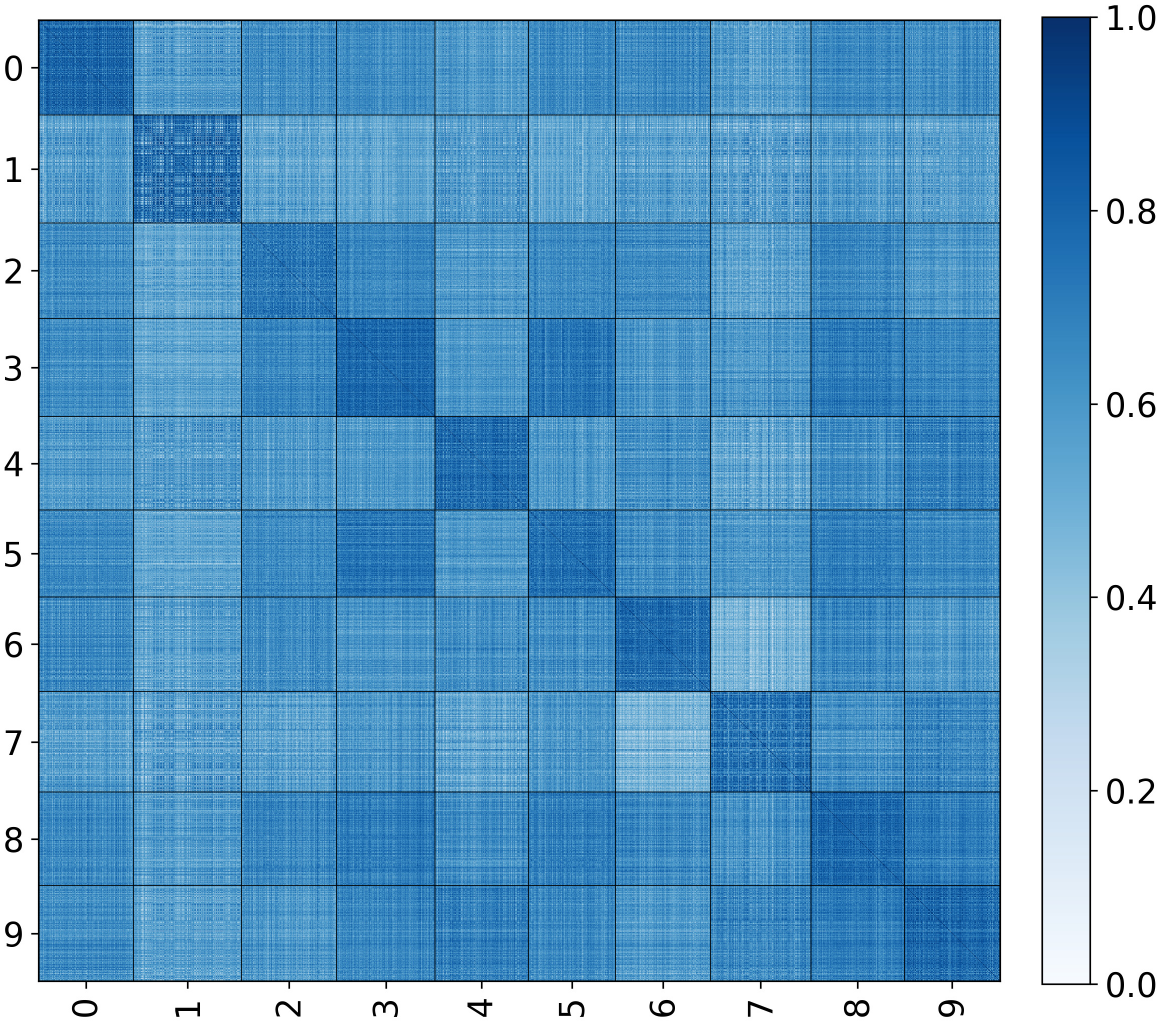}
%\caption{fig2}
\end{minipage}%
}%
\subfigure[FedU]{
\begin{minipage}[t]{0.25\textwidth}
\centering
\includegraphics[width=\textwidth]{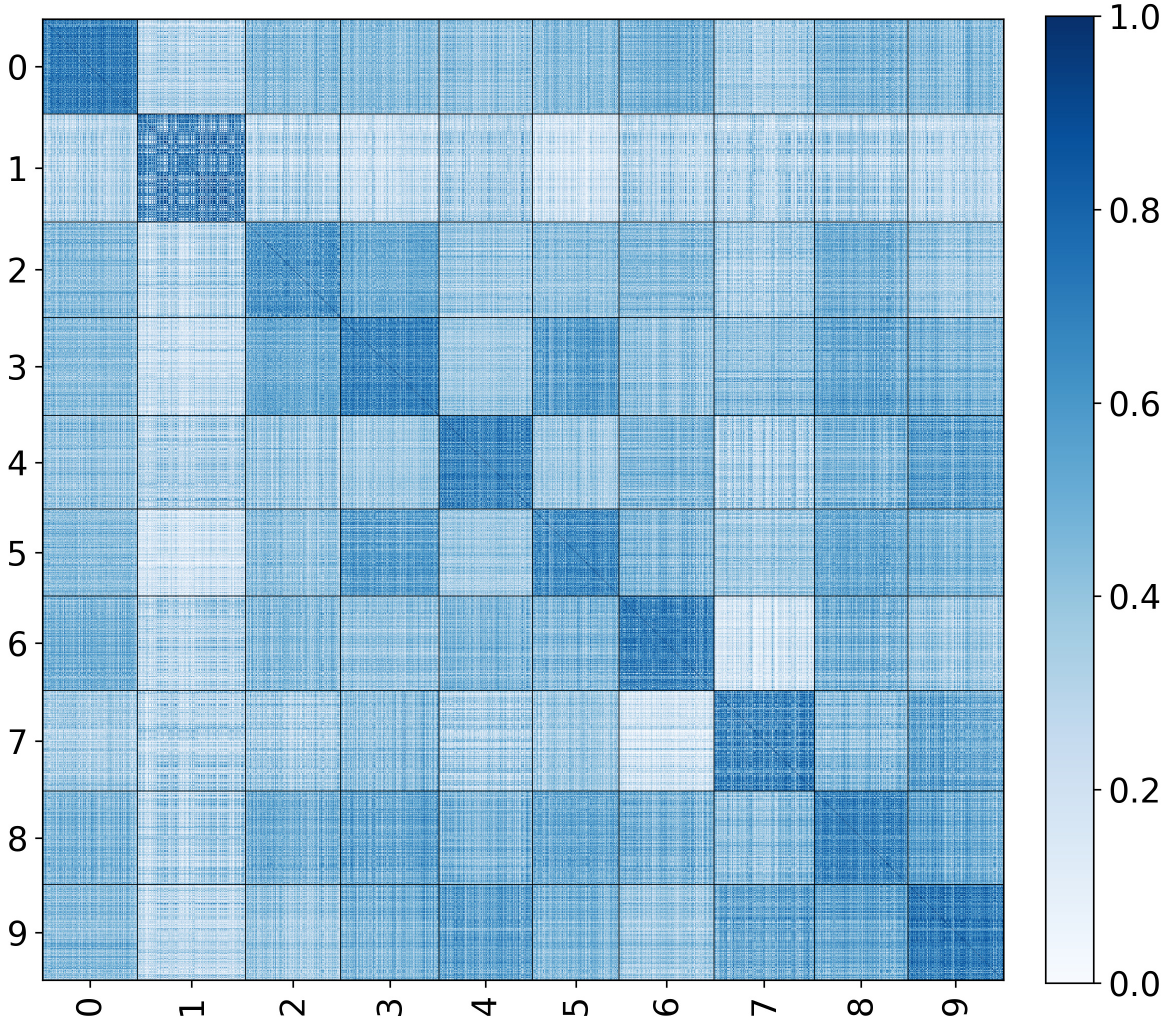}
%\caption{fig2}
\end{minipage}
}%
\subfigure[Orchestra]{
\begin{minipage}[t]{0.25\textwidth}
\centering
\includegraphics[width=\textwidth]{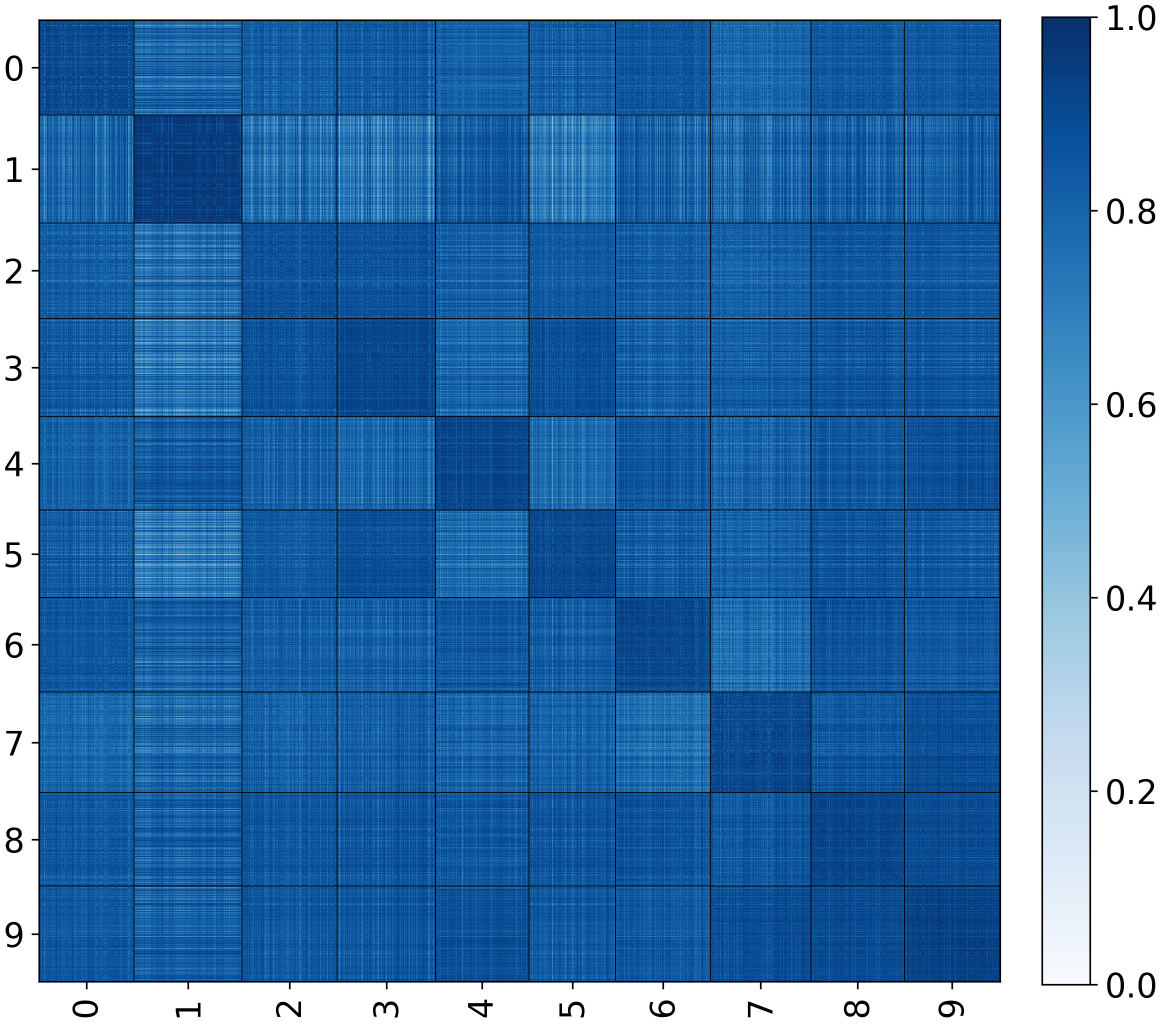}
%\caption{fig2}
\end{minipage}
}%
\vspace{-0.2cm}
\centering
\caption{Cosine similarity between learned representations for MNIST under i.i.d. data distribution by using different algorithms.}\label{fig_exp2}
\vspace{-0.6cm}
\end{figure*}

As shown in Fig. \ref{fig_exp2}(a), the representation of different classes are nearly orthogonal when using the proposed distributed Algorithm \ref{Alg1}. In comparison with Fig. \ref{fig_exp2}(b), the equivalence of the optimization problems—between our formulation and the original one-can be verified by the similar resulting structure. In Fig. \ref{fig_exp2}(c), D-SGD demonstrates performance comparable to centralized conventional methods with a cross-entropy loss function. Here, it is evident that the representations within each class are strongly correlated, as indicated by the dark blue color blocks, reflecting the neural collapse phenomenon. This shows that features within each class converge to the means with zero variability, and the features across different classes remain correlated. In Fig. \ref{fig_exp2}(d), independent local updating with a limited number of training data samples results in poor performance in the learned representations, highlighting the need for communication and message sharing between nodes, thus validating the effectiveness of our proposed framework.
{In Fig. \ref{fig_exp2}(e), the learned representations of FedU$^2$ show that the neural collapse problem is mitigated, while the cross-label correlation is still severe from those colored non-diagonal blocks. Finally, Figs. \ref{fig_exp2}(f-h) show that those self-supervised methods still suffer from neural collapse problem and the resultant representations lack the discriminative properties.}

Finally, we compare the singular values of global representations, for both overall data and each individual class. The results are shown in Fig. \ref{fig_exp3}. 
We find that the representations learned by our proposed algorithm are much more diverse than those learned by D-SGD, with much higher dimensional spaces/subspaces, both for the overall data space and the subspaces for each class.

\begin{figure}[htbp]
\centering
\subfigure[Proposed Algorithm \ref{Alg1}]{
\begin{minipage}[t]{0.24\textwidth}
\centering
\includegraphics[width=\textwidth]{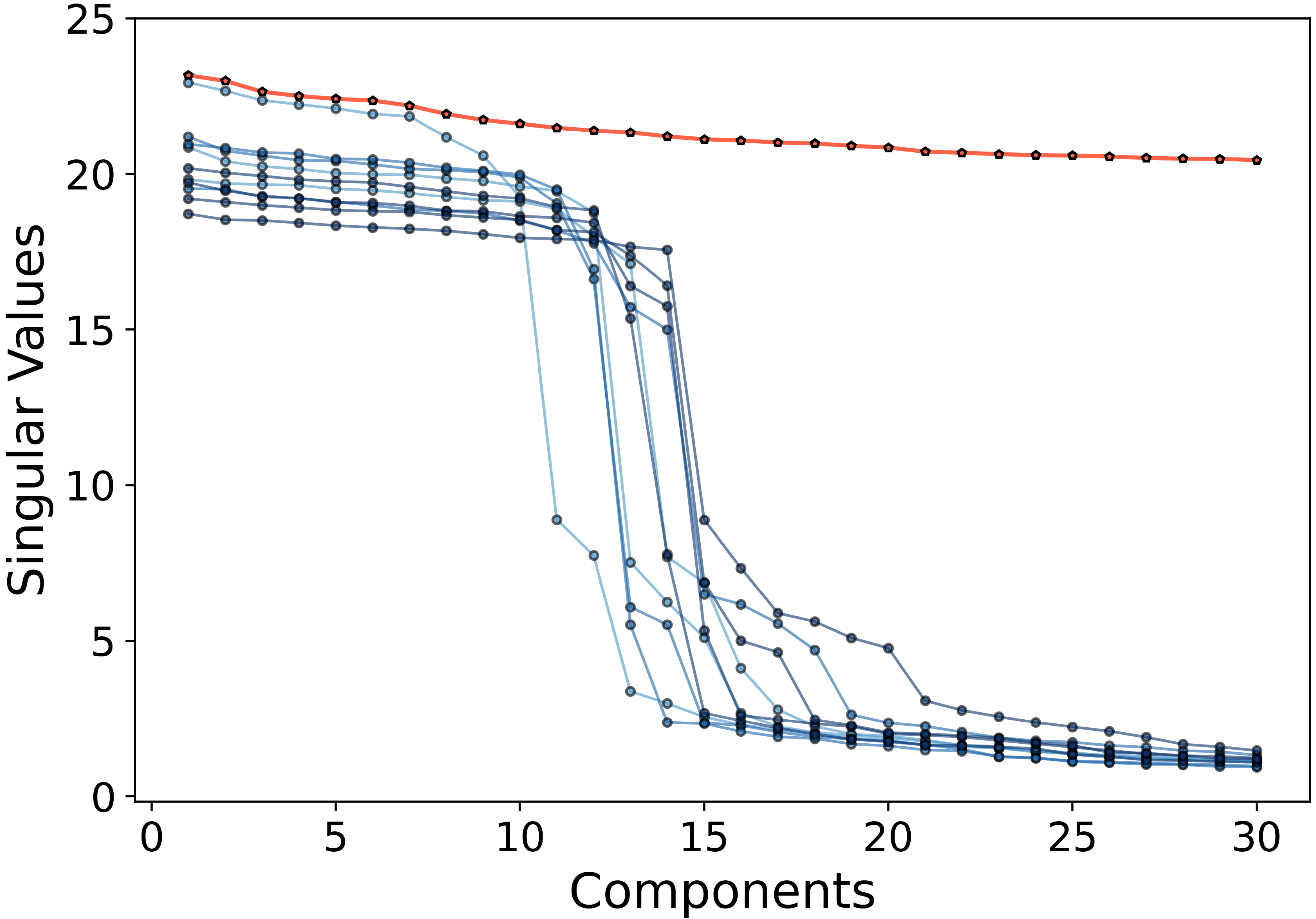}
\end{minipage}%
}%
\subfigure[D-SGD]{
\begin{minipage}[t]{0.24\textwidth}
\centering
\includegraphics[width=\textwidth]{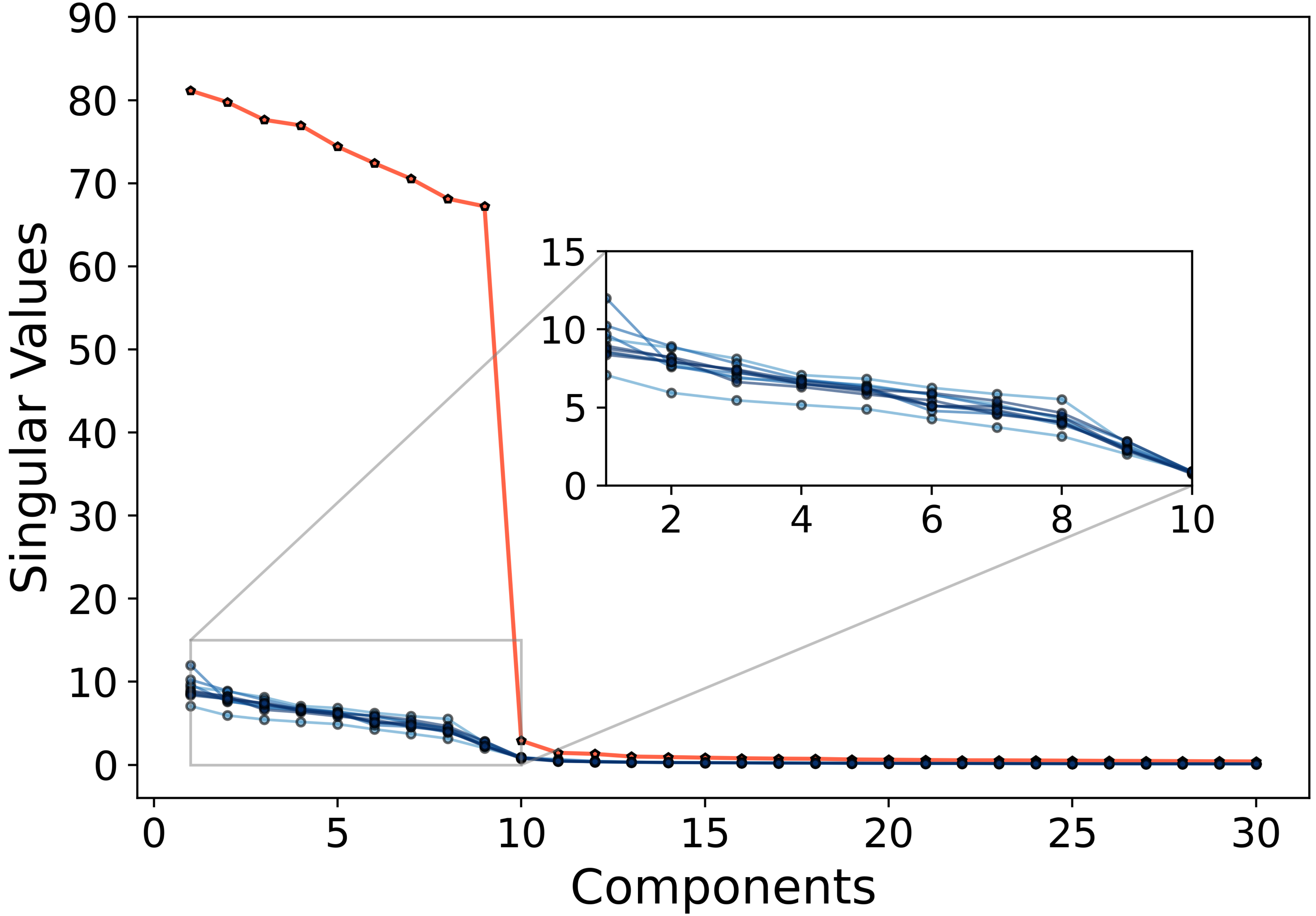}
%\caption{fig2}
\end{minipage}
}%
\vspace{-0.2cm}
\caption{Singular value comparison for i.i.d. MNIST dataset: (red) overall data; (blue) individual class}\label{fig_exp3}
\end{figure}

\textbf{CIFAR10.} 
Secondly, we show the experiments on CIFAR10 datasets. To verify the superiority of our algorithm in the heterogeneous model conditions, we consider nodes with different neural network architectures. 
Specifically, we use the same communication network topology. The $10$ nodes have the following neural network architecture respectively: ResNet18, ResNet34, VGG11, VGG16, ResNet18, ResNet34, VGG11, VGG16, ResNet18, ResNet34. 
The last linear layer of each model is replaced by a two-layer fully connected network with ELU activation function, such that the output dimension is $128$. Moreover, the normalization function is added to the outputs to satisfy the first constraint in the optimization problem (\ref{opt1}).
We set the initial learning rate to $0.01$ with a weight decay of $10^{-5}$ and use the Adam optimizer.
The mini-batch size is set to $1000$ and the precision parameter $\epsilon=0.5$.
In each iteration, to approximate the optimal solution of (\ref{R1_2c}), each node conducts $5$ local epochs. $\rho$ is set to $0.1$ and $\gamma=2.0$. 

\begin{figure*}[htbp]
\centering
\subfigure[Proposed Algorithm 1]{
\begin{minipage}[t]{0.25\textwidth}
\centering
\includegraphics[width=\textwidth]{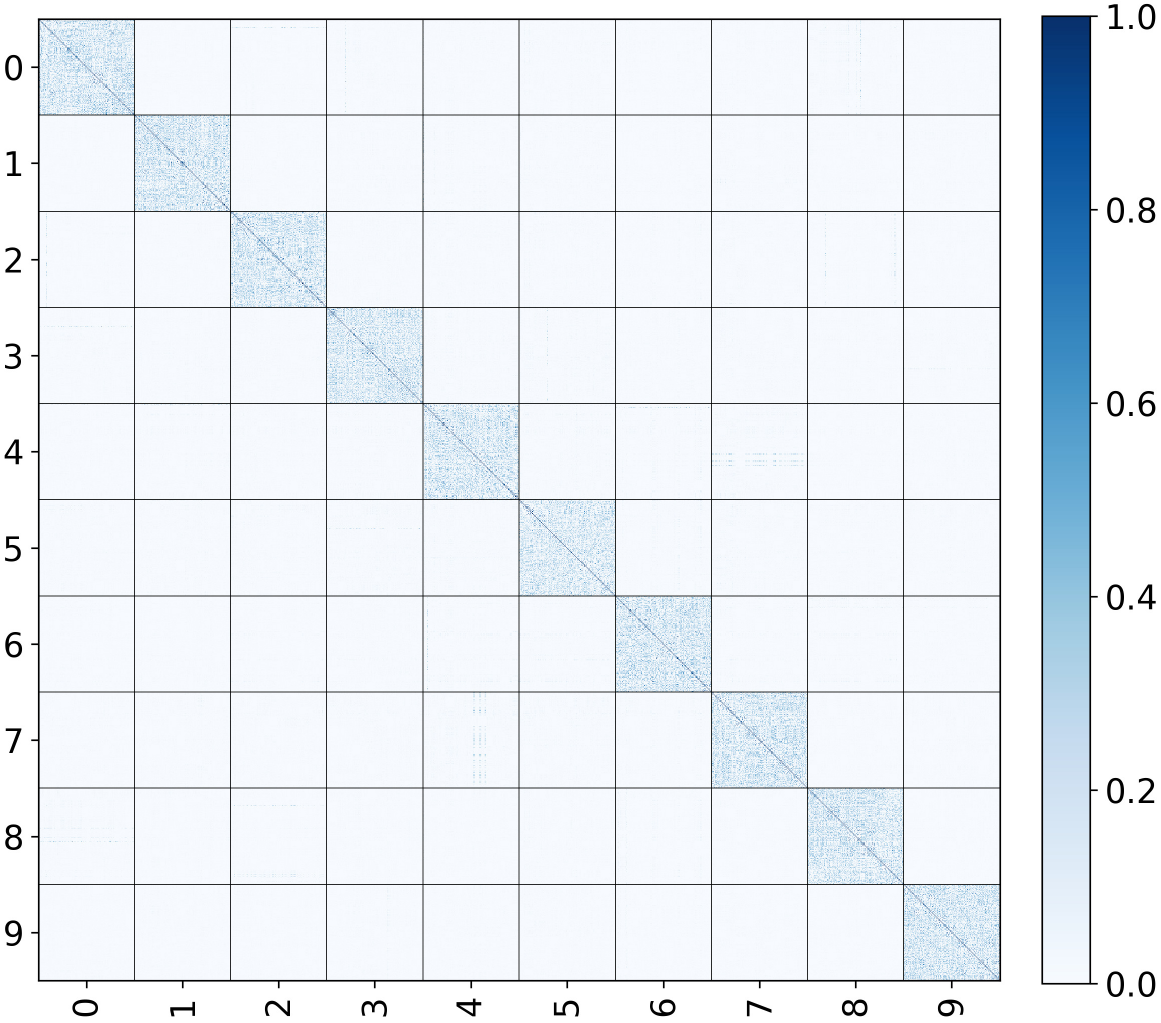}
%\caption{fig1}
\end{minipage}%
}%
\subfigure[Centralized MCR$^2$]{
\begin{minipage}[t]{0.25\textwidth}
\centering
\includegraphics[width=\textwidth]{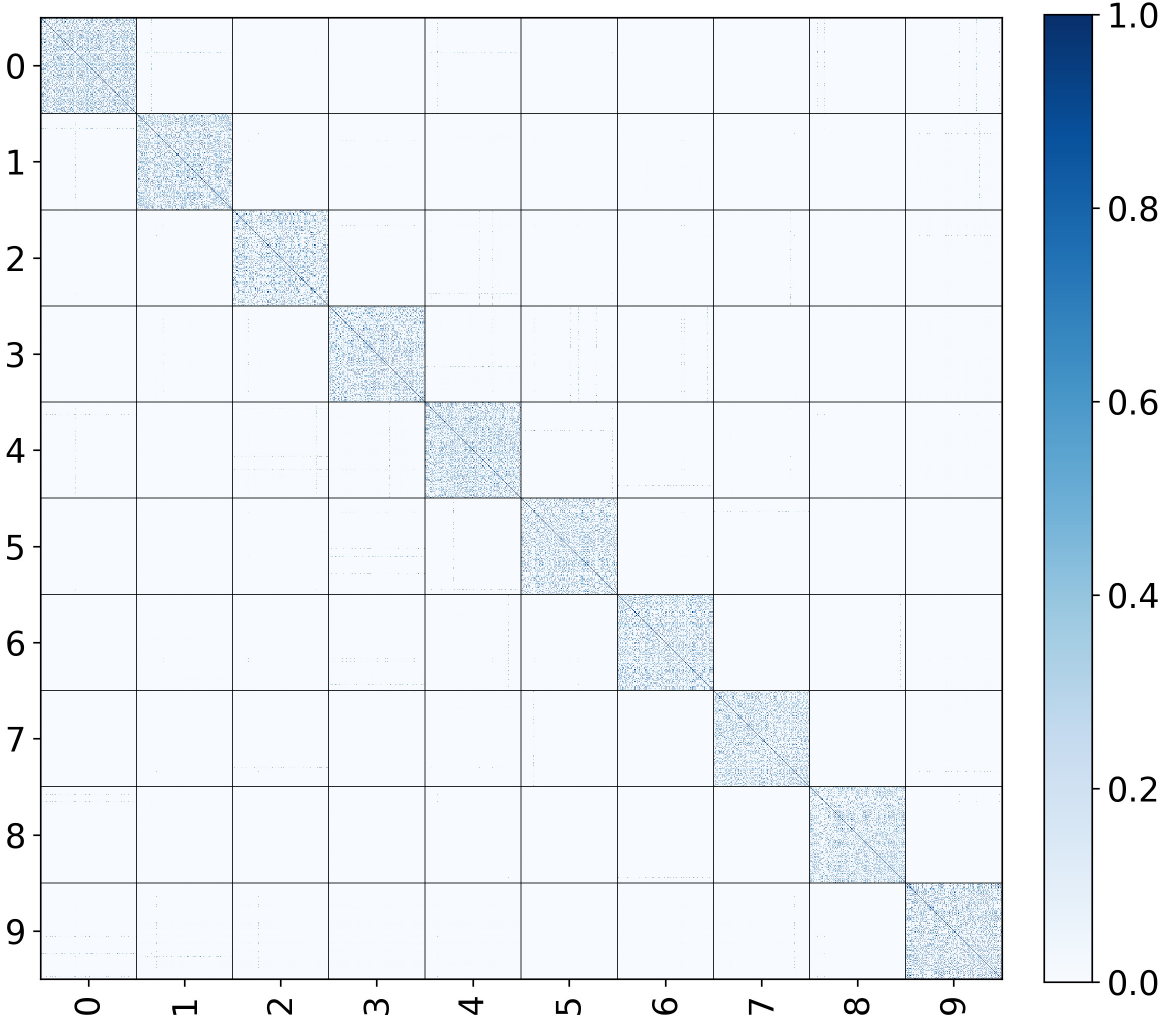}
%\caption{fig2}
\end{minipage}%
}%
\subfigure[D-SGD]{
\begin{minipage}[t]{0.25\textwidth}
\centering
\includegraphics[width=\textwidth]{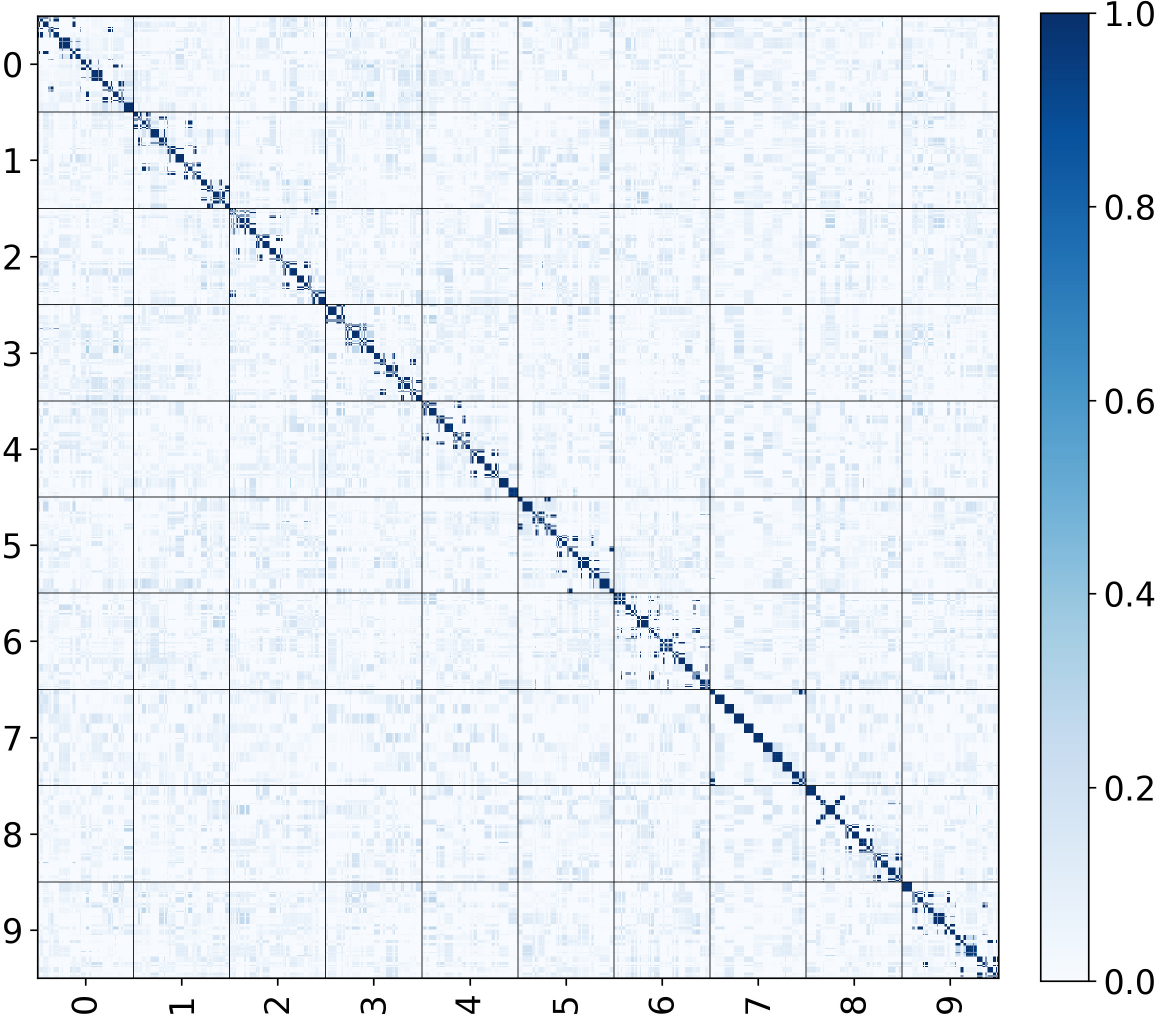}
%\caption{fig2}
\end{minipage}
}%
\subfigure[Independent MCR$^2$]{
\begin{minipage}[t]{0.25\textwidth}
\centering
\includegraphics[width=\textwidth]{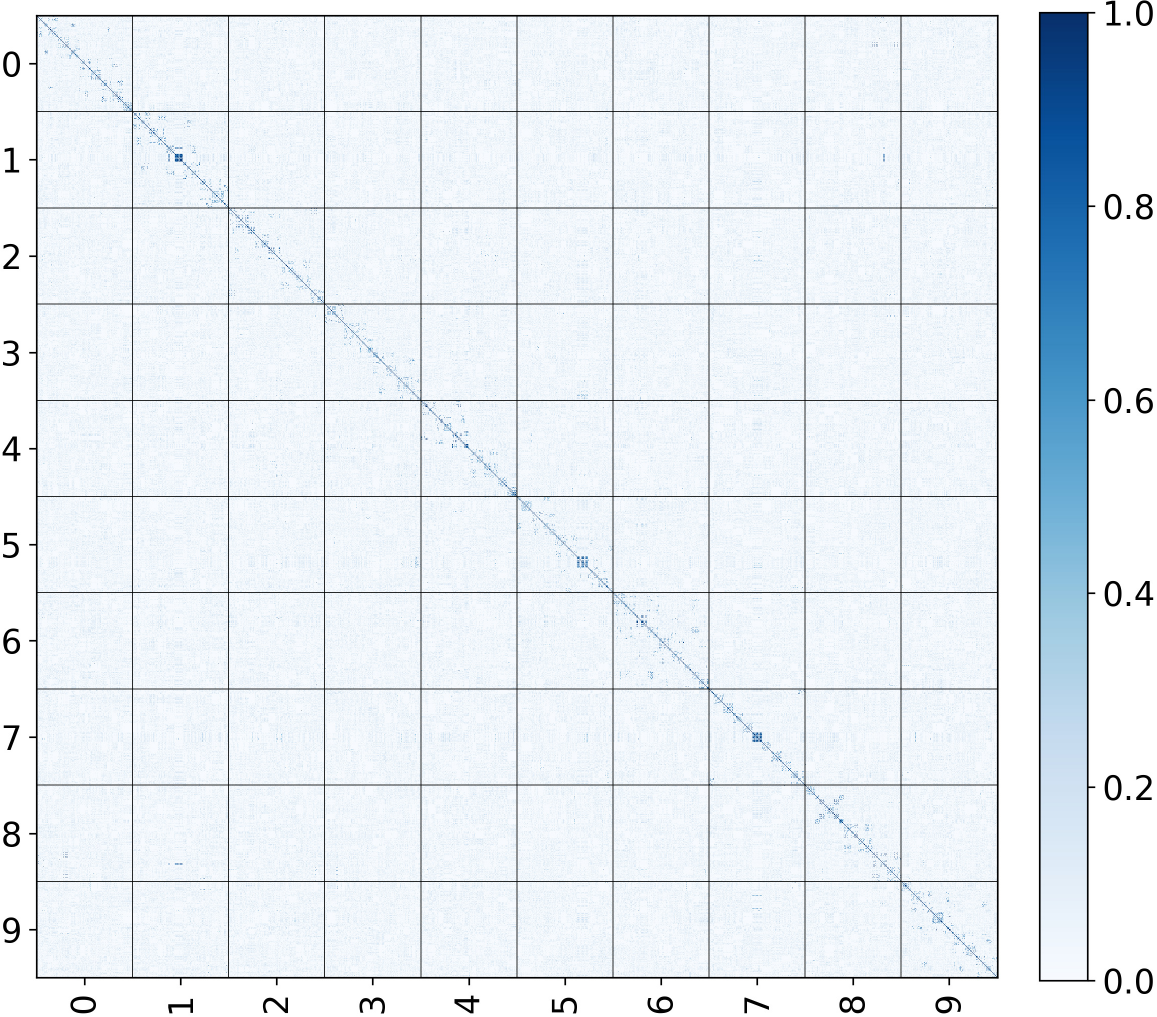}
%\caption{fig2}
\end{minipage}
}%

\subfigure[FedU$^2$]{
\begin{minipage}[t]{0.25\textwidth}
\centering
\includegraphics[width=\textwidth]{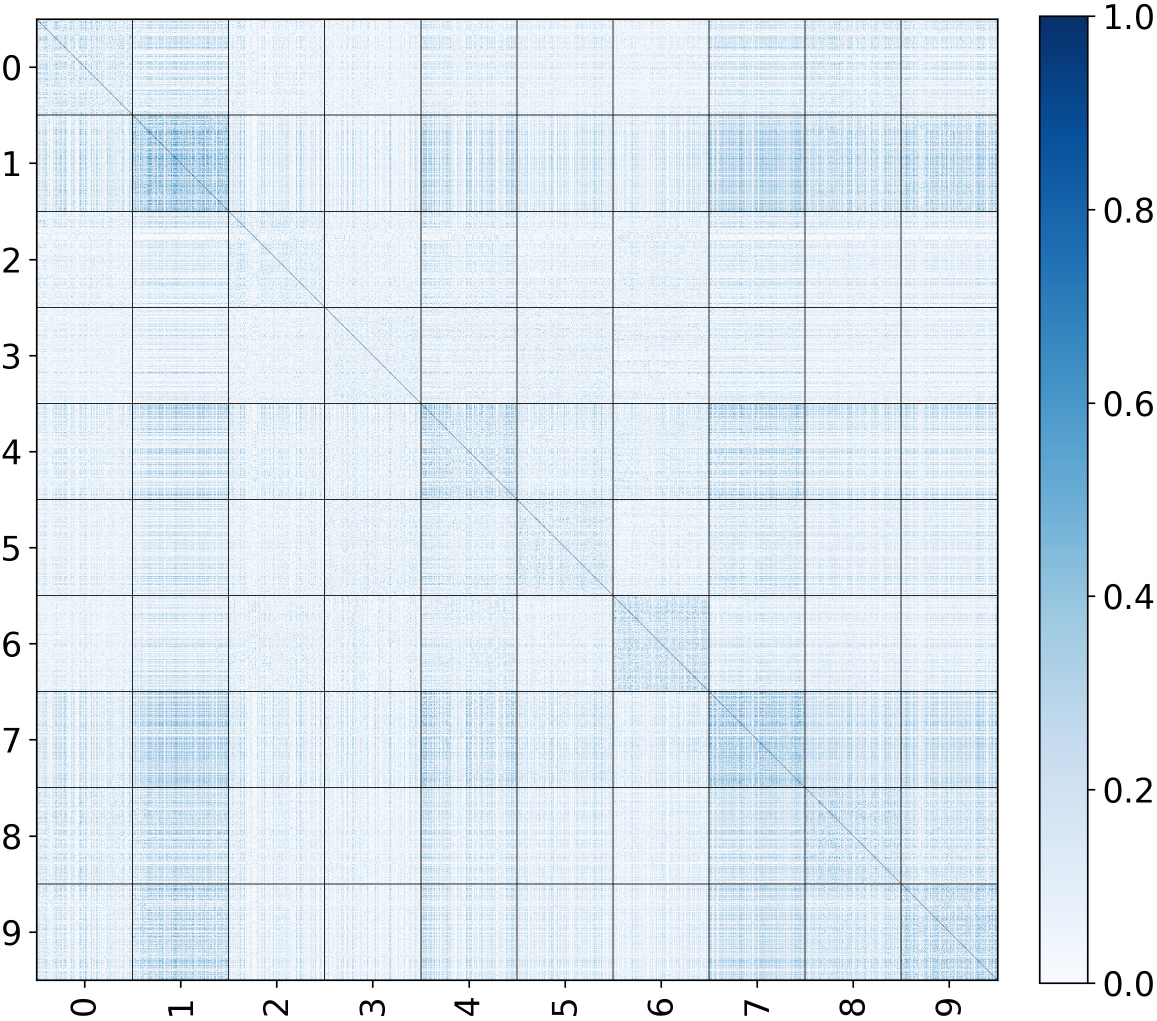}
%\caption{fig1}
\end{minipage}%
}%
\subfigure[FedSimCLR]{
\begin{minipage}[t]{0.25\textwidth}
\centering
\includegraphics[width=\textwidth]{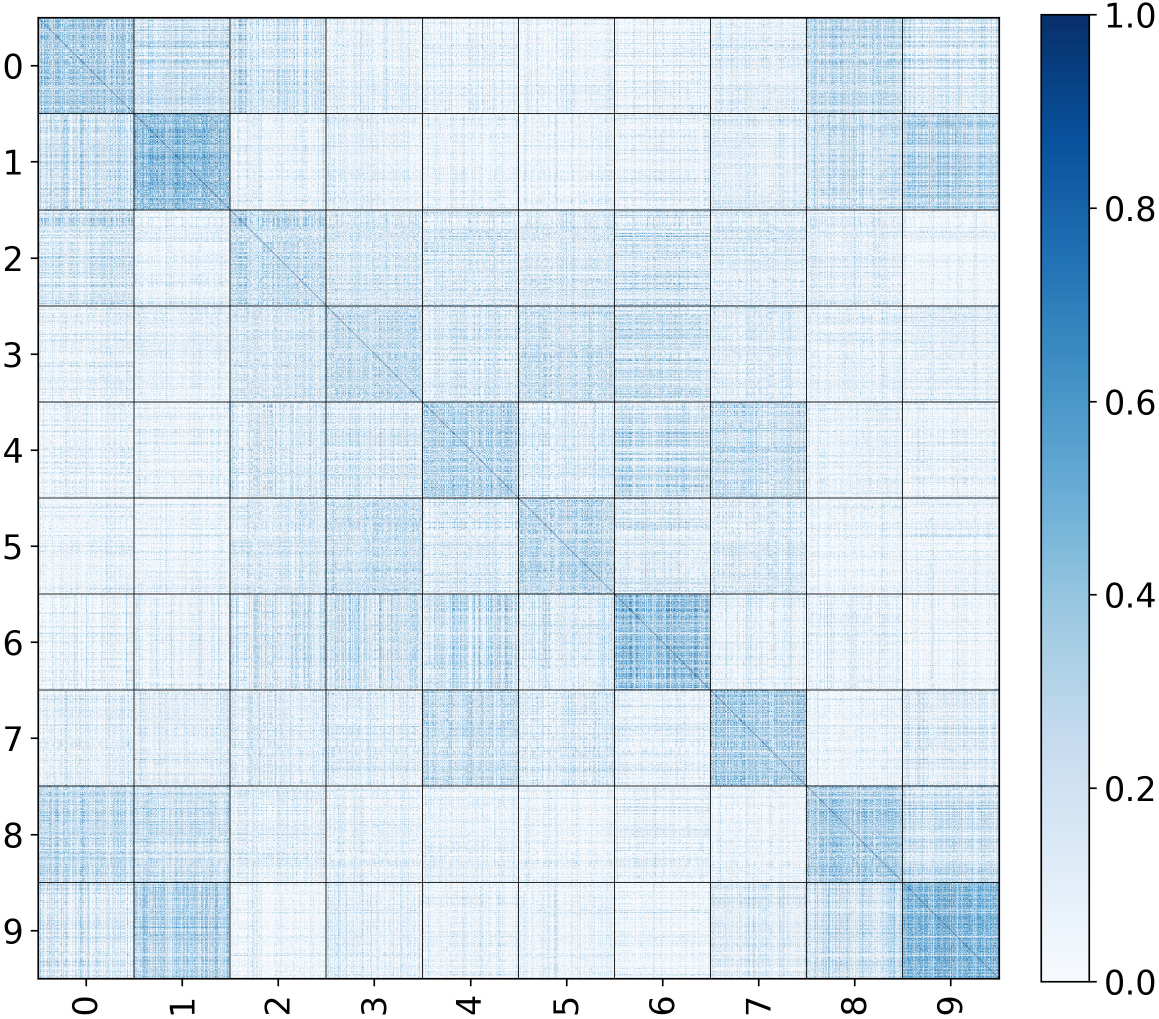}
%\caption{fig2}
\end{minipage}%
}%
\subfigure[FedU]{
\begin{minipage}[t]{0.25\textwidth}
\centering
\includegraphics[width=\textwidth]{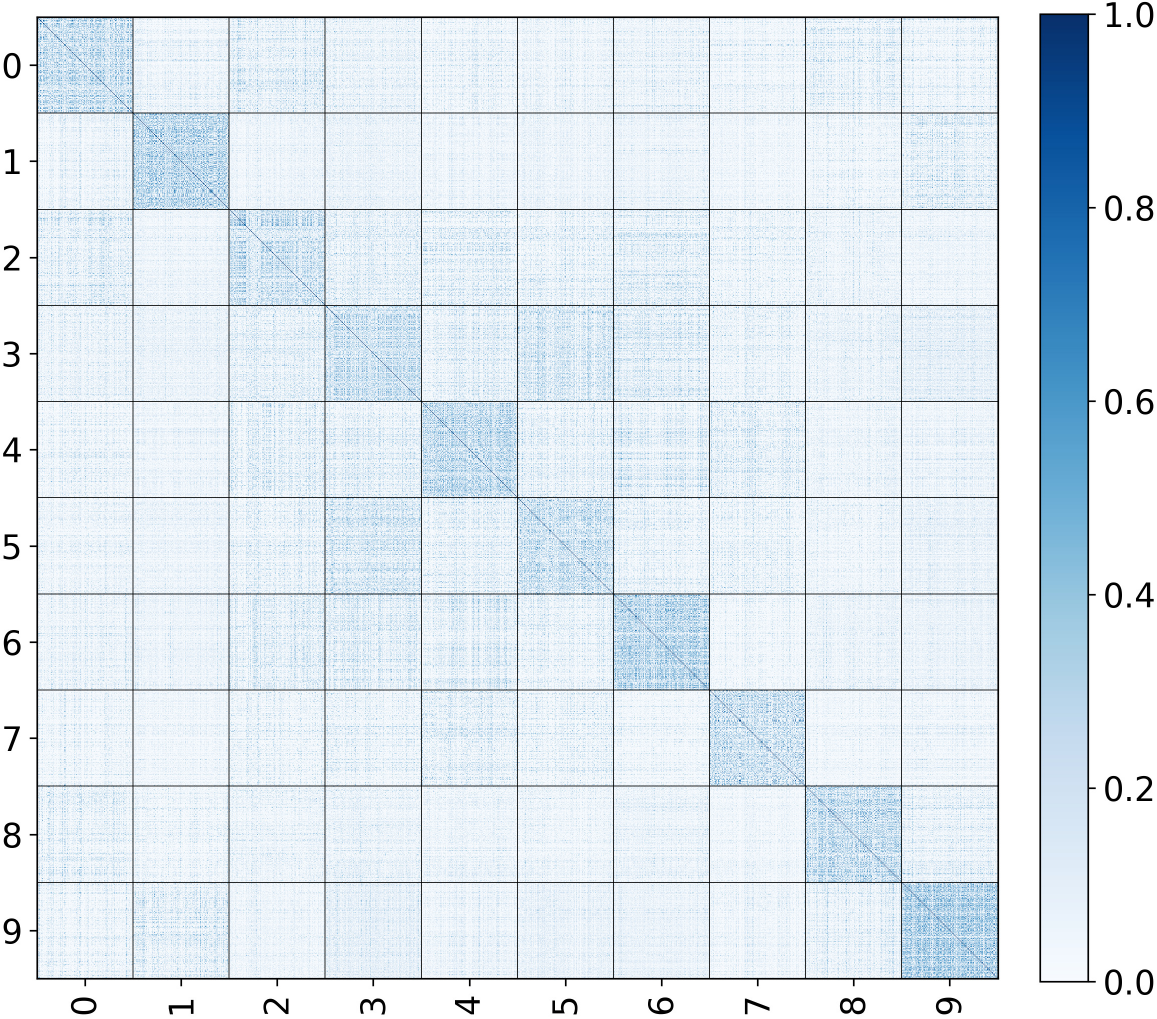}
%\caption{fig2}
\end{minipage}
}%
\subfigure[Orchestra]{
\begin{minipage}[t]{0.25\textwidth}
\centering
\includegraphics[width=\textwidth]{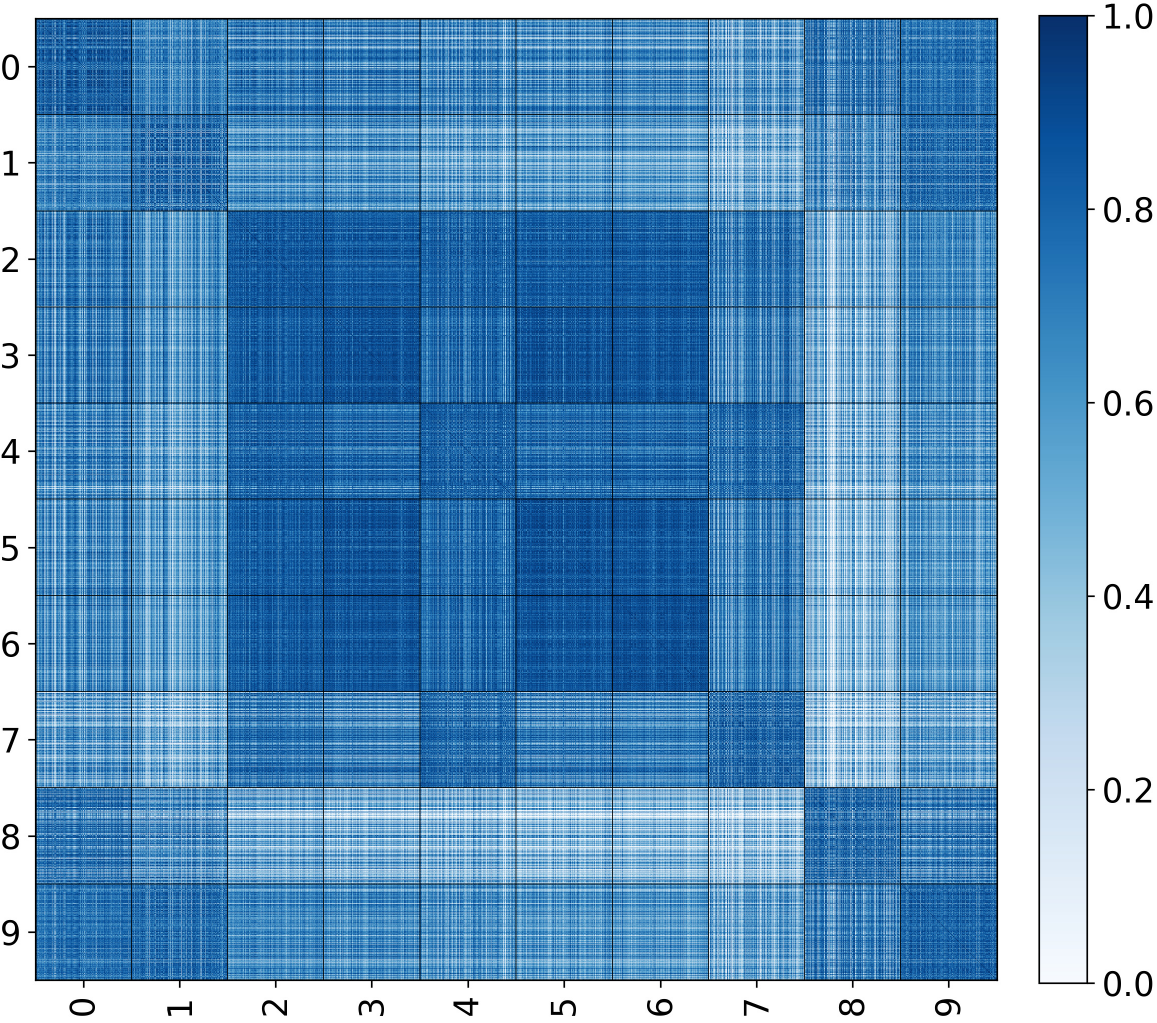}
%\caption{fig2}
\end{minipage}
}%
\vspace{-0.2cm}
\centering
\caption{Cosine similarity between learned representations for CIFAR10 under i.i.d. data distribution using different algorithms.}\label{fig_exp5}
\vspace{-0.6cm}
\end{figure*}

The convergence curves of the averaged training loss over $1000$ iterations are presented in Fig. \ref{fig_exp1}(b), demonstrating the convergence property of the proposed Algorithm \ref{Alg1}.
Additionally, we show the cosine similarity of the learned representations and compare the results with other methods. 
Regarding the heterogeneous model architectures among nodes, D-SGD is implemented by sharing and averaging the model parameters between agents that have the same architecture. 
{For those federated representation learning algorithms relying on model aggregation, we fix their neural network architectures as ResNet18 among agents.}
The results are shown in Fig. \ref{fig_exp5}. The similarity between Fig. \ref{fig_exp5}(a) and (b) confirms the effectiveness of our proposed method and the properties outlined in Theorem \ref{t1} under model heterogeneity conditions.
Furthermore, in Fig. \ref{fig_exp5}(c), the representations within each class no longer exhibit correlations due to the heterogeneous model architecture and partial averaging among agents. 
The result in Fig. \ref{fig_exp5}(d) further highlights the effectiveness of nodes' collaboration.
{The comparison between Fig. \ref{fig_exp5}(a) and Figs. \ref{fig_exp5}(e-h) exhibits the discriminative and diverse properties of the proposed algorithm and the MCR$^2$ principle, compared with those self-supervised federated representation learning algorithms.}

% Finally, we also show the singular values of the representations over the network, for both overall data and each individual class, as in Fig. \ref{fig_exp6}. It can be observed here that our features are not only orthogonal but also of much higher dimension. \begin{figure}[!htp] \centering\includegraphics[width=0.4\textwidth]{./pics/IID_CIFAR10/exp_IID_CIFAR_PCA_ColMCR.pdf}\caption{Singular values for i.i.d. CIFAR10 dataset: (red) overall data; (blue) individual class}\label{fig_exp6} \vspace{-0.5cm} \end{figure}

\subsection{Experiments with non-i.i.d. Setting}
In this part, we simulate the performance of Algorithm \ref{Alg3} in the non-i.i.d. data distribution conditions. 
Specifically, we first simulate the conditions where the nodes can be fully clustered without local replication, to show the effectiveness of the parallel cluster and inner BCD. Then we consider more complicated scenarios where the nodes need to be locally replicated for inter-cluster i.i.d. data distribution.

\begin{figure}[htbp]
    \centering
    \includegraphics[width=0.5\textwidth]{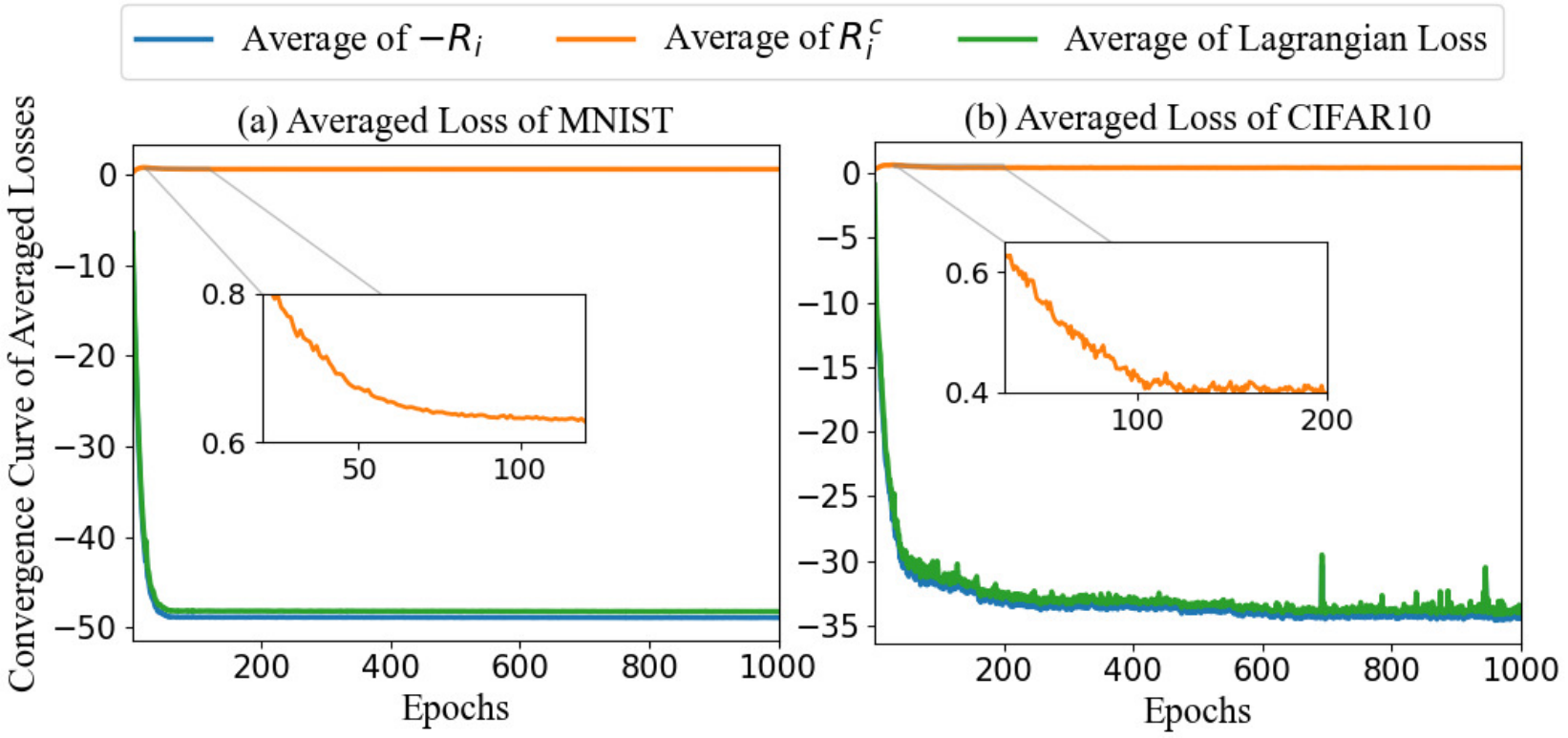}
    \caption{Convergence curves of the averaged loss.}
    \label{fig_exp7}
    \vspace{-0.5cm}
\end{figure}

In the first scenario, we consider $4$ nodes in the network with local labels $[1,2,3,4,5],[6,7$, $8,9,0], [0,3,5,7,9], [1,2,4,6,8]$ respectively. Based on clustering, we could obtain two clusters in the network, i.e., cluster $1$ with nodes $0,1$ and cluster $2$ with nodes $2,3$.
%We first randomly assign $4$ different classes into $5$ nodes and these nodes have the classes $[0,3,5,6], [0,5,8,7],[1,3,8,9],[2,4,6,7],[1,2,4,9]$ respectively. Based on the clustering Algorithm \ref{Alg2}, we could obtain two clusters in the network, i.e., cluster $1$ with node $0,1,4, (\mathcal{K}_0^1=\{0,3,5,6\}, \mathcal{K}_1^1=\{0,5,7,8\}, \mathcal{K}_4^1=\{1,2,4,9\})$ and cluster $2$ with node $5,2,3, (\mathcal{K}_5^2=\{0,5,7,8\}, \mathcal{K}_2^2=\{1,3,8,9\}, \mathcal{K}_3^2=\{2,4,6,7\})$. Here node $1$ and $5$ are the replicated virtual nodes in node $1$.
Similar to those in the previous subsection, we first simulate the results in the MNIST dataset, where the nodes have the common neural network architecture. 
We set $\gamma=2.0$, the initial learning rate $0.01$ and other settings are the same as those in Section \ref{Sec5}-A.
The convergence curves of Algorithm \ref{Alg3} are shown in Fig. \ref{fig_exp7}(a) and the cosine similarities of the global representation are shown in Fig. \ref{fig_exp8}. It can be observed in Fig. \ref{fig_exp8}, that through the cooperation among nodes, the within-class correlation is built while the between-class orthogonality is enhanced. % {And the comparison between Fig. \ref{fig_exp7}(a) and (e-h) further show the structural properties of the proposed algorithm.}

\begin{figure}[htbp]
\centering
\subfigure[Proposed Algorithm \ref{Alg3}]{
\begin{minipage}[t]{0.24\textwidth}
\centering
\includegraphics[width=\textwidth]{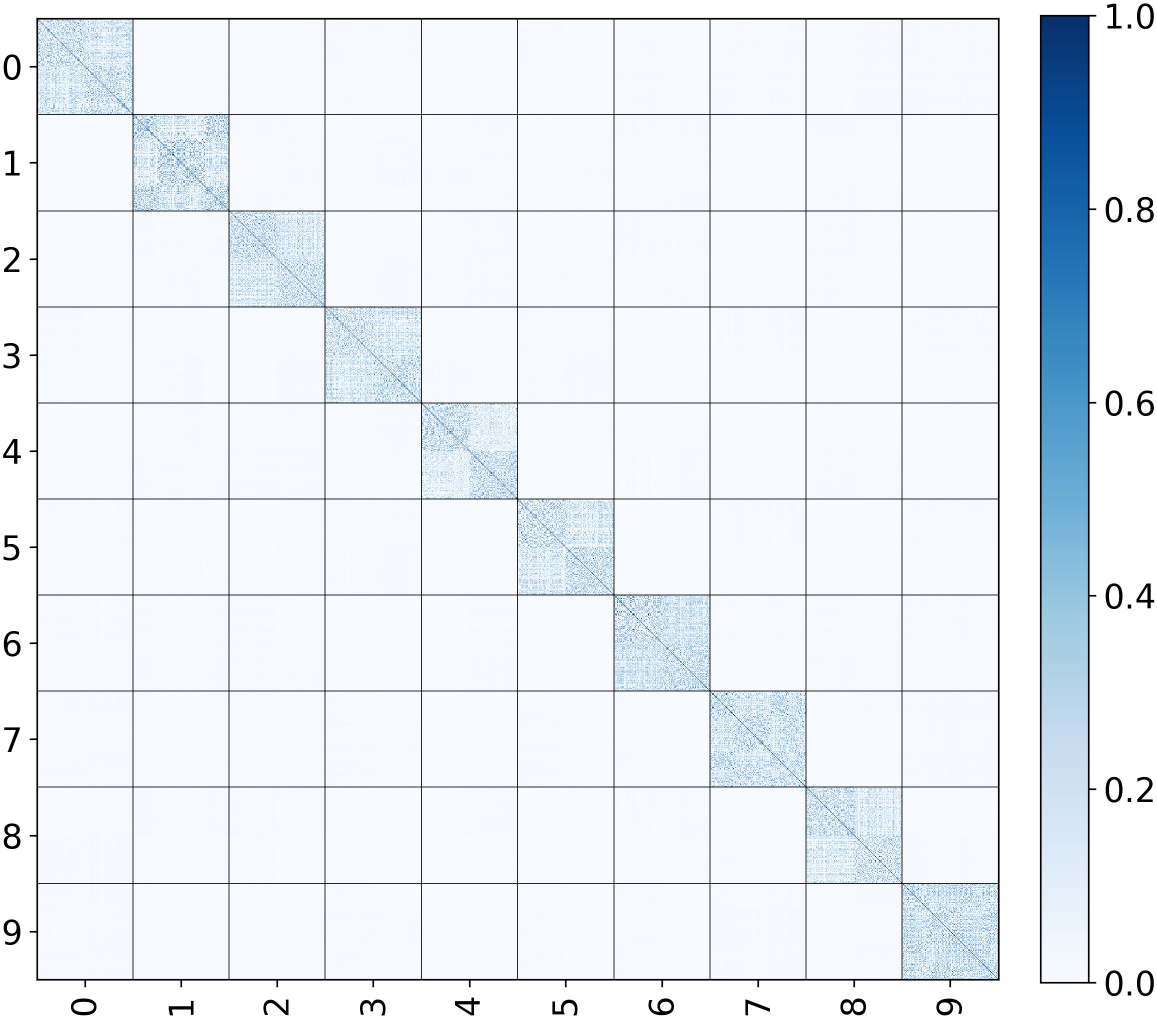}
%\caption{fig1}
\end{minipage}%
}%
\subfigure[Independent MCR$^2$]{
\begin{minipage}[t]{0.24\textwidth}
\centering
\includegraphics[width=\textwidth]{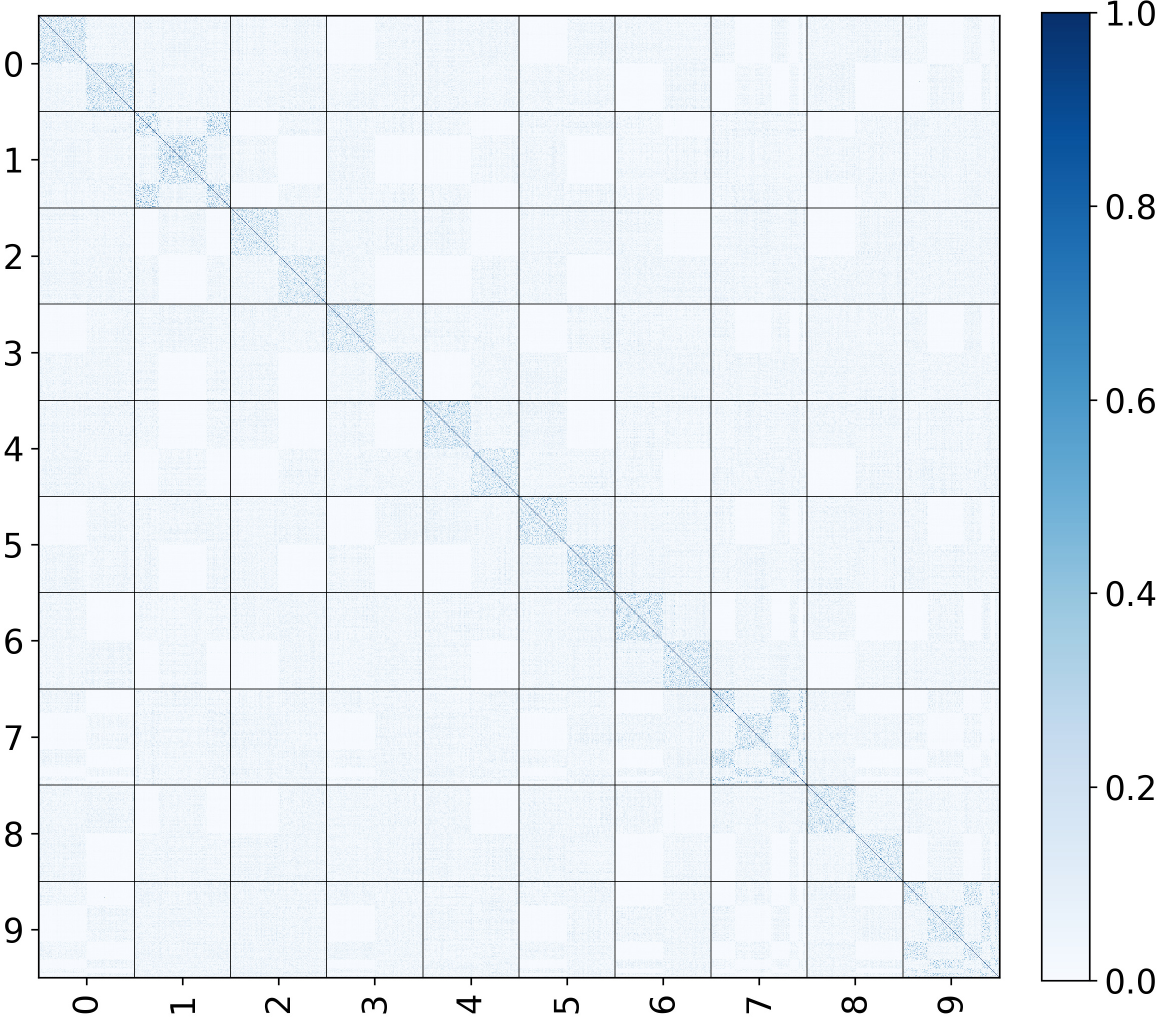}
%\caption{fig2}
\end{minipage}
}%
\vspace{-0.2cm}
\centering
\caption{Cosine similarity between learned representations for MNIST under non-i.i.d. data distribution.}\label{fig_exp8}
\vspace{-0.3cm}
\end{figure}

For the CIFAR-10 dataset and four nodes with ResNet18, VGG11, VGG16, and ResNet34 model architectures, the convergence curves are shown in Fig. \ref{fig_exp7}(b) and the comparison of the learned representations are shown in Fig. \ref{fig_exp9}.

\begin{figure}[htbp]
\centering
\subfigure[Proposed Algorithm \ref{Alg3}]{
\begin{minipage}[t]{0.24\textwidth}
\centering
\includegraphics[width=\textwidth]{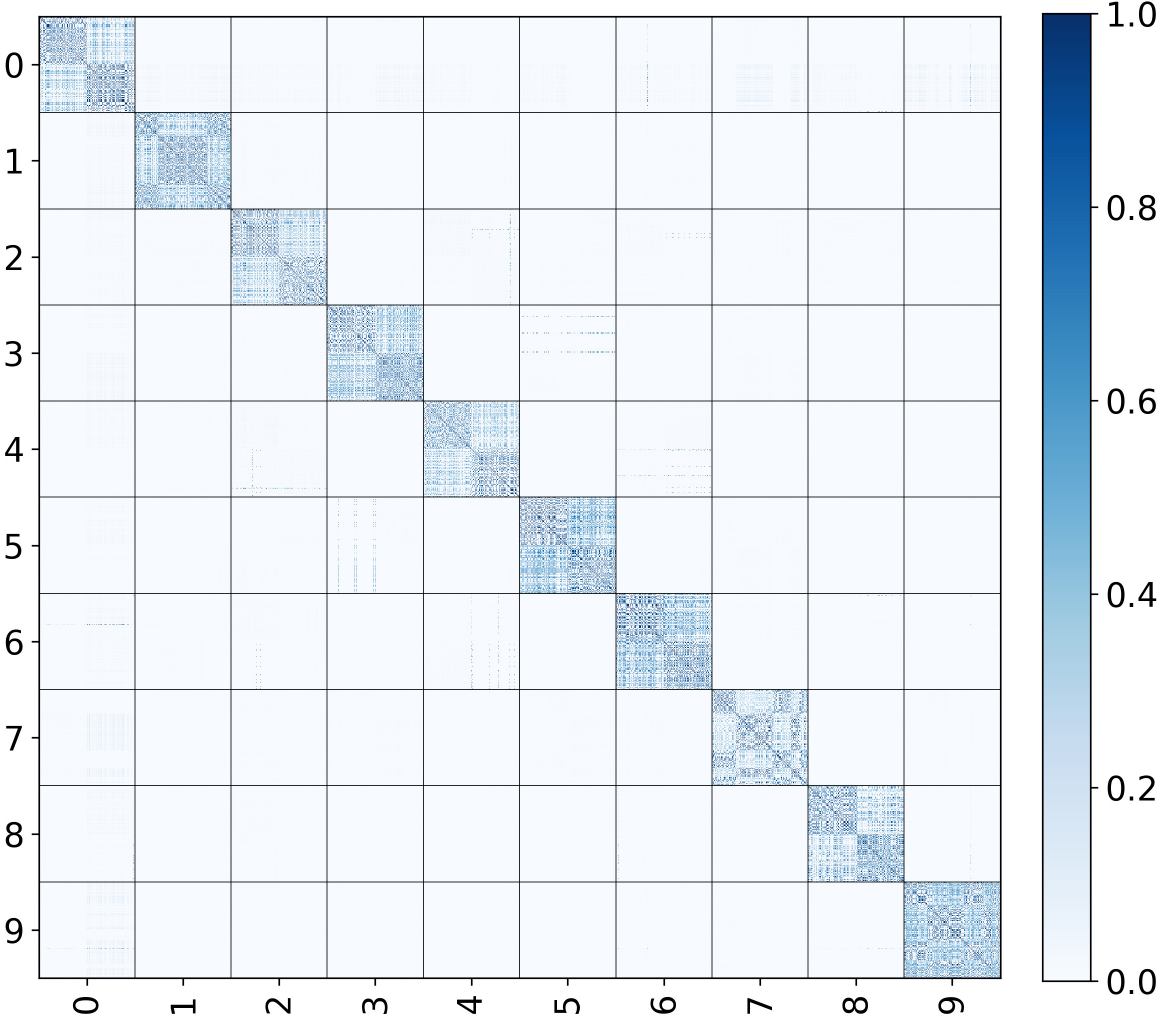}
%\caption{fig1}
\end{minipage}%
}%
\subfigure[Independent MCR$^2$]{
\begin{minipage}[t]{0.24\textwidth}
\centering
\includegraphics[width=\textwidth]{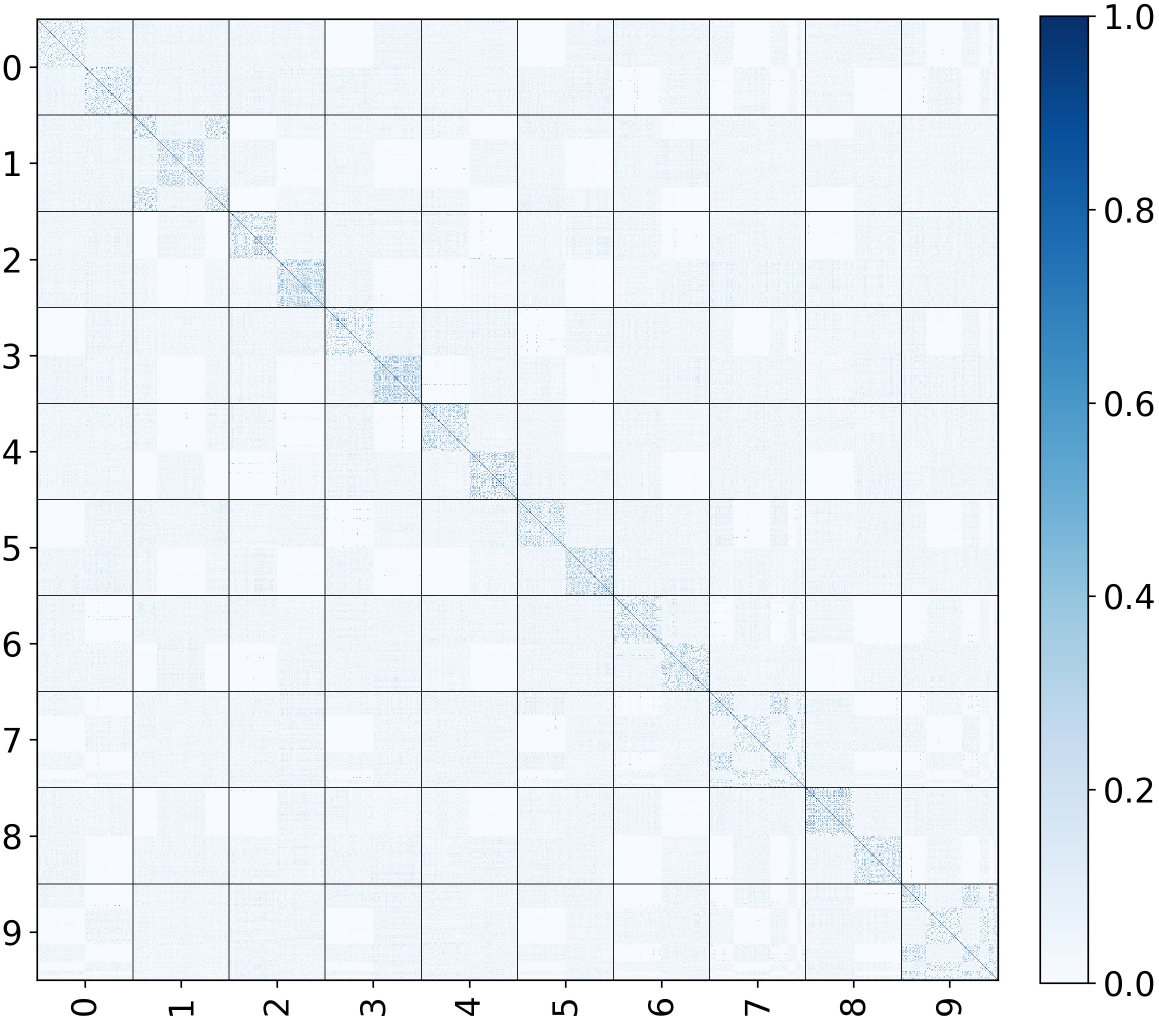}
%\caption{fig2}
\end{minipage}
}%
\vspace{-0.2cm}
\centering
\caption{Cosine similarity between learned representations for CIFAR10 under non-i.i.d. data distribution.}\label{fig_exp9}
\vspace{-0.3cm}
\end{figure}
Then, we consider more complicated conditions with $5$ nodes in the network, each with $4$ random labels, which are $[1,3,5,6], [0,5,7,8], [1,3,8,9], [2,4,6,7],[0,2,4,9]$ respectively. Then, based on Algorithm \ref{Alg2}, we could have two clusters: cluster $1$ with nodes $\{0,1,4\}$ and cluster $2$ with nodes $\{1,2,3\}$, with the local virtual replication in node $1$. The cosine similarity of learned representations can be shown in Fig. \ref{fig_exp10} for MNIST and CIFAR-10 datasets. In CIFAR-10 datasets, the nodes have ResNet18, VGG11, VGG16, ResNet34, VGG11 model architectures, respectively.

\begin{figure}[htbp]
\centering
\subfigure[MNIST]{
\begin{minipage}[t]{0.24\textwidth}
\centering
\includegraphics[width=\textwidth]{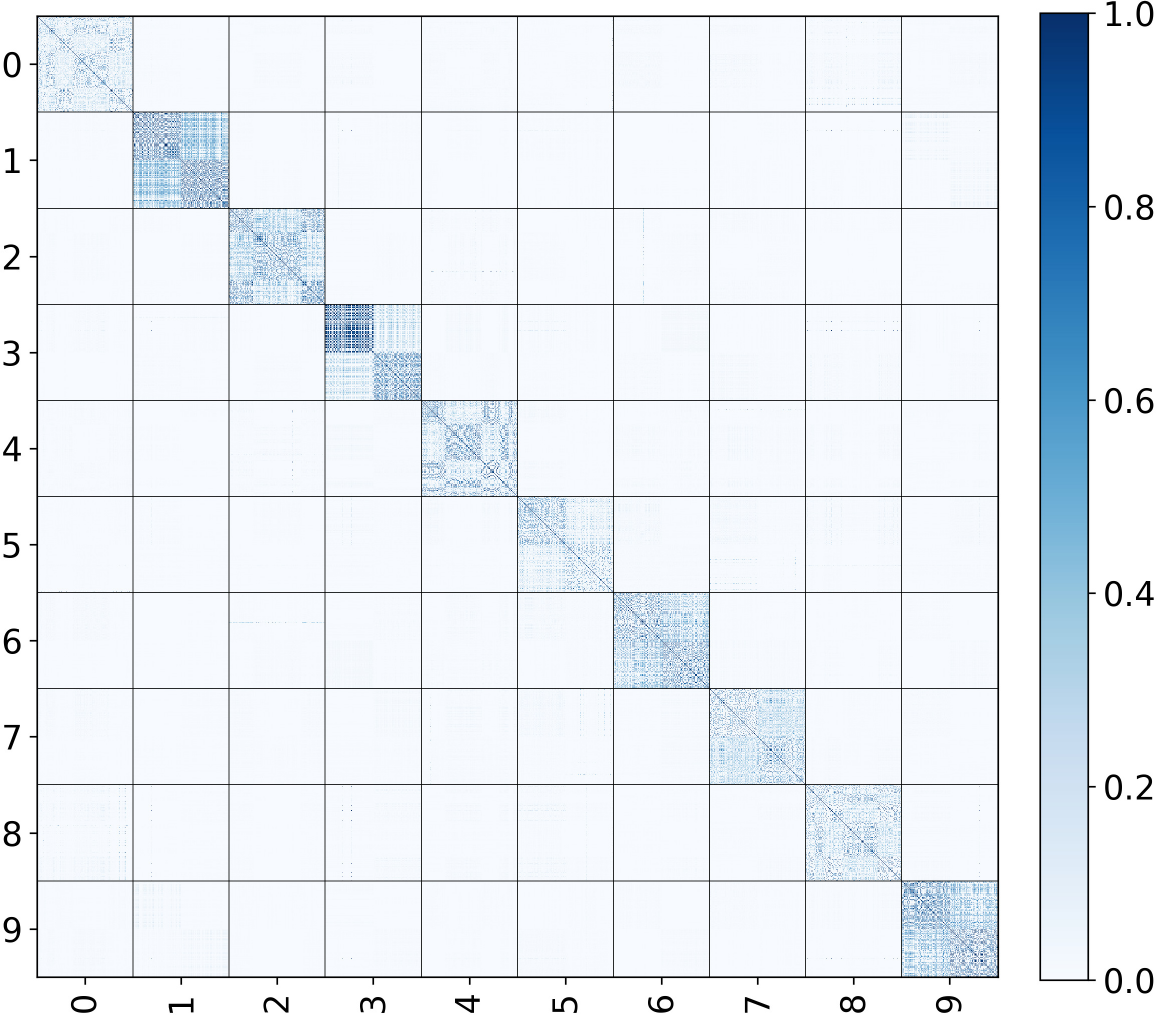}
%\caption{fig1}
\end{minipage}%
}%
\subfigure[CIFAR10]{
\begin{minipage}[t]{0.24\textwidth}
\centering
\includegraphics[width=\textwidth]{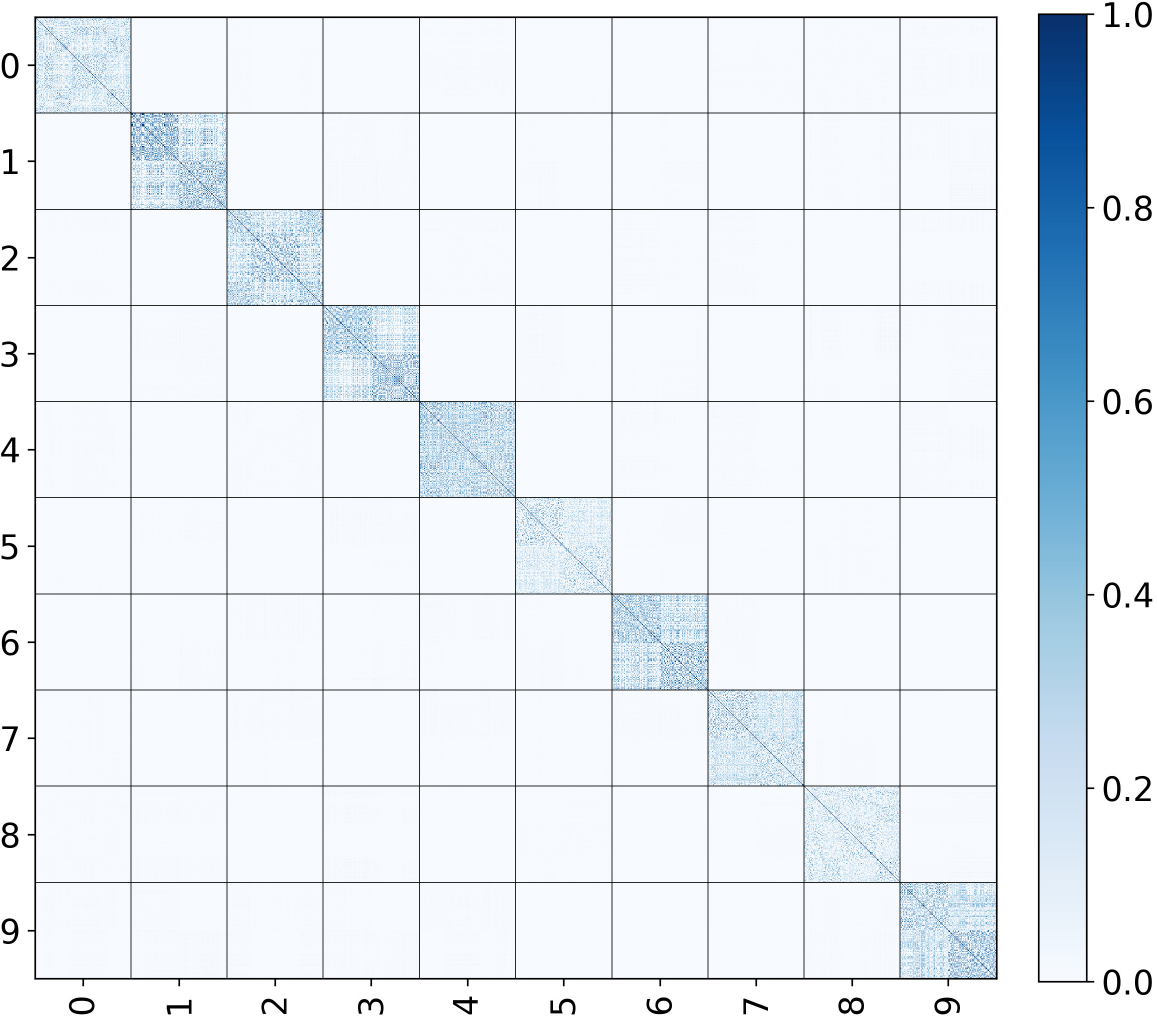}
%\caption{fig2}
\end{minipage}
}%
\vspace{-0.2cm}
\centering
\caption{Cosine similarity of learned representations for MNIST and CIFAR-10 in non-i.d.d. condition under virtual node replication.}\label{fig_exp10}
\vspace{-0.3cm}
\end{figure}

\subsection{Metrics Comparison}
In this subsection, we evaluate {and compare different metrics of the learned representation neural networks between our proposed algorithms and those baseline methods.} 
It is important to note that the proposed methods train each local model to map its local datasets into a shared global representation space. 
This allows each local model to capture a partial view of the global datasets, making the trained representation model personalized for each local perspective. For classification tasks, we propose averaging the output representations across nodes for each test data sample, which can be considered the global representation of the test data. Based on this global representation, classification is performed using a natural nearest subspace classifier, similar to the approach in \cite{yu2020learning}, which uses the subspaces of global representations from the training data samples.
Specifically, for the representations of training data samples corresponding to class $k$, denoted by $\bm{Z}_k$, let $\bm{\mu}_k\in\mathbb{R}^d$ be its mean and $\bm{\Phi}_k\in\mathbb{R}^{d\times r_k}$ be the first $r_k$ principle components for $\bm{Z}_k$.
Define $z' = \frac{1}{N}\sum_{i\in\mathcal{V}}f_i(x',\bm{\theta}_i)$, then the predicted label of a test data sample $\bm{x}'$ is given by
\begin{align}
    k' = {\arg\min}_{k\in\mathcal{K}}\|(\bm{I}-\bm{\Phi}_k\bm{\Phi}_k^T)(z'-\bm{\mu}_k)\|_2^2,
\end{align}

{In addition to testing accuracy, to comprehensively evaluate the learned representations, we design a set of geometry-aware metrics that characterize neural collapse behavior, representation properties, and cross-node semantic alignment in distributed settings.
Firstly, to assess class-level geometry and neural collapse, we compute the Class Mean Cosine Similarity Matrix, whose off-diagonal statistics quantify the angular relationships between class prototypes (mean/std). In addition, we measure the Within-Class Covariance Collapse Ratio (WCCR), which captures the proportion of variance retained within classes relative to the total variance, serving as a direct indicator of whether representations exhibit within-class collapse.
To further evaluate discriminative structure without relying on classifier parameters, we report the Inter-Class to Intra-Class Distance Ratio (IIDR), which jointly reflects class separation and intra-class compactness under non-collapsed representations.
Beyond single-model geometry, we investigate representation alignment across clients. We employ Centered Kernel Alignment (CKA) to measure subspace-level similarity between embeddings learned by different clients, revealing whether distributed training enforces excessive representational collapse. 
These metrics provide a holistic view of representation quality, allowing us to disentangle neural collapse mitigation, semantic alignment, and representational diversity in distributed representation learning.}

The comparisons of these metrics between the proposed algorithms and baseline methods in i.i.d. condition are shown in Table \ref{exp_table}, where testing accuracy is the averaged results over 5 trails along with 95\% confidence intervals. {We additionally compare the results under CIFAR100 dataset.}
From Table \ref{exp_table}, considering MNIST, Algorithm \ref{Alg1} achieves competitive accuracy while significantly improving structural metrics.
On CIFAR10, Algorithm \ref{Alg1} achieves the best classification performance and maintains strong representation compactness.
Across both datasets, the off-diagonal statistics (mean/std) exhibit a relatively small mean off-diagonal similarity with low variance, indicating that class prototypes are nearly orthogonal and evenly distributed in the embedding space, which is consistent with Theorem \ref{t1}.
Moreover, a consistent trend can be observed that Algorithm 1 achieves the highest WCCR and the lowest IIDR. This suggests that the proposed method explicitly encourages representation compactness and reduces cross-client feature divergence. In contrast, several baseline methods show higher IIDR and reduced WCCR, indicating potential representation misalignment or instability.
% The structural metrics (WCCR and IIDR) correlate strongly with the final classification performance on CIFAR10, further validating the effectiveness of the proposed regularization mechanism.
Last but not least, although some methods including FedU$^2$ and FedSimCLR exhibit competitive CKA values, they suffer from higher IIDR or degraded ACC, implying a trade-off between feature similarity and discriminative power. In contrast, Algorithm \ref{Alg1} maintains both discriminative accuracy and structural coherence.

{On the other hand, Table 1 indicates that both the proposed algorithm and the centralized MCR$^2$ principle exhibit limited performance on the CIFAR100 dataset. This can be attributed to the large number of classes, which results in fewer samples per class. In the loss function (3) and those derived in Section \uppercase\expandafter{\romannumeral4}, the accuracy of coding rate estimation strongly depends on the sample size. Furthermore, the analysis of computational and communication costs in Section \uppercase\expandafter{\romannumeral4} shows that these costs increase significantly with larger $K$. Therefore, the MCR$^2$ principle is better suited to scenarios with a smaller number of classes.}

%It could be observed that the proposed algorithm does not highly improve the accuracy compared with conventional D-SGD. Indeed, through the task-specific D-SGD, the learned representations within one class converge to some consensus point and naturally enhance the classification performance. On the other hand, our proposed methods yield a representation that lies in structural subspaces, which can be used for solving further downstream tasks.
\begin{table}[!htp]
\centering
\caption{Comparison Among Different Algorithms.}
\label{exp_table}
\small
\setlength{\tabcolsep}{3pt}
\begin{tabular}{llllll}
\hline
Method & ACC(\%) & mean/std & WCCR & IIDR & CKA  \\ 
\hline

\multicolumn{6}{c}{\textbf{MNIST}} \\ 
\hline
Algorithm \ref{Alg1}   & 0.9762$\pm$0.10\% & \textbf{0.07/0.07} & \textbf{0.9941} & \textbf{0.1159} & 0.0245 \\
CenMCR       & 0.9838$\pm$0.12\% & 0.09/0.03 & 0.9235 & 0.4292 & -\\
D-SGD         & \textbf{0.9851$\pm$0.02\%} & -0.08/0.25 & 0.2100 & 3.0249 & 0.0013 \\
FedU$^2$      & 0.9560$\pm$0.21\% & 0.29/0.13 & 0.8224 & 0.6939 & 0.0060 \\
FedSimCLR     & 0.9743$\pm$0.09\% & 0.81/0.06 & 0.6556 & 1.0855 & 0.0028  \\
FedU          & 0.9773$\pm$0.12\% & 0.60/0.13 & 0.6189 & 1.1770 & 0.0029 \\
Orchestra     & 0.9813$\pm$0.21\% & 0.91/0.05 & 0.5556 & 1.3454 & 0.0013\\
\hline

\multicolumn{6}{c}{\textbf{CIFAR10}} \\ 
\hline
Algorithm \ref{Alg1}   & \textbf{71.76$\pm$0.06\%} & \textbf{0.00/0.02} & \textbf{0.9914} & \textbf{0.1399} & 0.0208 \\
CenMCR       & 71.08$\pm$0.02\% & 0.00/0.01 & 0.9153 & 0.4541 & - \\
D-SGD         & 68.26$\pm$0.10\% & -0.11/0.09 & 0.9044 & 0.4848 & 0.0018 \\
FedU$^2$      & 41.18$\pm$0.20\%& 0.92/0.05 & 0.9028 & 0.4900 & 0.0027 \\
FedSimCLR     & 49.03$\pm$0.41\% & 0.71/0.15 & 0.8179 & 0.6901 & 0.0022 \\
FedU          & 68.08$\pm$0.21\% & 0.53/0.13 & 0.8689 & 0.5747 & 0.0068 \\
Orchestra     & 64.02$\pm$0.19\% & 0.64/0.23 & 0.5539 & 1.3275 & 0.0011 \\
\hline

\multicolumn{6}{c}{\textbf{CIFAR100}} \\ 
\hline
Algorithm \ref{Alg1}   & 27.61$\pm$0.16\% & \textbf{0.00/0.02} & \textbf{0.9967} & 0.3834 & 0.0300 \\
CenMCR       & 27.84$\pm$0.22\% & 0.01/0.03 & 0.5915 & 0.7728 & - \\
D-SGD         & 30.20$\pm$0.10\% & 0.75/0.10 & 0.9010 & \textbf{0.1442} & 0.0087 \\
FedU$^2$      & 30.13$\pm$0.18\%& 0.16/0.21 & 0.8833 & 0.5186 & 0.0081 \\
FedSimCLR     & 28.90$\pm$0.21\% & 0.49/0.20 & 0.7857 & 0.7371 & 0.0037 \\
FedU          & \textbf{31.82$\pm$0.20\%} & 0.24/0.23 & 0.8103 & 0.6836 & 0.0057 \\
Orchestra     & 30.36$\pm$0.15\% & 0.95/0.03 & 0.5035 & 0.8093 & 0.0009 \\
\hline
\end{tabular}
\end{table}
\vspace{-0.5cm}

\section{Conclusion}\label{Sec7}
In this work, we proposed a novel distributed semantic-based representation learning framework, which can jointly capture the global structural representations in a multi-node system. 
%which can extract diverse and discriminative features across distributed data spaces. 
Both i.i.d. and non-i.i.d data distributions were studied, where we reformulated the global optimization problem into equivalent decomposable ones. 
With the derivation of updating rules based on augmented Lagrangian function and dual-decomposition procedure, the sub-problems can be solved iteratively among distributed nodes. 
The properties of the optimal solutions and convergence of the proposed algorithm were theoretically given. Finally, we conducted extensive simulations to validate the superiority of the proposed algorithms in terms of structural representations.

The diverse and discriminative properties of the global representations are promising for solving more complicated downstream tasks, such as distributed multi-view problems and semantic communication. {Its performance in dynamic network topologies and generalization to other non-vision tasks need to be further investigated.}
{Another topic exploiting the representations is to apply it to generative tasks. Thirdly, the proposed distributed representation learning algorithm without transmission of model parameters may face the generalization challenge with limited local datasets, calling for a more comprehensive analysis of its generalization capability.
}

\vspace{-0.3cm}
\section*{Appendix}
{
\subsection{Derivation of Proposition \ref{p1}}
The augmented Lagrangian function for (\ref{opt1_2}) can be formulated as follows:
\begin{align}
    \label{lagran1}
    & \mathcal{L}(\bm{Z}, \bm{T}, \bm{U}, \bm{V}) = \sum_{i=1}^N\Large\{R_i^c(\bm{Z}_i) - R_i(\bm{Z}_i) + \\
    & \sum_{j\in\mathcal{N}_i}\sum_{k=1}^K\Big\{\text{tr}[\bm{U}_{i,j,k}^T(\frac{\bm{Z}_{i,k}\bm{Z}_{i,k}^T}{m_{i,k}}-\bm{T}_{i,j,k})] + \notag \\
    & \text{tr}[\bm{V}_{i,j,k}^T(\frac{\bm{Z}_{j,k}\bm{Z}_{j,k}^T}{m_{j,k}}-\bm{T}_{i,j,k})]\Big\} + \notag \\
    & \frac{\gamma}{2}\sum_{j\in\mathcal{N}_i}\sum_{k=1}^K\Big[\|\frac{\bm{Z}_{i,k}\bm{Z}_{i,k}^T}{m_{i,k}}-\bm{T}_{i,j,k}\|_F^2+\|\frac{\bm{Z}_{j,k}\bm{Z}_{j,k}^T}{m_{j,k}}-\bm{T}_{i,j,k}\|_F^2\Big]\Large\},\notag 
\end{align}
where $\gamma$ is a penalty parameter. To solve the problem, we use the dual gradient ascent and Gauss-Seidel strategy \cite{matamoros2015distributed, hong2017linear, tian2023distributed2}.
Then, the update rules of the dual variables $\{\bm{U}_{i,j,k}, \bm{V}_{i,j,k}\}$, the primal variables $\bm{Z}_i$ and the intermediate variable $\bm{T}_{i,j,k}$ can be derived alternatively, for all $i\in\mathcal{V}, j\in\mathcal{N}_i$ and $1\le k\le K$, as follows:
\begin{subequations}
	\begin{equation}
		\bm{U}_{i,j,k}^{(t)}=\bm{U}_{i,j,k}^{(t-1)}+\rho[\frac{\bm{Z}_{i,k}^{(t-1)}{\bm{Z}_{i,k}^{(t-1)}}^T}{m_{i,k}}-\bm{T}_{i,j,k}^{(t-1)})],  \label{R1_1a}
	\end{equation}
	\begin{equation}
		\bm{V}_{i,j,k}^{(t)}=\bm{V}_{i,j,k}^{(t-1)}+\rho[\frac{\bm{Z}_{j,k}^{(t-1)}{\bm{Z}_{j,k}^{(t-1)}}^T}{m_{j,k}}-\bm{T}_{i,j,k}^{(t-1)})], \label{R1_1b}	\end{equation}
	\begin{equation}
			\bm{Z}^{(t)} = \mathop{\arg\min}\mathcal{L}(\bm{U}^{(t)}, \bm{V}^{(t)},\bm{Z},\bm{T}^{(t-1)}), \label{R1_1c}
	\end{equation}
        \begin{equation}
        \begin{aligned}
			&\bm{T}_{i,j,k}^{(t)}=\arg\min\Big{\{}-\text{tr}[{(\bm{U}_{i,j,k}^{(t)}+\bm{V}_{i,j,k}^{(t)})}^T\bm{T}_{i,j,k}] + \\
            & + \frac{\gamma}{2}\Big[\|\frac{\bm{Z}_{i,k}^{(t)}{\bm{Z}_{i,k}^{(t)}}^T}{m_{i,k}}-\bm{T}_{i,j,k}\|_F^2+\|\frac{\bm{Z}_{j,k}^{(t)}{\bm{Z}_{j,k}^{(t)}}^T}{m_{j,k}}-\bm{T}_{i,j,k}\|_F^2\Big]\Big{\}}. \label{R1_1d} 
        \end{aligned}
        \end{equation}
\end{subequations}
Here, $\rho$ is the step size for the gradient ascent of the dual variables.
From (\ref{R1_1d}), the closed form solution of $\bm{T}_{i,j,k}^{(t)}$ is:
\begin{equation}
\bm{T}_{i,j,k}^{(t)}=\frac{1}{2\gamma}\big[\bm{U}_{i,j,k}^{(t)}+\bm{V}_{i,j,k}^{(t)}\big]+\frac{1}{2}\big[\frac{\bm{Z}_{i,k}^{(t)}{\bm{Z}_{i,k}^{(t)}}^T}{m_{i,k}}+\frac{\bm{Z}_{j,k}^{(t)}{\bm{Z}_{j,k}^{(t)}}^T}{m_{j,k}}\big]. 
\label{R1_2}
\end{equation}
Under the initialization $\bm{U}_{i,j,k}^{(0)}+\bm{V}_{i,j,k}^{(0)}=\bm{0}$, the update rules (\ref{R1_1a}) and (\ref{R1_1b}) and the closed form of $\bm{T}_{i,j,k}^{(t)}$, it can be easily deduced that 
\begin{align}
\bm{U}_{i,j,k}^{(t)}+\bm{V}_{i,j,k}^{(t)}=\bm{0}, \quad \forall t\ge 1,    
\end{align}
which means the updating steps (\ref{R1_1a}) and (\ref{R1_1b}) can be simplified into one step (\ref{R1_1a}).
Then $\bm{T}_{i,j,k}^{(t)}$ in (\ref{R1_2}) can be readily expressed by: 
\begin{equation}
\label{R1_T}
\bm{T}_{i,j,k}^{(t)}=\frac{1}{2}\big[\frac{\bm{Z}_{i,k}^{(t)}{\bm{Z}_{i,k}^{(t)}}^T}{m_{i,k}}+\frac{\bm{Z}_{j,k}^{(t)}{\bm{Z}_{j,k}^{(t)}}^T}{m_{j,k}}\big].
\end{equation}
From (\ref{R1_T}), it can be observed $\bm{T}_{i,j,k}^{(t)} = \bm{T}_{j,i,k}^{(t)}$
We additionally initialized $\bm{U}_{i,j,k}^{(0)}+\bm{U}_{j,i,k}^{(0)}=\bm{0}$. Then according to (\ref{R1_1a}) and (\ref{R1_T}), we could have $\bm{U}_{i,j,k}^{(t)}+\bm{U}_{j,i,k}^{(t)}=\bm{0}$ for all iterations $t$. And we have $\bm{U}_{i,j,k}^{(t)} = \bm{V}_{j,i,k}^{(t)}$.
We define 
$$\bm{Y}_{i,j,k}\triangleq(\bm{U}_{i,j,k}+\bm{V}_{j,i,k})=2\bm{U}_{i,j,k},$$ and initialize $\bm{U}_{i,j,k}^{(0)}=\bm{V}_{j,i,k}^{(0)}=\bm{0}$.
Then, $\bm{Y}_{i,j,k}$ is initialized as $\bm{Y}_{i,j,k}^{(0)}\triangleq(\bm{U}_{i,j,k}^{(0)}+\bm{V}_{j,i,k}^{(0)})=\bm{0}$.
Considering the closed form of $\bm{T}_{i,j,k}^{(t)}$ in (\ref{R1_T}), the update rules can be simplified into the two steps in \eqref{R1_2a} and \eqref{R1_2b} as in Proposition \ref{p1}.

\subsection{Derivation of Proposition \ref{p2}}
The global augmented Lagrangian function can then be formulated as follows:
\begin{align}
    \label{lagran2_1}
    & \mathcal{L} = \sum_{s=1}^S\Big[R^c_s(\bm{Z}^s)-\frac{m^s}{2m}\log\det(\bm{I}+\frac{d}{m^s\epsilon^2}\sum_{i\in\tilde{\mathcal{V}}^s}\frac{1}{S_i}\bm{Z}_{i}\bm{Z}_{i}^T)\Big] \notag\\
    & + \sum_{i\in\tilde{\mathcal{V}}}\sum_{j\in\tilde{\mathcal{V}}_k/\{i\}}\sum_{k\in\mathcal{K}_i}\Big\{\text{tr}[\bm{U}_{i,j,k}^T(\frac{\bm{Z}_{i,k}\bm{Z}_{i,k}^T}{m_{i,k}}-\bm{T}_{i,j,k})] + \notag \\
    &\quad\text{tr}[\bm{V}_{i,j,k}^T(\frac{\bm{Z}_{j,k}\bm{Z}_{j,k}^T}{m_{j,k}}-\bm{T}_{i,j,k})]  + \\
    &\quad\frac{\gamma}{2}\Big[\|\frac{\bm{Z}_{i,k}\bm{Z}_{i,k}^T}{m_{i,k}}-\bm{T}_{i,j,k}\|_F^2+
    \|\frac{\bm{Z}_{j,k}\bm{Z}_{j,k}^T}{m_{j,k}}-\bm{T}_{i,j,k}\|_F^2\Big]\Big\}.\notag
\end{align}
Similar to the definition in Section \ref{Sec3}, we define $\bm{Y}_{i,j,k}\triangleq(\bm{U}_{i,j,k}+\bm{V}_{j,i,k})=2\bm{U}_{i,j,k}$, and $\bm{U}_{i,j,k}^{(0)}=\bm{V}_{j,i,k}^{(0)}=\bm{0}$.
Then, $\bm{Y}_{i,j,k}$ is initialized as $\bm{Y}_{i,j,k}^{(0)}=\bm{0}$.
Following similar derivations in Section \ref{Sec3}, we could get the global updating rules for all $i\in\tilde{\mathcal{V}}$ and $k=1,\dots, K$ as follows:
\begin{subequations}
	\begin{equation}
		\bm{Y}_{i,j,k}^{(t)}=\bm{Y}_{i,j,k}^{(t-1)}+\rho[\frac{\bm{Z}_{i,k}^{(t-1)}{\bm{Z}_{i,k}^{(t-1)}}^T}{m_{i,k}}-\frac{\bm{Z}_{j,k}^{(t-1)}{\bm{Z}_{j,k}^{(t-1)}}^T}{m_{j,k}}],  \label{R2_1a_ap}
	\end{equation}
	\begin{equation}
		\bm{Z}^{(t)} = \mathop{\arg\min}\mathcal{L}(\bm{Y}^{(t)},\bm{Z}). \label{R2_1b_ap}	
        \end{equation}
\end{subequations}
%where \begin{align} \label{lagran2_2} & \mathcal{L}(\bm{Y}^{(t)},\bm{Z}) = \sum_{s=1}^S\Big[-\frac{m^s}{2m}\log\det(\bm{I}+\frac{d}{m^s\epsilon^2}\sum_{i\in\tilde{\mathcal{V}}^s}\frac{1}{S_i}\bm{Z}_{i}\bm{Z}_{i}^T)\notag\\ &+ R_s^c(\bm{Z}^s)\Big]  +\sum_{i\in\tilde{\mathcal{V}}}\sum_{j\in\tilde{\mathcal{V}}_k/\{i\}}\sum_{k\in\mathcal{K}_i}\Big\{\text{tr}[{\bm{Y}_{i,j,k}^{(t)}}^T(\frac{\bm{Z}_{i,k}\bm{Z}_{i,k}^T}{m_{i,k}}-\\  &\frac{\bm{Z}_{j,k}^{(t-1)}{\bm{Z}_{j,k}^{(t-1)}}^T)}{m_{j,k}}]+\gamma\big\|\frac{\bm{Z}_{i,k}\bm{Z}_{i,k}^T}{m_{i,k}}-\frac{\frac{\bm{Z}_{i,k}^{(t-1)}{\bm{Z}_{i,k}^{(t-1)}}^T}{m_{i,k}}+\frac{\bm{Z}_{j,k}^{(t-1)}{\bm{Z}_{j,k}^{(t-1)}}^T}{m_{j,k}}}{2}\big\|_F^2\Big\}.\notag \end{align}

The updating of $\bm{Y}_{i,k}$ can be implemented by each node in parallel. 
However, $\bm{Z}_i$ is coupled in the Lagrangian function (\ref{R2_1b_ap}).
Following we will derive how to solve the problem (\ref{R2_1b_ap}) among distributed nodes.
Note that
$
\sum_{i\in\tilde{\mathcal{V}}}\sum_{k\in\mathcal{K}_i} f(\bm{Z}_{i,k})= \sum_{s=1}^S\sum_{i\in\tilde{\mathcal{V}}^{s}}\sum_{k\in\mathcal{K}_i}  f(\bm{Z}_{i,k}),
$
both of which sum up the terms in all (virtual) nodes and all of their classes. 
Given this equivalence, problem in (\ref{R2_1b_ap}) can be reformulated as:
\begin{align}
    \label{lagran2_3}
    & \mathcal{L} = \sum_{s=1}^S\Big\{ -\frac{m^s}{2m}\log\det(\bm{I}+\frac{d}{m^s\epsilon^2}\sum_{i\in\tilde{\mathcal{V}}^s}\frac{1}{S_i}\bm{Z}_{i}\bm{Z}_{i}^T)\\
    &+ R_s^c(\bm{Z}^s) + \sum_{i\in\tilde{\mathcal{V}}^{s}}\sum_{j\in\tilde{\mathcal{V}}_k/\{i\}}\sum_{k\in\mathcal{K}_i}\Big[  \text{tr}[{\bm{Y}_{i,j,k}^{(t)}}^T(\frac{\bm{Z}_{i,k}\bm{Z}_{i,k}^T}{m_{i,k}}\notag\\
    &-\frac{\bm{Z}_{j,k}^{(t-1)}{\bm{Z}_{j,k}^{(t-1)}}^T}{m_{j,k}})]\notag\\
    & + \gamma\big\|\frac{\bm{Z}_{i,k}\bm{Z}_{i,k}^T}{m_{i,k}}-\frac{\frac{\bm{Z}_{i,k}^{(t-1)}{\bm{Z}_{i,k}^{(t-1)}}^T}{m_{i,k}}+\frac{\bm{Z}_{j,k}^{(t-1)}{\bm{Z}_{j,k}^{(t-1)}}^T}{m_{j,k}}}{2}\big\|_F^2\Big]\Big\}. \notag
\end{align}

We define $\mathcal{L} = \sum_{s=1}^S \mathcal{L}^s$, then the variables among different clusters can be decomposed in (\ref{lagran2_3}), where different clusters can be updated in parallel based on the clustered augmented Lagrangian function.
\begin{equation}
    (\bm{Z}^s)^{(t)} = \mathop{\arg\min}\mathcal{L}^s(\bm{Y}^{(t)},\bm{Z}^s). \label{R2_2b}	
\end{equation}

Within each cluster, the variables in (\ref{R2_2b}) can be decoupled into the distributed nodes using the blocked coordinate descent procedure (BCD) \cite{beck2013convergence, hong2017iteration}. 
Specifically, consider node $i$ in the $s$-th cluster, the concatenation of $\bm{Z}_{i, k}$ for all $k\in\mathcal{K}_i$ is denoted by $\bm{Z}_{i}$.
Then based on the block coordinate descent procedure, the solution in node $i$ can be obtained through iteratively updating the following step for all $i\in\mathcal{V}^s$.
\begin{align}
(\bm{Z}_{i})^{(t)} = \mathop{\arg\min}\mathcal{L}^s(&\bm{Y}^{(t)},(\bm{Z}_{i_1})^{(t)},\cdots,(\bm{Z}_{i}),\cdots,\notag\\
&(\bm{Z}_{N^s})^{(t-1)}), \label{R2_3_ap}	
\end{align}
where
\begin{align}
&\mathcal{L}^s=-\frac{m^s}{2m}\log\det(\bm{I}+\frac{d}{m^s\epsilon^2}\sum_{i\in\tilde{\mathcal{V}}^s}\frac{1}{S_i}\bm{Z}_{i}\bm{Z}_{i}^T)+ R_s^c(\bm{Z}^s)\notag\\
& + \sum_{i\in\tilde{\mathcal{V}}^{s}}\sum_{j\in\tilde{\mathcal{V}}_k/\{i\}}\sum_{k\in\mathcal{K}_i}  \Big\{ \text{tr}[{\bm{Y}_{i,j,k}^{(t)}}^T(\frac{\bm{Z}_{i,k}\bm{Z}_{i,k}^T}{m_{i,k}}-\frac{\bm{Z}_{j,k}^{(t-1)}{\bm{Z}_{j,k}^{(t-1)}}^T}{m_{j,k}})] \notag\\
& + \gamma\big\|\frac{\bm{Z}_{i,k}\bm{Z}_{i,k}^T}{m_{i,k}}-\frac{\frac{\bm{Z}_{i,k}^{(t-1)}{\bm{Z}_{i,k}^{(t-1)}}^T}{m_{i,k}}+\frac{\bm{Z}_{j,k}^{(t-1)}{\bm{Z}_{j,k}^{(t-1)}}^T}{m_{j,k}}}{2}\big\|_F^2\Big\}. \label{R2_4_ap}
\end{align}
}

\subsection{Proof of Theorem \ref{t1}}
\begin{proof}
    We prove Theorem \ref{t1} by proving the equivalence of the reformulated optimization problem and the original problem in MCR$^2$ \cite{yu2020learning}.
    Under the constraints: 
    $$
    \frac{\bm{Z}_{i,k}\bm{Z}_{i,k}^T}{m_{i,k}}=\frac{\bm{Z}_{j,k}\bm{Z}_{j,k}^T}{m_{j,k}}, \forall i\in\mathcal{V}, j\in\mathcal{N}_i, 1\le k\le K,
    $$
    in the connected communication network, this equality holds between any two nodes in the network topology $\forall i\in\mathcal{V}, j\in\mathcal{V}$.
    Moreover, according to the definition of $\bm{Z}_{i}$ and based on Assumption \ref{as1}, we further have
    $$\bm{Z}_{i}\bm{Z}_{i}^T=\sum_{k=1}^Km_{i,k}\frac{\bm{Z}_{j,k}\bm{Z}_{j,k}^T}{m_{j,k}}=\sum_{k=1}^Km_{i}\frac{\bm{Z}_{j,k}\bm{Z}_{j,k}^T}{m_{j}}=m_{i}\frac{\bm{Z}_{j}\bm{Z}_{j}^T}{m_{j}}.
    $$ 
    Then the equality in (\ref{eqR1}) holds. Thus, the reformulated global optimization problem (\ref{opt1}) is equivalent to the original one in (\ref{MCR2}) and the optimal solution of the decoupled sub-problems in the nodes is equivalent to the solution of (\ref{MCR2}). 
    Then according to Theorem 2.1 in \cite{yu2020learning}, we could have that $(\bm{Z}_{k1}^*)^T\bm{Z}_{k2}^*=\bm{0}$ for different classes $1\le k1<k2\le K$, based on which we obtain the property 1) and 2). Moreover, the optimal solutions have the diversity property 3) within a class. 
\end{proof}

\subsection{Proof of Theorem \ref{t2}}
\begin{proof}
    According to the updating rule in (\ref{R1_2c}), in the $t$-th iteration, we define the local loss function of node $i$ as $\mathcal{L}_i(\bm{Y}_i^{(t)},\bm{\theta}_i)=\mathcal{L}'(\bm{Y}_{i}^{(t)},\bm{Z}_i)$, with $\bm{Z}_i=f_i(\bm{\theta}_i, \bm{X}_i)$. 
    The value of the global loss function in the $t$-th iteration is defined as 
    $$
    \mathcal{L}(\bm{Y}^{(t)},\theta)=\sum_{i\in\mathcal{V}}\mathcal{L}_i(\bm{Y}_i^{(t)},\bm{\theta}_i).
    $$
    For each local node, without loss of generality, we have
    \begin{align}
    \label{app_eq1}
    &\mathbb{E}[\mathcal{L}_i(\bm{Y}_i^{(t+1)},\bm{\theta}_i^{(t+1)})-\mathcal{L}_i(\bm{Y}_i^{(t)},\bm{\theta}_i^{(t)})]\\
    & = \mathbb{E}[\mathcal{L}_i(\bm{Y}_i^{(t+1)},\bm{\theta}_i^{(t+1)})-\mathcal{L}_i(\bm{Y}_i^{(t+1)},\bm{\theta}_i^{(t)})]+ \notag\\ 
    &\quad \mathbb{E}[\mathcal{L}_i(\bm{Y}_i^{(t+1)},\bm{\theta}_i^{(t)})-\mathcal{L}_i(\bm{Y}_i^{(t)},\bm{\theta}_i^{(t)})], \notag
    \end{align}
    where the expectations are taken over the randomness in stochastic gradients. We analyze the two terms respectively on the right side of (\ref{app_eq1}). 
    
    According to our updating rule (\ref{R1_2c}), the first term on the right side of (\ref{app_eq1}) can be seen as the reduction of the local loss function in the $(t+1)$-th iteration after several steps of SGD. 
    Under Assumption \ref{as2}, we have 
    $$\|\nabla L_i(\bm{Y}_i^{(0)},\bm{\theta}_i)-\nabla L_i(\bm{Y}_i^{(0)},\bm{\theta}_i')\|\le L_0\| \bm{\theta}_i - \bm{\theta}_i'\|.$$
    In the next iteration, it can be derived that 
    \begin{align}
        \label{app_eq6}
        &\|\nabla L_i(\bm{Y}_i^{(1)},\bm{\theta}_i)-\nabla L_i(\bm{Y}_i^{(1)},\bm{\theta}_i')\|  \\
        &\le 2\|\nabla L_i(\bm{Y}_i^{(0)},\bm{\theta}_i)-\nabla L_i(\bm{Y}_i^{(0)},\bm{\theta}_i')\| + 2\|\nabla L_i(\bm{Y}_i^{(1)},\bm{\theta}_i)-\notag\\
        &\quad \nabla L_i(\bm{Y}_i^{(0)},\bm{\theta}_i)+\nabla L_i(\bm{Y}_i^{(0)},\bm{\theta}_i')-\nabla L_i(\bm{Y}_i^{(1)},\bm{\theta}_i')\| \notag 
    \end{align}

    The second term on the right side of (\ref{app_eq5}) can be bounded with the calculated gradients on $\theta$ and the normalization property of $\bm{Z}$, which we omit here due to limited space.
    Following this relationship between the present and previous iterations, together with the smoothness of the neural network function $f_i(\bm{\theta}_i)$, it is easy to prove for any $t$-th iteration, $\mathcal{L}_i(\bm{Y}_i^{(t)},\bm{\theta}_i)$ is $L_t$-smooth, with a limited scalar $L_t$.
    
    Then under Assumption \ref{as2} and \ref{as3}, based on the $L_t$-smoothness of $\mathcal{L}_i(\bm{Y}_i^{(t)},\bm{\theta}_i)$, we could obtain the following results according to the properties of $T'$ steps of local SGD. Due to space limitation, we refer readers to Lemma 3 in \cite{tian2023distributed} for details.
    \begin{align}
    \label{app_eq2}
        & \mathbb{E}[\mathcal{L}_i(\bm{Y}_i^{(t+1)},\bm{\theta}_i^{(t+1)})-\mathcal{L}_i(\bm{Y}_i^{(t+1)},\bm{\theta}_i^{(t)})]\le \\
        & -cT'\eta\|\nabla \mathcal{L}_i(\bm{Y}_i^{(t+1)},\bm{\theta}_i^{(t)})\|_2^2 + \frac{\eta^2T'L_{t+1}}{2}(1+4\eta T'L_{t+1})\kappa^2. \notag
    \end{align}
    For the second term on the right side of  (\ref{app_eq1}), based on the definition of $\mathcal{L}_i$, we have 
    \begin{align}
    \label{app_eq3}
        & \mathbb{E}[\mathcal{L}_i(\bm{Y}_i^{(t+1)},\bm{\theta}_i^{(t)})-\mathcal{L}_i(\bm{Y}_i^{(t)},\bm{\theta}_i^{(t)})] \le \\
        & \sum_{j\in\mathcal{N}_i}\sum_{k=1}^K\text{tr}[{(\bm{Y}_{i,j,k}^{(t+1)}-\bm{Y}_{i,j,k}^{(t)})}^T(\frac{{\bm{Z}_{i,k}^{(t)}}{\bm{Z}_{i,k}^{(t)}}^T}{m_{i,k}}-\frac{\bm{Z}_{j,k}^{(t)}{\bm{Z}_{j,k}^{(t)}}^T}{m_{j,k}})]\notag \\
        &+\frac{\gamma}{4}\sum_{j\in\mathcal{N}_i}\sum_{k=1}^K\big\|\frac{\bm{Z}_{i,k}^{(t)}{\bm{Z}_{i,k}^{(t)}}^T}{m_{i,k}}-\frac{\bm{Z}_{j,k}^{(t)}{\bm{Z}_{j,k}^{(t)}}^T}{m_{j,k}}\big\|_F^2 \notag \\
        & \le (2\rho + \frac{\gamma}{2})\sum_{j\in\mathcal{N}_i}\sum_{k=1}^K[\big\|\frac{\bm{Z}_{i,k}^{(t)}{\bm{Z}_{i,k}^{(t)}}^T}{m_{i,k}}\big\|_F^2+\big\|\frac{\bm{Z}_{j,k}^{(t)}{\bm{Z}_{j,k}^{(t)}}^T}{m_{j,k}}\big\|_F^2] \notag \\
        & \le (4\rho + \gamma)K|\mathcal{N}_i|, \notag
    \end{align}
    where the first inequality follows from the updating rule in (\ref{R1_2a}) and the last inequality follows from the first constraint in the optimization problem, i.e., 
    $$\|\bm{Z}_{i,k}\|_F^2 = m_{i,k}, \forall i\in\mathcal{V}, 1\le k\le K$$
    and it is satisfied in each iteration/step with the normalization function on the output layer.

    Take sum of (\ref{app_eq2}) and (\ref{app_eq3}), we could obtain that
    \begin{align}
        \label{app_eq4}
        &\mathbb{E}[\mathcal{L}_i(\bm{Y}_i^{(t+1)},\bm{\theta}_i^{(t+1)})-\mathcal{L}_i(\bm{Y}_i^{(t)},\bm{\theta}_i^{(t)})]\le (4\rho + \gamma)K|\mathcal{N}_i| \notag \\
        & -cT'\eta\|\nabla \mathcal{L}_i(\bm{Y}_i^{(t+1)},\bm{\theta}_i^{(t)})\|_2^2 + \frac{\eta^2T'L_{t+1}}{2}(1+4\eta T'L_{t+1})\kappa^2 \notag \\
        & \le -cT'\eta\|\nabla \mathcal{L}_i(\bm{Y}_i^{(t+1)},\bm{\theta}_i^{(t)})\|_2^2 +\eta^2E,
    \end{align}
    where $E = \frac{T'L}{2}(1+4\eta T'L)\kappa^2+c'K|\mathcal{N}_i|$, $L=max_{t\in[T]}L_t$, and the last inequality comes from $(4\rho + \gamma)\le c'\eta^2$.
    Take sum of $t=0,\dots, T$ over all node $i$, and rearrange the terms, we could derive that
    \begin{align}
        \label{app_eq5}
        &\min_{t\in[T]} \mathbb{E}[\|\nabla \mathcal{L}(\bm{Y}^{(t+1)},\theta^{(t)})\|_2^2]\le \frac{\mathcal{L}_0-\mathcal{L}_*}{cTT'\eta}+\frac{\eta NE}{c}.
    \end{align}
    If we choose $(4\rho+\gamma)\le c'\eta^2$ and properly choose $\eta=\frac{c''}{\sqrt{TT'}}$ such that $\eta<\frac{1}{4T'L}$, then it is easy to prove that $L$ is a limited scalar. This completes the proof.
\end{proof}

%\small
%\bibliographystyle{IEEEtran}
%\bibliography{ref}

\end{document}